\lstdefinelanguage{PDDL}
{
  sensitive=false,    
  morecomment=[l]{;}, 
  alsoletter={:,-},   
  morekeywords={
    define,domain,problem,not,and,or,when,forall,exists,either,
    :domain,:requirements,:types,:objects,:constants,
    :predicates,:action,:parameters,:precondition,:effect,
    :fluents,:primary-effect,:side-effect,:init,:goal,
    :strips,:adl,:equality,:typing,:conditional-effects,
    :negative-preconditions,:disjunctive-preconditions,
    :existential-preconditions,:universal-preconditions,:quantified-preconditions,
    :functions,assign,increase,decrease,scale-up,scale-down,
    :metric,minimize,maximize,
    :durative-actions,:duration-inequalities,:continuous-effects,
    :durative-action,:duration,:condition,:probabilistic-effects
  }
}
\newcommand{\M}{\mathcal{M}}
\newcommand{\C}{\mathcal{C}}
 \newcommand{\B}{\mathcal{B}}
\newcommand{\Bh}{\mathcal{B}_h}
\newcommand{\fml}[1]{\mathcal{#1}}
\newcommand{\Ba}{\mathcal{B}_\alpha}
\newcommand{\KB}{\mathrm{KB}}
\newcommand{\EP}{\mathrm{EP}}
\newcommand{\KBa}{\mathrm{KB}_\alpha}
\newcommand{\I}{\mathbb{I}}
\newcommand{\R}{\mathbb{R}}
\newcommand{\pe}{\tilde{\epsilon}}
\newcommand{\pE}{\tilde{\fml{E}}}
\newcommand{\e}{\epsilon}
\newcommand{\beitemize}{\begin{list}{$\bullet$}{\topsep=1.5pt \parsep=0pt \itemsep=1pt \leftmargin=1em }} 
\newcommand{\enitemize}{\end{list}}
\newcommand{\beenumerate}{\hspace{-0.5in} \begin{enumerate}\topsep=1pt \parsep=0pt \itemsep=-3pt} \newcommand{\enenumerate}{\end{enumerate}}
\newcommand{\belist}{\begin{list}{$\bullet$}{\topsep=1.5pt \parsep=0.5pt \itemsep=1pt \leftmargin=2.25em \labelwidth=1.0em \labelsep=0.5em \partopsep=1.5pt}} 
\newcommand{\enlist}{\end{list}}
\newtheorem{theorem}{{\bf Theorem}}
\newtheorem{lemma}{{\bf Lemma}}
\newtheorem{corollary}{{\bf Corollary}}
\newtheorem{definition}{{\bf Definition}}
\newtheorem{example}{{\bf Example}}
\newtheorem{proposition}{{\bf Proposition}}
\newcommand{\memoside}[1]{\ifthenelse{\boolean{includeMemo}}{\todo[caption={},color=green!20!]{{\footnotesize #1}}}}
\newcommand{\memo}[1]{\ifthenelse{\boolean{includeMemo}}{\todo[inline,caption={},color=green!20!]{#1}}}
\newcommand{\memob}[1]{\ifthenelse{\boolean{includeMemo}}{\todo[inline,caption={},color=blue!20!]{#1}}}
\newcommand{\xhdr}[1]{\vspace{5pt}\noindent\textbf{#1 }}
\newcommand{\ignore}[1]{}
\newcommand{\squishlist}{
\begin{list}{{{\small{$\bullet$}}}}
{\setlength{\itemsep}{3pt}      
\setlength{\parsep}{3pt}
\setlength{\topsep}{3pt}       
\setlength{\partopsep}{3pt}
\setlength{\leftmargin}{1em} 
\setlength{\labelwidth}{1em}
\setlength{\labelsep}{0.5em} } }
\newcommand{\squishend}{  \end{list}}
\newcommand{\squishenum}{
\begin{list}{$\bullet$}{ 
    \setlength{\itemsep}{1pt}
    \setlength{\parsep}{0pt}
    \setlength{\topsep}{1.5pt}
    \setlength{\partopsep}{0pt}
    \setlength{\leftmargin}{2em}
    \setlength{\labelwidth}{1.5em}
    \setlength{\labelsep}{0.5em} } }
\newcommand{\citet}[1]{\citeauthor{#1}~\citeyear{#1}}
\newcommand{\citep}{\cite}
\renewcommand{\underline}{\uline}
\begin{document}

\title{On Generating Monolithic and Model Reconciling Explanations in Probabilistic Scenarios\thanks{Paper has been accepted at JAIR.}}


\author{\name Stylianos Loukas Vasileiou \email stelios@nmsu.edu \\ 
       \addr New Mexico State University \\
       Las Cruces, United States 
       \AND
       \name William Yeoh \email wyeoh@wustl.edu \\
       \addr Washington University in St.~Louis \\
       Saint Louis, United States
       \AND
       \name Alessandro Previti \email alessandro.previti@ericsson.com \\
       \addr Ericsson Research \\ 
       Stockholm, Sweden
       \AND
       \name Tran Cao Son \email stran@nmsu.edu \\
       \addr New Mexico State University \\ 
       Las Cruces, United States
       }


\maketitle

\begin{abstract}

Explanation generation frameworks aim to make AI systems' decisions transparent and understandable to human users. However, generating explanations in uncertain environments characterized by incomplete information and probabilistic models remains a significant challenge. In this paper, we propose a novel framework for generating \textit{probabilistic monolithic explanations} and \textit{model reconciling explanations}. Monolithic explanations provide self-contained reasons for an explanandum without considering the agent receiving the explanation, while model reconciling explanations account for the knowledge of the agent receiving the explanation. For monolithic explanations, our approach integrates uncertainty by utilizing probabilistic logic to increase the probability of the explanandum. For model reconciling explanations, we propose a framework that extends the logic-based variant of the model reconciliation problem to account for probabilistic human models, where the goal is to find explanations that increase the probability of the explanandum while minimizing conflicts between the explanation and the probabilistic human model. We introduce \textit{explanatory gain} and \textit{explanatory power} as quantitative metrics to assess the quality of these explanations. Further, we present  algorithms that exploit the duality between minimal correction sets and minimal unsatisfiable sets to efficiently compute both types of explanations in probabilistic contexts. Extensive experimental evaluations on various benchmarks demonstrate the effectiveness and scalability of our approach in generating explanations under uncertainty.

\end{abstract}

\section{Introduction}
\label{sec:intro}

The rapid integration of artificial intelligence (AI) into critical and everyday applications has magnified the importance of not just achieving high-performance metrics but also ensuring that AI decisions are transparent, interpretable, and, above all, trustworthy. This imperative has given rise to the field of explainable AI (XAI), which seeks to make AI systems' workings comprehensible to their human users \cite{gunning2019xai}. XAI endeavors to demystify the often opaque processes of AI, providing insights into the reasoning behind decisions and actions. This transparency is not just a matter of ethical AI design but a practical necessity for enhancing user trust, facilitating user decision-making, and ensuring the accountability of AI systems.

In the domain of machine learning (ML), significant strides have been made towards enhancing the explainability of algorithms. Researchers have sought to categorize ML algorithms according to various dimensions of explainability \cite{Guidotti:2018:SME:3271482.3236009}, improve the transparency of existing algorithms \cite{NIPS2018_8003,petkovic2018improving}, and even propose new algorithms that balance accuracy with increased explainability \cite{dong2017improving,gilpin2018explaining}. These efforts underscore a growing recognition within the ML community of the critical role that explainability plays in the deployment of AI systems \cite{belle2021principles}.

Parallel to advancements in ML, the automated planning community has adopted a focused approach to generating explanations for plans produced by AI (planning) agents, which led to the inception of explainable AI planning (XAIP) \cite{Fox2017ExplainableP}. Predominantly, XAIP research focuses on the explanation generation problem, which involves identifying explanations for plans that, when conveyed to human users, help them understand and accept the agent's proposed actions \cite{kambhampati1990classification,langley2016explainable}. A noteworthy direction within this space is the \textit{model reconciliation problem} (MRP) \cite{chakraborti2017plan,sreedharan2018handling,nguyen2020explainable,vas21,vasLMRP}, aimed at aligning a user's model with that of an AI agent through the provision of explanations, especially when discrepancies in their understanding of a planning problem lead to confusion or misinterpretation of the agent's decisions. We will refer to these explanations as \textit{model reconciling explanations}. In contrast, we refer to explanations that do not account for the user's model as \textit{monolithic explanations}.

Although MRP tackles essential facets of explainability, it often operates under the presumption that the AI agent has a deterministic grasp of the human model---a scenario that may not always align with the complexities of real-world interactions characterized by uncertainty about human knowledge. Indeed, this gap highlights a general challenge in explanation generation from AI agents: Their operation within realms of incomplete information and probabilistic decision-making models. Traditional explanation methods, which rely on deterministic knowledge, falter under these conditions, unable to represent the uncertain nature of the agent's knowledge adequately.

To bridge this gap in explainability, the first part of the paper presents a framework for explanation generation under uncertainty. In particular, we use (propositional) probabilistic logic as our underlying mechanism for modeling uncertainty, and introduce the notions of \textit{probabilistic monolithic explanation} and \textit{probabilistic model reconciling explanation}. For the former, given an agent's belief base $\Ba$ (a weighted knowledge base\footnote{A weighted knowledge base, essentially, induces a probability distribution over the variables of the agent's language \cite{richardson2006markov}.}), and an explanandum $\varphi$ (a formula to be explained), the goal is to find a \textit{probabilistic monolithic explanation} such that the probability of the explanandum being true is increased. Moreover, we extend our work on \textit{logic-based model reconciliation problems} (L-MRPs) \cite{son2021model,vas21,vasLMRP} to scenarios in which the human user's model is uncertain. Specifically, given a knowledge base $\KBa$ of an agent, an explanandum $\varphi$ entailed by $\KBa$, and a human belief base $\Bh$, the goal is to find a \textit{probabilistic model reconciling explanation} such that the explanandum's probability is increased while the probability of conflicts between the explanation and $\Bh$ is decreased. To measure the quality of such explanations, we define the concepts of \textit{explanatory gain} and \textit{explanatory power}, aimed at quantifying the effectiveness of these explanations with respect to the explanandum.

In the second part of the paper, we describe algorithms for computing both types of explanations, where we leverage the duality between \textit{minimal correction sets} (MCSes) and \textit{minimal unsatisfiable sets} (MUSes). These algorithms, adapted from our prior work \cite{vas21}, integrate a \textit{weighted maximum satisfiability} procedure for computing probabilities. Our experimental evaluation across various benchmarks highlights the practicality and applicability of our algorithms, and demonstrate their capabilities to generate probabilistic explanations within a propositional logic framework.

In summary, our contributions are the following:

\squishlist
    \item We propose a novel framework for generating probabilistic monolithic explanations and probabilistic model reconciling explanations. Central to our framework are the concepts of \textit{explanatory gain} and \textit{explanatory power}, metrics designed to quantitatively assess the quality and effectiveness of probabilistic explanations. 


    \item We describe algorithms for computing both types of probabilistic explanations, using the duality of MCSes and MUSes together with a weighted maximum satisfiability process. Through a series of benchmark evaluations, we demonstrate the efficacy of our proposed algorithms in generating explanations.

\squishend

The paper is structured as follows. In Section~\ref{sec:motivation}, we motivate the use and applicability of logic in explainability, and discuss possible sources of uncertainty in explanation generation. In Section~\ref{sec:back}, we provide the background knowledge needed, and in Section~\ref{sec:example} we describe a motivating application that serves as a running example. In Section~\ref{sec:pexpl-gen}, we present our explanation generation framework for monolithic and model reconciliation probabilistic explanations.~We present algorithms for computing explanations in Section~\ref{sec:comp}, and experimentally evaluate them on a set of benchmarks in Section~\ref{sec:exper}.~We then discuss related work in Section~\ref{sec:related}, address the assumptions, limitations and future extensions of our framework in Section~\ref{sec:discussion}, and finally conclude the paper in Section~\ref{sec:conclusion}.

\section{A Logic-based Perspective on Explainable Decision-Making}
\label{sec:motivation}

In this work, we take the following perspective: \textit{Logic-based frameworks can serve as an explanatory representational layer for AI systems, enabling the generation of rigorous and flexible explanations across diverse problem domains by capturing the system's decisions in a formal logical language that supports inference and reasoning.}

According to this perspective, formal logic provides a robust foundation for creating explanatory mechanisms by serving as an intermediate representational layer between AI systems and explanation generation processes. Some of the reasons that justify this perspective are as follows:

\squishlist
\item \textbf{Structured Semantics}: Logic has well-defined compositional semantic functions that compute the meaning of a compound as a function of its constituents' meanings. This composition is invertible, allowing us to trace back from conclusions to premises—essentially performing inference in reverse. This property is fundamental to explanation, as it enables us to identify precisely which components contributed to a particular decision or output.

\item \textbf{Expressivity \& Scrutability}: Logic can represent complex knowledge and reasoning processes in a form that can be traced and scrutinized. Depending on the formal language of choice, it allows for encoding rich, relational information about the world, including classes, hierarchies, causal relationships, and quantified statements. For instance, in planning domains, logical representations can express not only states and actions but also the causal links between them, making it possible to explain why certain actions were selected in a plan. Moreover, logic-based systems enable us to examine their internal properties, both through internal verification techniques and external dialogues. This scrutability is crucial for building trust in AI systems, as it allows for thorough validation of the reasoning process, ensuring that explanations are not just plausible but provably correct within the system's logical framework.

\item \textbf{Adaptability}: Logic-based systems can be extended with new knowledge and can integrate with various AI paradigms. This means that explanatory frameworks built on logic can evolve over time, incorporating new concepts, rules, or meta-level reasoning principles as needed. This adaptability is important for explanation systems that must operate across diverse domains or interact with different types of users.

\item \textbf{Uncertainty Representation}: While classical logic is deterministic, probabilistic extensions (which we use in this work) allow for representing and reasoning about uncertainty. This is important in real-world scenarios where AI systems must operate with incomplete information or where multiple interpretations of data are possible. Probabilistic logic enables explanations that convey not just what the system ``believes'', but also its degree of confidence in those beliefs.
\squishend

For more about logic and its suitability in explainability as well as the current AI landscape, please see some good arguments presented by \citet{belle2017logic}, \citet{mocanu2023knowledge}, and \citet{bellerelevance}.

\begin{figure}[t]
    \centering
    \includegraphics[width=.85\textwidth]{figures/explainability-layer.pdf}
    \caption[The logic-based explainability layer]{The Logic-based Explainability Layer. Given a decision $d$ from an AI model, a \textit{monolithic explanation} is generated with respect to the AI system's knowledge base ($\KBa$), while a \textit{model reconciling explanation} is generated with respect to the AI system's ($\KBa$) as well as the human user's knowledge base ($\KB_h$).}
    \label{fig:concept}
\end{figure}

Now, as illustrated in Figure~\ref{fig:concept}, our approach positions logic as an intermediate explanatory layer between AI systems and human users. The logic-based explainability layer functions as an abstraction mechanism that captures the essential reasoning behind an AI system's decisions in a formal logical representation, regardless of the underlying implementation details of the AI system itself. This layer serves two critical functions. First, it translates the internal decision processes of potentially opaque AI systems into a logical formalism that can be systematically analyzed. Second, it enables the generation of explanations through formal reasoning over this logical representation.

We will distinguish between two notions of explanation: \textit{monolithic explanations}, which are generated with respect to the AI system's knowledge base ($\KBa$) alone, focusing on providing a self-contained justification for the system's decision; and \textit{model reconciling explanations}, which are generated with respect to both the AI system's knowledge base ($\KBa$) and the human user's knowledge base ($\KB_h$), explicitly addressing the discrepancies between the two models that may lead to confusion or misunderstanding. The explainability layer thus provides a unified framework for generating different types of explanations while maintaining a clear separation between the underlying AI system and the explanation mechanism.

The effectiveness of the logic-based explainability framework depends crucially on the ability to encode the problem domains of interest in logical formalisms. In general, it can be used for problems that can be represented in a logical language for which satisfiability of sets is feasible. Fortunately, many problem domains admit logical encodings, such as planning problems \cite{kautz1996encoding,cashmore2012planning,cashmore2020planning}, scheduling problems~\cite{crawford1994experimental,ansotegui2011satisfiability,demirovic2019modeling}, argumentation problems~\cite{besnard2001logic,prakken2006formal,besnard2010mus}, as well as machine learning problems~\cite{shrotri2022constraint,marques2022delivering,izza2023computing}. Indeed, logic-based explanation generation approaches have been successfully employed in numerous applications~\cite{vas21,vasLMRP,vasileioua2023lasp,rago2023interactive,marques2022delivering}.

\subsection{The Source of Uncertainty in Explanations}

Traditional logic-based approaches have typically operated within deterministic settings. They assume that both the AI system and the human user have definite, certain knowledge about the domain. This assumption, however, fails to capture the reality of many real-world scenarios where uncertainty plays a crucial role. Succinctly,

\squishlist
\item \textit{Incomplete Information}: AI systems often operate with partial information about the environment, meaning they operate with probabilistic rather than deterministic knowledge.
\item \textit{Ambiguous Evidence}: In domains like medical diagnosis or law, the available evidence may support multiple hypotheses to varying degrees, requiring probabilistic reasoning.
\item \textit{Uncertain Human Models}: Human users may hold beliefs with varying degrees of confidence, and AI systems may be uncertain about what humans know or believe.
\item \textit{Probabilistic Effects}: In more realistic, dynamic environments, actions may have probabilistic outcomes, requiring explanations that account for these uncertainties.
\squishend

\xhdr{Motivating Example.}To ground our discussion, consider a service robot operating in an office building with some uncertain information about its environment.~The robot must navigate from its current location to deliver a package to a specific office, choosing among multiple possible routes.~In this scenario, uncertainty may arise due to several reasons, such as environmental uncertainty, e.g.,~the robot has limited information about the current state of different corridors (e.g.,~crowded or clear), outcome uncertainty, e.g.,~even with a chosen path, the robot may not be able to perfectly predict travel times due to potential obstructions or changing conditions, and uncertainty about the beliefs of humans, i.e.,~when explaining its decisions to building staff, the robot may not know exactly what they believe about current building conditions.

Now, assume that the robot chooses a longer route through corridor \texttt{B} instead of a shorter path through corridor \texttt{A}. When asked why it did not take the more direct route through corridor \texttt{A}, the robot could provide different (monolithic) explanations. For instance, ``Corridor \texttt{A} likely has high foot traffic at this time of day, which would impede movement and potentially require frequent stops'', or ``There is an ongoing maintenance operation in corridor \texttt{A} that creates an obstruction risk and would likely result in a significant detour mid-route.'' Both explanations may increase our understanding of why the robot avoided corridor \texttt{A}, but they focus on different aspects of the robot's belief state and might have different explanatory power.

For the model reconciliation case, consider a human supervisor who expected the robot to take corridor \texttt{A} because they believe the corridor should be clear at this time. This discrepancy arises because the supervisor and the robot have different beliefs about the current state of the building. Then, a model reconciling explanation would need to address these specific belief differences and help the supervisor understand why the robot's decision was reasonable given its information.

This example illustrates how uncertainty naturally arises in practical scenarios and how multiple potential explanations must be compared to identify those with the highest explanatory power.

\section{Background}
\label{sec:back}

In this section, we provide some background for propositional logic, the hitting set duality between minimal unsatisfiable and minimal correction sets, modeling uncertainty in propositional logic, and the model reconciliation problem.

\subsection{Propositional Logic}
\label{sec:proplog}

Let $\fml{L}$ be a propositional language built from a finite set of atomic variables $\fml{V} = \{a,b,c,\ldots \}$. A \textit{possible world} is a truth-value assignment to each variable $\omega: \fml{V} \mapsto \{T, F\}$, where $T$ and $F$ denote truth and falsity respectively. The set of all possible worlds of $\fml{L}$ is denoted by $\Omega$. The simplest formulae in $\fml{L}$ are atoms: Individual variables that may be true or false in a given possible world. More complex formulae are recursively constructed from atoms using the classical logical connectives. A \textit{model} of a formula is a possible world in which the formula is satisfied (i.e.,~evaluates to true). A knowledge base $\KB$ is a set of formulae. If there exists at least one possible world $\omega$ that satisfies all formulae in $\KB$, then $\KB$ is \textit{consistent}, otherwise we say that $\KB$ is \textit{inconsistent}. We use $\models$ to denote the classical entailment relation and say that a (consistent) $\KB$ entails a formula $\varphi$, expressed as $\KB \models \varphi$, if and only if every model of $\KB$ is also a model of $\varphi$, or equivalently, if $\KB \cup \{\lnot \varphi \}$ is inconsistent. Unless stated otherwise, it is assumed that all formulae are expressed in \textit{conjunctive normal form} (CNF).\footnote{A CNF formula is a conjunction of clauses, where each clause is a disjunction of literals. A literal is either an atom or its negation. This is not a restrictive requirement, since any propositional formula can be transformed into a CNF representation.}

Given a knowledge base $\KB$ and a formula $\varphi$, called the \textit{explanandum} such that $\KB \models \varphi$, we define a \textit{monolithic explanation} for $\varphi$ from $\KB$ as a minimal set of formulae that entails~$\varphi$:

\begin{definition}(Monolithic Explanation)
Let $\KB$ be a knowledge base and $\varphi$ an explanandum such that $\KB \models \varphi$. We say that $\epsilon \subseteq \KB$ is a monolithic explanation for $\varphi$ from $\KB$ if and only if: (i) $\epsilon \models \varphi$; and (ii) $\nexists \epsilon' \subset \epsilon$ such that $\epsilon' \models \varphi$.   
\label{def:expl}
\end{definition}

\begin{example}
\label{back:expl1}
    Consider the knowledge base $\KB = \{p, \lnot p \vee q, \lnot p \vee r \}$ built up from $\fml{V} = \{p,q,r\}$. Notice that $\KB \models q$. Then, $\epsilon = \{p, \lnot p \vee q \}$ is a monolithic explanation for $q$ from $\KB$.
\end{example}

\subsubsection{Duality of Minimal Unsatisfiable and Minimal Corrections Sets}
\label{mus:mcs}

\begin{definition}
[Minimal Unsatisfiable Set (MUS)]
	Given an inconsistent knowledge base $\KB$, a subset $\M \subseteq \KB$ is an MUS if $\M$ is inconsistent and $\forall \M' \subset \M$, $\M'$ is consistent. 
\end{definition}

\begin{definition}
[Minimal Correction Set (MCS)]
	Given an inconsistent  knowledge base $\KB$, a subset $\C \subseteq \KB$ is an MCS if $\KB \setminus \C$ is consistent and $\forall \C' \subset \C$, $\KB \setminus \C'$ is inconsistent.
\end{definition}

By definition, every inconsistent $\KB$ contains at least one MUS. 

\begin{definition}[Partial MUS]
	A set of formulae $\Phi$ is a partial MUS of an inconsistent knowledge base $\KB$ if there exists at least one MUS $\M \subseteq \KB$ such that $\Phi \subseteq \M$. 
\end{definition}

Partial MUSes in an inconsistent knowledge base $\KB$ appear when a subset of formulae is set as \emph{hard}, that is, formulae that must always be satisfied in a solution. Conversely, \emph{soft} formulae may not always be satisfied. Given a formula $\varphi$, we will write $\varphi^*$ with $* \in \{s, h\}$ to denote it as \emph{soft} and \emph{hard}, respectively. 

MUSes and MCSes are related by the concept of \emph{hitting set}:

\begin{definition}
[Hitting Set]
	Given a collection $\Gamma$ of sets from a universe $U$, a hitting set for $\Gamma$ is a set $H \subseteq U$ such that $\forall S \in \Gamma$, $H \cap S \neq \emptyset$. A hitting set $H$ is \textit{minimal} if $\nexists H'\subset H$ such that $H'$ is a hitting set.
\end{definition}

The relationship between MUSes and MCSes is discussed by \citet{kas-jar08} and \citet{liffiton-pmm16}, and it was first presented by \citet{rei87}, where MUSes and MCSes are referred to as (minimal) conflicts and diagnoses, respectively.

\begin{proposition}\label{prop:duality}
	A subset of an inconsistent knowledge base $\KB$ is an MUS (resp.~MCS) if and only if it is a minimal hitting set of the collection of all MCSes (resp.~MUSes) of $\KB$. 
\end{proposition}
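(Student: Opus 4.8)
The plan is to reduce this combinatorial statement to two purely logical ``bridge'' facts and then discharge the minimality conditions by routine inclusion-chasing. Since the claim is self-dual under the swap MUS $\leftrightarrow$ MCS, I would prove the two halves in parallel, each supported by its own bridge lemma, and rely throughout on the finiteness of $\KB$ (which guarantees that every inconsistent set contains an MUS and every consistent set extends to a maximal consistent subset).

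First I would establish the bridge for the MUS half: for any $S \subseteq \KB$, the set $S$ is inconsistent if and only if $S$ intersects every MCS of $\KB$, i.e.\ $S$ is a hitting set of the collection of all MCSes. For the forward direction, if $S$ missed some MCS $\C$, then $S \subseteq \KB \setminus \C$, which is consistent, forcing $S$ to be consistent---a contradiction. For the converse, if $S$ is consistent I would extend it to a maximal consistent subset $\KB \setminus \C$; then $\C$ is an MCS disjoint from $S$, so $S$ fails to be a hitting set. This step leans on the correspondence that the complement of a maximal consistent subset is exactly an MCS.

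Next I would establish the dual bridge for the MCS half: for any $\C \subseteq \KB$, the set $\KB \setminus \C$ is consistent if and only if $\C$ hits every MUS of $\KB$. This follows from the standard fact that a finite set is inconsistent precisely when it contains some MUS: $\KB \setminus \C$ is consistent iff it contains no MUS, iff no MUS is disjoint from $\C$, iff $\C$ meets every MUS. With both bridges in hand the minimality bookkeeping is immediate. On the MUS side, $\M$ is a minimal hitting set of all MCSes iff $\M$ is inconsistent (first bridge) and no proper subset $\M' \subsetneq \M$ is a hitting set, i.e.\ no proper subset is inconsistent---which is exactly Definition of an MUS. On the MCS side, $\C$ is a minimal hitting set of all MUSes iff $\KB \setminus \C$ is consistent (second bridge) while $\KB \setminus \C'$ is inconsistent for every $\C' \subsetneq \C$---exactly the definition of an MCS.

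I expect the main obstacle to be the converse direction of the first bridge: certifying that a consistent set genuinely extends to a \emph{maximal} consistent subset whose complement is a bona fide MCS, including verifying the minimality of that complement. This is precisely the point where finiteness of $\KB$ is indispensable and where the tight correspondence between maximal consistent subsets and complements of MCSes must be checked carefully; everything else is set-theoretic inclusion-chasing that the two bridges make mechanical.
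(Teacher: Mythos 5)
Your proof is correct and complete. Note that the paper offers no proof of this proposition at all---it is stated as a known result, with the argument deferred to Reiter (1987) and the later MUS/MCS duality literature it cites---so there is no in-paper proof to diverge from; your two bridge lemmas (a subset of $\KB$ is inconsistent iff it hits every MCS; $\KB \setminus \C$ is consistent iff $\C$ hits every MUS) together with the minimality bookkeeping are exactly the standard argument from that literature. The step you flag as delicate does go through routinely: if $M$ is a maximal consistent subset of $\KB$ containing your consistent $S$, then $\C = \KB \setminus M$ satisfies that $\KB \setminus \C = M$ is consistent, while for any $\C' \subsetneq \C$ the set $\KB \setminus \C'$ strictly contains $M$ and is therefore inconsistent by maximality of $M$, so $\C$ is a bona fide MCS disjoint from $S$.
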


It follows from the above proposition that a cardinality minimal MUS (resp.~MCS) is a minimal hitting set. \emph{Cardinality minimal MUS} are referred to as SMUS, whereas a \emph{cardinality minimal MCS} corresponds to the complement of a MaxSAT solution~\cite{li2009maxsat}. We may refer to a cardinality minimal set as a minimum or smallest set.

\begin{lemma}\label{lem:smus}
	Given a subset $\fml{H}$ of all the MCSes of knowledge base $\KB$, a hitting set is an SMUS if:
(1)~It is a minimal hitting set $h$ of $\fml{H}$, and 
(2)~The subformula induced by $h$ is inconsistent. 

\end{lemma}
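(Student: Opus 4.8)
The plan is to derive the result by squeezing the candidate set $h$ between a lower bound on its cardinality coming from the duality and an upper bound coming from condition~(2), and showing the two bounds coincide. Throughout I read ``minimal hitting set'' in condition~(1) as a \emph{smallest} (cardinality-minimal) hitting set, consistent with the preceding remark that a cardinality-minimal MUS corresponds to a minimum hitting set. This reading is essential: a merely subset-minimal hitting set of $\fml{H}$ could be strictly larger than a smallest MUS, and then $h$ need not be an SMUS even when its induced subformula is inconsistent, so the claim would fail. Let $\M_{\min}$ denote an arbitrary SMUS (a smallest MUS) of $\KB$.

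First I would establish the lower bound $|h| \le |\M_{\min}|$. The key observation is that every MUS must intersect every MCS: if an MUS $\M$ were disjoint from an MCS $\C$, then $\M \subseteq \KB \setminus \C$, which is consistent by the definition of an MCS, contradicting the inconsistency of $\M$ (this is also immediate from Proposition~\ref{prop:duality}, since an MUS is a hitting set of the collection of all MCSes). Because $\fml{H}$ consists only of genuine MCSes of $\KB$, the set $\M_{\min}$ hits every member of $\fml{H}$, i.e.\ $\M_{\min}$ is itself a hitting set of $\fml{H}$. Since $h$ is a cardinality-minimal hitting set of $\fml{H}$, this yields $|h| \le |\M_{\min}|$.

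Next, condition~(2) supplies the upper bound and lets me close the argument. As the subformula induced by $h$ is inconsistent, $h$ is an unsatisfiable subset of $\KB$, and since every inconsistent set contains at least one MUS, I may pick an MUS $\M \subseteq h$, giving $|\M| \le |h|$. Chaining the bounds produces $|\M| \le |h| \le |\M_{\min}| \le |\M|$, where the last inequality holds because $\M$ is an MUS while $\M_{\min}$ is a \emph{smallest} MUS. Hence all four quantities are equal; in particular $|\M| = |h|$ together with $\M \subseteq h$ forces $\M = h$, so $h$ is itself an MUS, and $|h| = |\M_{\min}|$ certifies that it is of minimum cardinality, i.e.\ $h$ is an SMUS, as claimed.

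The step I expect to require the most care is the lower bound $|h| \le |\M_{\min}|$, because this is where the partiality of $\fml{H}$ (a mere subset of all MCSes) enters. The bound is valid precisely because $\fml{H}$ contains only true MCSes, so hitting $\fml{H}$ is a \emph{necessary} condition for being an MUS, even though it need not be sufficient; this one-directional use of the duality is exactly what makes the underlying implicit-hitting-set scheme sound, and it explains why the independent unsatisfiability check of condition~(2) is indispensable to certify that $h$ is a genuine MUS rather than a spurious hitting set of an incomplete collection. The only remaining subtlety is notational discipline: one must keep ``smallest'' (cardinality-minimal) firmly distinct from ``minimal'' (subset-minimal), since the conclusion asserts the former.
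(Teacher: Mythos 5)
Your proof is correct, and it is essentially the standard argument: the paper itself gives no proof of this lemma, deferring instead to \citet{ignatiev-plm15}, and the proof there is exactly your squeeze --- every MUS hits every MCS, so a minimum-cardinality hitting set $h$ of the partial collection $\fml{H}$ satisfies $|h| \leq |\M_{\min}|$, while unsatisfiability of $h$ yields an MUS $\M \subseteq h$, forcing $\M = h$ to be an SMUS. Your observation that ``minimal'' must be read as \emph{cardinality}-minimal (the subset-minimal reading makes the lemma false, e.g.\ when $\fml{H}$ is the full set of MCSes and $h$ is a non-smallest MUS) is also the intended reading, matching the paper's use of $\texttt{minHS}$ to obtain minimum-size explanations.
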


\noindent See the work by \citet{ignatiev-plm15} for a proof.

Proposition \ref{prop:duality} and Lemma \ref{lem:smus} naturally extend to the case of partial MUS. Note that when some formulae are set as hard in an inconsistent knowledge base, the set of all MCSes is a subset of the soft formulae. In this case, every minimal hitting set on the set of all MCSes is a partial MUS.

Finally, MUSes and monolithic explanations are related by the following:

\begin{proposition}\label{prop:support-mus}
Given a knowledge base $\KB$, a consistent set of formulae $\epsilon \subseteq \KB$ is a monolithic explanation for $\varphi$ from $\KB$ (Definition~\ref{def:expl}) if and only if $\epsilon$ is a partial MUS of $\epsilon \cup \{ \lnot \varphi \}$.
\end{proposition}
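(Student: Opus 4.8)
The plan is to exploit the equivalence recalled in Section~\ref{sec:proplog}: a set $\Delta$ entails $\varphi$ if and only if $\Delta \cup \{\lnot\varphi\}$ is inconsistent. Writing $K = \epsilon \cup \{\lnot\varphi\}$, I would first observe that under the standing hypothesis that $\epsilon$ is consistent, the statement ``$\epsilon$ is a partial MUS of $K$'' collapses to the single condition ``$K$ is itself an MUS.'' Indeed, any MUS $\M$ witnessing $\epsilon \subseteq \M \subseteq K$ must equal either $\epsilon$ or $K$; the case $\M = \epsilon$ is impossible, since an MUS is inconsistent while $\epsilon$ is consistent. Consistency of $\epsilon$ also forces $\lnot\varphi \notin \epsilon$ (otherwise $\epsilon \models \varphi$ together with $\lnot\varphi \in \epsilon$ would make $\epsilon$ inconsistent), so that $\epsilon$ is a proper subset of $K$ and the witnessing MUS must be all of $K$. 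With this reduction in hand I would prove each direction by showing that $K$ is minimally inconsistent.

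For the forward direction, assume $\epsilon$ is a monolithic explanation. Condition (i), $\epsilon \models \varphi$, gives that $K$ is inconsistent. For minimality I would split the proper subsets $\M' \subset K$ into two families: those not containing $\lnot\varphi$, which are subsets of $\epsilon$ and hence consistent because $\epsilon$ is consistent; and those of the form $\epsilon' \cup \{\lnot\varphi\}$ with $\epsilon' \subset \epsilon$, which are consistent precisely because condition (ii) yields $\epsilon' \not\models \varphi$. Thus every proper subset of $K$ is consistent, so $K$ is an MUS, and since $\epsilon \subseteq K$ the set $\epsilon$ is a partial MUS of $K$.

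For the backward direction, assume $\epsilon$ is a partial MUS of $K$. By definition this presupposes $K$ is inconsistent, which re-reads as $\epsilon \models \varphi$, giving condition (i). It also supplies an MUS $\M$ with $\epsilon \subseteq \M \subseteq K$; by the collapse observation above, $\M = K$, so $K$ is an MUS. Its minimality then implies that every proper subset is consistent, and in particular $\epsilon' \cup \{\lnot\varphi\}$ is consistent for each $\epsilon' \subset \epsilon$, i.e.\ $\epsilon' \not\models \varphi$ --- exactly condition (ii).

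The individual verifications are routine; the one step that needs genuine care is the collapse argument, namely invoking the consistency of $\epsilon$ to rule out $\M = \epsilon$ and to guarantee $\lnot\varphi \notin \epsilon$. This is what aligns the minimality clause of the MUS with clause (ii) of the monolithic explanation, and without it the correspondence can break (an inconsistent $\epsilon$ could contain an MUS not involving $\lnot\varphi$). I would therefore state the consistency hypothesis explicitly at exactly this point rather than treating it as incidental.
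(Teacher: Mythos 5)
Your proof is correct. Note that the paper itself never proves Proposition~\ref{prop:support-mus} --- it is stated and immediately followed by an illustrative example only --- so there is no paper argument to compare against; your write-up is a valid self-contained proof of exactly what the paper leaves implicit. The key move, collapsing ``$\epsilon$ is a partial MUS of $K = \epsilon \cup \{\lnot\varphi\}$'' to ``$K$ is itself an MUS'' (using consistency of $\epsilon$ to exclude the witness $\M = \epsilon$ and to get $\lnot\varphi \notin \epsilon$, hence $\epsilon \subset K$), is the right one: after it, condition (i) of Definition~\ref{def:expl} matches inconsistency of $K$ via the equivalence $\epsilon \models \varphi \iff \epsilon \cup \{\lnot\varphi\}$ inconsistent, and condition (ii) matches minimality of $K$ through your two-family case split on proper subsets of $K$. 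One cosmetic remark: in the backward direction you conclude conditions (i) and (ii), and strictly speaking Definition~\ref{def:expl} also presupposes $\KB \models \varphi$; this follows immediately from $\epsilon \models \varphi$ and $\epsilon \subseteq \KB$ by monotonicity of entailment, so it is worth a half-sentence but is not a gap.
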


\begin{example}
    Let $\KB = \{p, \lnot p \vee q, \lnot p \vee r \}$ and $\epsilon = \{p, \lnot p \vee q \}$ from Example~\ref{back:expl1}. Notice how $\M = \{p, \lnot p \vee q, \lnot q \}$ is an MUS of $\KB \cup \{\lnot q \}$. Then, it is easy to see that $\epsilon$ is a partial MUS of $\M$.
\end{example}

\subsection{Modeling Uncertainty in Propositional Logic}

Building on a propositional language $\fml{L}$, we can model the uncertainty of propositional formulae using a \textit{probability distribution} over the possible worlds $\Omega$ of $\fml{L}$. Formally,

\begin{definition}[Probability Distribution]
Let $\Omega$ be the set of possible worlds of the language $\fml{L}$. A probability distribution $P$ on $\Omega$ is a function $P: \Omega \mapsto [0,1]$ such that $\underset{\omega \in \Omega}{{\mathlarger{\sum}}}P(\omega) = 1$.
\end{definition}

In essence, a probability distribution over possible worlds creates a \textit{ranking} between those worlds with respect to how likely they are to be true. This then allows us to quantify the uncertainty in a formula as follows:

\begin{definition}[Degree of Belief]
Let $\Omega$ be the set of possible worlds and $P$ a probability distribution over $\Omega$. The degree of belief in a formula $\varphi \in \fml{L}$ is $P(\varphi) = \underset{\omega \models \varphi}{{\mathlarger{\sum}}}P(\omega)$.\footnote{Note that, with a slight abuse of notation, we use $\omega \models \varphi$ to denote that $\varphi$ is true (e.g.,~satisfied) in world $\omega$.}
\end{definition}

We may refer to $P(\varphi)$ as degree of belief or probability of $\varphi$ interchangeably. Note that the possible worlds approach to probabilities is essentially equivalent to probabilities assigned directly to the formulae~\cite{bacchus1989representing}.

Now, the probability distribution on $\Omega$ can be induced from a weighted knowledge base, referred to as a \textit{belief base}:
\begin{equation}
    \B = \{(\phi_1, w_1 ), \ldots, ( \phi_n, w_n )\}
\end{equation}
\noindent where each formula $\phi_i \in \fml{L}$ is associated with a weight $w_i \in \R^+$.\footnote{We assume, without loss of generality, that all weights are non-negative because a formula with a negative weight $w$ can be replaced by its negation with weight $-w$.} 

Intuitively, the weights serve as meta-information and reflect the certainty about the truth of the corresponding formulae---the higher the weight, the more certain the formula is. In that sense, formulae with higher weights are prioritized for satisfaction, effectively capturing the certainty of the particular formulae. This mechanism is especially useful for handling inconsistency and non-monotonic reasoning patterns, thus capturing a broader spectrum of problems.\footnote{For example, the notion of inconsistency is relaxed as follows: Given two formulae $\phi$ and $\lnot \phi$ that contradict each other (e.g.,~$\{\phi, \neg \phi \}$ is inconsistent), if $P(\phi) = 0.9$, then from the axioms of probability we have that $P(\lnot \phi) = 0.1$. This then means that the worlds where $\lnot \phi$ is true are more unlikely than the worlds where $\phi$ is true, but not impossible.} Further, we will denote with $\B^{\downarrow w}$ the \textit{classical projection} of $\B$, that is, $\B^{\downarrow w} = \{ \phi_i \: | \: (\phi_i, w_i) \in \B \}$.

Given a belief base $\B$, one way to induce a probability distribution is the following:

\begin{equation}
    \forall \omega\in \Omega, \; P_{\B}(\omega) = \frac{1}{Z}\exp \left ( \: \overset{n}{ \underset{i=1}{{\mathlarger{\sum}}}} w_i \cdot \I(\omega, \phi_i) \right )
\end{equation}

\noindent where $\I(\omega, \phi) = 1$ if $\omega \models \phi$ and $0$ otherwise, and 
$Z = \underset {\omega \in \Omega} \sum \exp \left ( \: \overset{n}{ \underset{i=1}{\sum}} w_i \cdot \I(\omega, \phi_i) \right )$ 
is the normalization factor.

The induced probability distribution quantifies the likelihood that a given (possible) world is the actual world. Higher formula weights amplify the (log-) probability difference between a world that satisfies the formula and one that does not, other things being equal. Consequently, worlds that violate fewer formulas are deemed more probable. Note that a belief base $\B$ is essentially a \textit{log-linear model}~\cite{bishop2007discrete}, from which a \textit{joint probability distribution} of the set of variables of $\fml{L}$ is induced. Interestingly, log-linear models are special cases of Markov Logic Networks and can represent any positive distribution~\cite{richardson2006markov}. When taken from context, we will simply use $P$ to denote the distribution induced from $\B$.

Entailment in a belief base $\KB$ becomes \textit{graded}, that is, we now say that $\B$ entails a formula $\phi$ with degree of belief $P(\phi)$. However, when all weights are equal and tend to infinity, a belief base represents a uniform distribution over the worlds that satisfy it and, as such, entailment of a formula can be answered by computing the probability of the formula and checking whether it is $1$. In other words, entailment under belief bases collapses to classical entailment under knowledge bases. See \citet{richardson2006markov} for a proof.

Finally, the weighted formulae in a belief base $\B$ can be viewed as \textit{soft} constraints in the sense described in Section~\ref{mus:mcs}. In contrast, \textit{hard} constraints can be imposed as formulae with ``infinite'' weights.\footnote{In practice, infinite weights can be replaced with $\overset{n}{ \underset{i=1}{\sum}} w_i + 1$.}

\subsection{The Model Reconciliation Problem}
\label{sec:MRP}

The \textit{Model Reconciliation Problem} (MRP), as introduced by \citet{chakraborti2017plan}, highlights the critical need for aligning the planning models of a human user and an agent to facilitate effective collaboration and understanding. This alignment becomes especially pertinent in scenarios where the agent's plan deviates from human expectations, necessitating a mechanism to reconcile these differences through explanations. In this approach, the (planning) agent must have knowledge of the human's model in order to contemplate their goals and foresee how its plan will be perceived by them. When there exist differences between the models of the agent and the human such that the agent’s plan diverges from the human’s expectations, the agent provides a minimal set of model differences, namely a \textit{model reconciling explanation}, to the human.

It is important to highlight that, in order to effectively solve MRP, the following (implicit) assumptions typically hold: 

\squishenum
    \item[1.] The \textit{agent model represents the ground truth} or, in other words, the agent model is the ``correct'' encoding of the domain. This assumption is predicated on the notion that the explanation is generated from the agent's perspective, thereby rendering it reasonable to assume that the agent ``thinks'' that its model is accurate or correct.

    \item[2.] The \textit{agent has access to the human model}, which is an approximation of the actual human model. In the worst case, it can be empty; but, practically, it can be approximated based on past interactions \cite{sreedharan2018handling,Juba_learning21}.
    
    \item[3.] \textit{Both models are assumed to be deterministic}, and they thus are able to represent only deterministic domains. Note that while this is a restricting assumption, in Section~\ref{sec:plmrp} we present a framework that relaxes it.

\squishend

Now, building upon the MRP foundation, we introduced its logic-based variant (L-MRP), where the models of the agent and the human user are represented as logical knowledge bases \cite{vasLMRP}.\footnote{We have also introduced extensions that are complementary to the work presented in this paper \cite{vasileiouplease,vasileiou2024dialectical}.} As a \textit{model reconciling explanation} must take into account both the knowledge base $\KB_\alpha$ of the agent providing an explanation as well as the knowledge base $\KB_h$ of the human receiving the explanation, it is defined slightly differently compared to monolithic explanations defined by Definition \ref{def:expl}:

\begin{definition}
[Model Reconciling Explanation]
Given the knowledge bases of an agent $\KB_\alpha$ and a human user $\KB_h$ as well as an explanandum $\varphi$, such that $\KB_\alpha \models \varphi$ and $\KB_h \not \models \varphi$, $\fml{E} = \langle \epsilon^{+}, \epsilon^{-} \rangle$ is a model reconciling explanation if and only if $\epsilon^{+} \subseteq \KB_\alpha$, $\epsilon^{-} \subseteq \KB_h$, and $(\KB_h \cup \epsilon^{+})\setminus \epsilon^{-}\models \varphi$.
\label{lmrp}
\end{definition}

When $\KB_h$ is \textit{updated} with a model reconciling explanation $\fml{E} = \langle \epsilon^{+}, \epsilon^{-} \rangle$, new formulae $\epsilon^+$ from $\KB_\alpha$ are added to $\KB_h$ and formulae $\epsilon^-$ from $\KB_h$ are retracted to ensure consistency. Note that since a model reconciling explanation is from the perspective of the agent's knowledge base $\KBa$, we implicitly assume that if a formula in $\KB_h$ is inconsistent with $\KBa$, then that formula is ``false'' from the perspective of the agent.

\begin{example}
    Let $\KB_\alpha = \{a, \lnot a \vee b, \lnot a \vee c \}$ and $\KB_h = \{\lnot a, \lnot a \vee b \}$ be the knowledge bases of an agent and a human user, respectively, where $\KB_\alpha \models b$ and $\KB_h \not \models b$. A model reconciling explanation is then $\fml{E} = \langle \{a \}, \{ \lnot a \} \rangle$, where $(\KB_{h} \cup \{a\})\setminus \{\lnot a\} = \{a, \lnot a \vee b \} \models b$.
\end{example}


\section{Motivating Application: Office Robot Delivery}
\label{sec:example}

To demonstrate our ideas, let us revisit the service robot scenario described earlier. Consider a robot operating in an office building with the task of delivering a package from its current location (Room 1) to a destination (Room 2). The robot can navigate through two possible corridors: corridor A, which is shorter but often crowded, or corridor B, which is longer but typically less crowded. Figure~\ref{fig:example} illustrates a simplified abstraction of this problem in the form of a grid world.

The robot's action dynamics include two primary actions: \texttt{move} and \texttt{deliver}. The \texttt{move} action allows the robot to navigate between rooms through a specified corridor, subject to preconditions (the robot must be at the origin location) and probabilistic effects that depend on the corridor's crowding status, i.e.,~when attempting to move through a crowded corridor, the robot has a higher probability of failing to move and remaining in its current location. The \texttt{deliver} action can only be performed when the robot is at the destination room and results in the package being delivered. The complete problem specification and encoding is provided in the appendix.

We encode this probabilistic planning scenario as a belief base $\B$ containing weighted formulae (with subscripts denoting time steps) that represent the robot's beliefs about the environment states and actions. For instance, the initial states and the action dynamics for the \texttt{move} action (for $t=0$) are: \\

\textbf{Initial state beliefs:}
\begin{align*}
&(\texttt{robot}\textrm{-}\texttt{at}(room1)_0, \infty), \\
&(\neg \texttt{package}\textrm{-}\texttt{delivered}_0, \infty), \\
&(\texttt{crowded}(A), 3), \\
&(\texttt{crowded}(B), 0.5),
\end{align*}


\textbf{\texttt{Move} preconditions:}
\begin{align*}
&( \texttt{move}(room1,room2,A)_0 \rightarrow \texttt{robot}\textrm{-}\texttt{at}(room1)_0, \infty), \\
&( \texttt{move}(room1,room2,B)_0 \rightarrow \texttt{robot}\textrm{-}\texttt{at}(room1)_0, \infty), \\
\end{align*}

\textbf{\texttt{Move} effects:}
\begin{align*}
&(\texttt{move}(room1,room2,A)_0 \wedge \texttt{crowded}(A) \rightarrow \texttt{robot}\textrm{-}\texttt{at}(room2)_{1}, 0.5), \\
&(\texttt{move}(room1,room2,A)_0 \wedge \texttt{crowded}(A) \rightarrow \texttt{robot}\textrm{-}\texttt{at}(room1)_{1}, 3).
\end{align*}

\begin{figure}[t]
    \centering
    \includegraphics[width=0.7\linewidth]{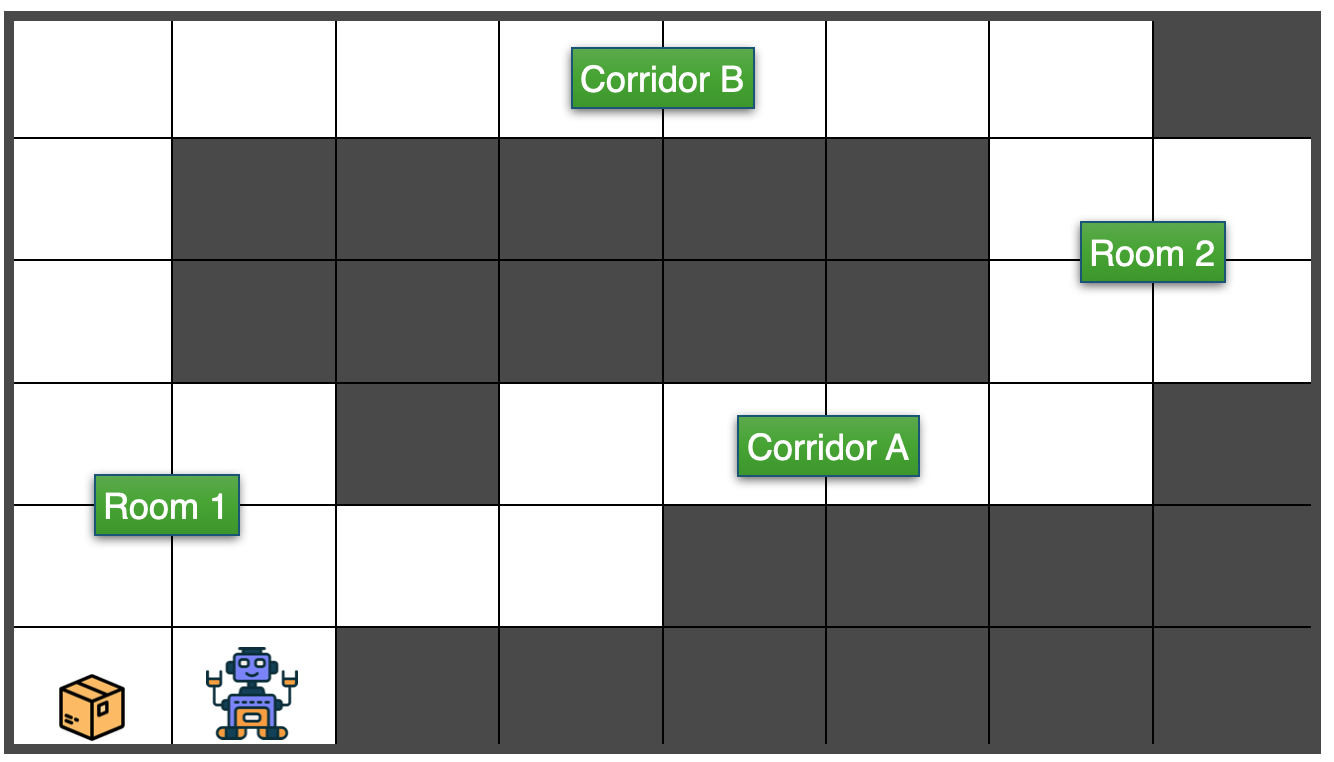}
    \caption{Grid world illustration of the package delivery problem. The robot has to deliver the package from Room 1 to Room 2 using either Corridor A or Corridor B.}
    \label{fig:example}
\end{figure}

In this belief base, the weights encode the robot's degrees of belief. For instance, the weight of 3 for $crowded(A)$ represents a strong belief that corridor A is crowded, while the weight of 0.5 for $crowded(B)$ indicates a weaker belief that corridor B is crowded. Similarly, the probabilistic action effects encode that when a corridor is crowded, it is less likely that the robot will move through it to the next room. Given its belief base $\B$, the robot computes that the optimal plan is $\pi = \langle \texttt{move}(room1, room2, B)_0, \texttt{deliver}_1 \rangle$, with probability $P(\pi) = 0.6$.

Now, suppose a human supervisor observes the robot's choice and asks: ``Why didn't you take corridor A instead?'' This query can be formalized as the explanandum $\varphi = \neg \texttt{move}(room1, room2, A)$, essentially encoding ``not moving from Room 1 to Room 2 through corridor A'', and allowing the robot to generate an explanation of why that is the case. In the following section, we will revisit this scenario and show how our methods can be used to generate probabilistic explanations that answer the query.

\section{A Framework for Probabilistic Explanation Generation}
\label{sec:pexpl-gen}

In this section, we approach the notion of \textit{probabilistic explanation} in the following setting, where $\B$ is a belief base and $P$ its induced probability distribution:

\begin{definition}[Probabilistic Explanation Generation Problem (PEGP)]
    Given a belief base $\B$ and an explanandum $\varphi$ in logic $\fml{L}$, the goal of PEGP is to identify an explanation (i.e.,~a set of formulae) $\pe \subseteq \B$ such that the probability of the explanandum given the explanation is higher than the probability of the explanandum, i.e.,~$P(\varphi \: | \: \pe) > P(\varphi)$.
\end{definition}

Under this definition, we account for the uncertainty inherent in knowledge bases, and seek to find explanations that not only identify contributing factors for an explanandum but also quantify the uncertainty in these factors. Intuitively, a solution $\pe$ to PEGP seeks to increase the degree of belief in the explanandum $\varphi$. If $P(\varphi \: | \: \pe) > P(\varphi)$, this then represents the case where $\pe$ increases the degree of belief in $\varphi$ and the greater the value of $P(\varphi \: | \: \pe)$ the greater the degree of belief in $\varphi$.

In what follows, we present a framework designed to extend the classical concepts of monolithic explanation, as defined by Definition~\ref{def:expl}, and model reconciling explanation, as defined by Definition~\ref{lmrp}, into our probabilistic setting.

\subsection{Probabilistic Monolithic Explanations}
\label{sec:pexpl}

Building on the classical notion of monolithic explanation presented in Definition~\ref{def:expl}, a \textit{probabilistic monolithic explanation} for an explanandum $\varphi$ from belief base $\B$ is defined as follows:

\begin{definition}[Probabilistic Monolithic Explanation]
\label{def:pexpl}
Let $\B$ be a belief base, $\B^{\downarrow w}$ its classical projection, and $\varphi$ an explanandum. We say that $\pe \subseteq \B^{\downarrow w}$ is a probabilistic monolithic explanation for $\varphi$ from $\B$ if and only if $P(\varphi \: | \: \pe) > P(\varphi)$.
\end{definition}

\begin{example}
\label{pex:1}
    Consider the belief base $\B = \{ (a, 1), (\lnot a \vee b, 2) \}$ and the explanandum $b$. The probability of the explanandum is $P(b) = 0.73$. Then, $\pe_1 = \{ a \}$ and $\pe_2 = \{ \lnot a \vee b \}$ are two probabilistic monolithic explanations for $b$ from $\B$, that is, $P(b \: | \: \pe_1) = 0.88 > P(b)$ and $P(b \: | \: \pe_2) = 0.78 > P(b)$.
\end{example}

It is important to note that Definition~\ref{def:pexpl} can be extended to the case where the formulae $\pe$ do not necessarily come from $\B$, but rather from the language $\fml{L}$. However, we restrict our attention only to formulae from $\B$ in order to be compatible with the classical notion of monolithic explanations (see Definition~\ref{def:expl}) and the algorithms that we will present in Section~\ref{sec:comp}. For brevity, and until the end of this section, we will refer to probabilistic monolithic explanations as monolithic explanations.

Looking at Example~\ref{pex:1}, we can see that monolithic explanations will typically vary in their capacity to increase the degree of belief in the explanandum. In other words, each monolithic explanation provides us with some \textit{explanatory gain} for the explanandum. Following \citet{good1960weight,good1968corroboration}, explanatory gain is defined as follows:\footnote{ \citet{good1960weight} originally introduced this measure to quantify the \textit{(weak) explanatory power} of a hypothesis with respect to evidence, essentially evaluating how effectively the hypothesis explains the evidence.}


\begin{definition}[Explanatory Gain of Monolithic Explanations]
\label{def:expl-gain}
Let $\pe$ be a monolithic explanation for explanandum $\varphi$ from belief base $\B$. The explanatory gain of $\pe$ for $\varphi$ is defined as $G(\pe, \varphi) = \log \left (\frac{P(\varphi \: | \: \pe)}{P(\varphi)} \right )$.\footnote{We use $\log$ with base $2$ in our calculations.}$^,$\footnote{Note that $G(\pe, \varphi)$ is always positive due to the requirement of monolithic explanations that $P(\varphi \: | \: \pe) > P(\varphi)$ (Definition~\ref{def:pexpl}).}
\end{definition}

\noindent In essence, the explanatory gain can be thought of as a measure that quantifies how well the monolithic explanation $\pe$ \textit{explains} the explanandum $\varphi$ or, equivalently, the degree to which $\pe$ entails $\varphi$. The greater the value of $G(\pe, \varphi)$, the more substantial the explanatory gain and, hence, the more effective $\pe$ is at explaining $\varphi$. 

It is essential to recognize that this measure, while initially introduced to assess the weak explanatory power of hypotheses in light of evidence \cite{good1960weight}, it is used here to evaluate monolithic explanations. By quantifying the extent to which a monolithic explanation explains an explanandum, we can systematically identify the most informative monolithic explanations within a probabilistic framework.

\begin{example}
\label{pex:2}
Continuing from Example~\ref{pex:1}, consider the monolithic explanations $\pe_1 = \{ a \}$, $\pe_2 = \{ \lnot a \vee b \}$, and $\pe_3 = \{a, \lnot a \vee b \}$ for explanandum $b$. The explanatory gains of $\pe_1$, $\pe_2$, and $\pe_3$ for $b$ are $G(\pe_1, b) = \log \left (\frac{P(b \: | \: \pe_1)}{P(b)}\right ) = \log \left (\frac{0.88}{0.73}\right ) = 0.27$, $G(\pe_2, b) = \log \left (\frac{P(b \: | \: \pe_2)}{P(b)}\right ) = \log \left (\frac{0.78}{0.73}\right ) = 0.11$, and $G(\pe_3, b) = \log \left (\frac{P(b \: | \: \pe_3)}{P(b)}\right ) = \log \left (\frac{1}{0.73}\right ) = 0.45$, respectively.
\end{example}

A natural course of action when seeking monolithic explanations for an explanandum is to seek the one with the highest explanatory gain. While tempting, it is important to emphasize that when a monolithic explanation entails the explanandum, then the explanatory gain takes on its greatest value. For example,

\begin{example}
    Consider the three monolithic explanations $\pe_1$, $\pe_2$, and $\pe_3$ from Example~\ref{pex:2}. Notice that $\pe_3 =  \{a, \lnot a \vee b\}$ entails $b$ (i.e.,~$\pe_3 \models b$) and that its explanatory gain is higher than that of $\pe_1$ and $\pe_2$. As $\pe_1$, $\pe_2$, and $\pe_3$ are the only three possible explanations for $b$, $G(\pe_3, b)$ is indeed the maximum achievable explanatory gain for $b$.
\end{example}

We formalize this in the following proposition:
\begin{proposition}
\label{prop:explgain}
    Given a monolithic explanation $\pe$ for an explanandum $\varphi$ from belief base $\B$, if $\pe \models \varphi$, then $G(\pe, \varphi)$ achieves its maximal value for $\varphi$, specifically $G(\pe, \varphi) = - \log P(\varphi)$.
\end{proposition}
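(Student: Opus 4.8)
The plan is to reduce the claim to a single computation of the conditional probability $P(\varphi \mid \pe)$ and then compare it against the universal bound $P(\varphi \mid \pe') \le 1$ valid for every explanation. First I would unpack the definition of explanatory gain, $G(\pe, \varphi) = \log\bigl(P(\varphi \mid \pe)/P(\varphi)\bigr)$, so that everything hinges on establishing that $P(\varphi \mid \pe) = 1$ whenever $\pe \models \varphi$. The denominator $P(\varphi)$ is a fixed quantity determined by $\B$, so only the numerator term $P(\varphi \mid \pe)$ is affected by the choice of $\pe$.

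For the identity $P(\varphi \mid \pe) = 1$, I would argue at the level of possible worlds. Reading $\pe$ as the conjunction $\bigwedge_{\psi \in \pe}\psi$ and writing the conditional as $P(\varphi \mid \pe) = P(\varphi \wedge \pe)/P(\pe)$, the hypothesis $\pe \models \varphi$ says semantically that $\{\omega : \omega \models \pe\} \subseteq \{\omega : \omega \models \varphi\}$. Hence the formulae $\varphi \wedge \pe$ and $\pe$ are satisfied by exactly the same worlds, so by the Degree of Belief definition (summing $P(\omega)$ over satisfying worlds) their degrees of belief coincide, $P(\varphi \wedge \pe) = P(\pe)$, and the ratio equals $1$. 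Substituting into $G$ then gives $G(\pe, \varphi) = \log\bigl(1/P(\varphi)\bigr) = -\log P(\varphi)$, which is the stated value.

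For maximality, I would observe that for any monolithic explanation $\pe'$ of $\varphi$ the axioms of probability force $P(\varphi \mid \pe') \le 1$, whence $G(\pe', \varphi) = \log\bigl(P(\varphi \mid \pe')/P(\varphi)\bigr) \le \log\bigl(1/P(\varphi)\bigr) = -\log P(\varphi)$. Thus $-\log P(\varphi)$ is an upper bound on the explanatory gain of \emph{any} explanation for $\varphi$, and the first part shows this bound is attained precisely when $\pe \models \varphi$. This gives both assertions of the proposition: the value and its maximality.

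The one point requiring care---the main (and only minor) obstacle---is ensuring the conditioning is well-defined, i.e.~that $P(\pe) > 0$ so that $P(\varphi \mid \pe)$ exists. This holds because the log-linear distribution $P_{\B}$ induced by $\B$ assigns strictly positive mass to every world (the exponential weight term is always positive), so $P(\pe) > 0$ whenever $\pe$ is satisfiable; and an explanation with $\pe \models \varphi$ is in particular consistent (cf.~Definition~\ref{def:expl} and Proposition~\ref{prop:support-mus}), hence satisfiable. With this observation in place, the world-counting argument above is valid and the computation goes through.
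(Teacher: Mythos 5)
Your proof is correct and takes essentially the same route as the paper's: the core step in both is that $\pe \models \varphi$ forces the worlds satisfying $\pe$ to be contained in those satisfying $\varphi$, so $P(\varphi \mid \pe) = P(\varphi \wedge \pe)/P(\pe) = 1$ and hence $G(\pe, \varphi) = \log\bigl(1/P(\varphi)\bigr) = -\log P(\varphi)$. Your two additions---the explicit bound $P(\varphi \mid \pe') \le 1$ establishing maximality over all explanations, and the verification that $P(\pe) > 0$ under the log-linear distribution so that conditioning is well-defined---are details the paper leaves implicit, and they only make the argument more complete.
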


\begin{proof}
    If $\pe \models \varphi$, then for all possible worlds $\omega$ in which $\omega \models \pe$, it holds that $\omega \models \varphi$. That is, the worlds $\omega$ in which $\pe$ is true are subsumed by the worlds in which $\varphi$ is true, which implies that also $\omega \models \varphi \land \pe$. Consequently, $P(\varphi \: | \: \pe) = \frac{\underset{\omega \models \varphi \land \pe}{{\mathlarger{\sum}}}P(\omega)}{\underset{\omega \models \pe}{{\mathlarger{\sum}}}P(\omega)} = \frac{\underset{\omega \models \pe}{{\mathlarger{\sum}}}P(\omega)}{\underset{\omega \models \pe}{{\mathlarger{\sum}}}P(\omega)} = 1$. Therefore, when $\pe \models \varphi$, the explanatory gain of $\pe$ for $\varphi$ is $G(\pe, \varphi) = \log \left (\frac{P(\varphi \: | \: \pe)}{P(\varphi)}\right ) = \log \left (\frac{1}{P(\varphi)}\right ) = - \log P(\varphi)$.
\end{proof}

The following corollary follows naturally from Proposition~\ref{prop:explgain}:

\begin{corollary}
    \label{cor:explgain1}
    Let $\tilde{E}(\varphi)$ denote the set of all monolithic explanations for explanandum $\varphi$ from belief base $\B$. For any two monolithic explanations $\pe_1, \pe_2 \in \tilde{E}(\varphi)$, if $\pe_1 \models \varphi$ and $\pe_2 \models \varphi$ (resp. $\pe_2 \not \models \varphi$), then $G(\pe_1, \varphi) = G(\pe_2, \varphi)$ (resp. $G(\pe_1, \varphi) > G(\pe_2, \varphi)$).
\end{corollary}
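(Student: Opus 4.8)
The plan is to prove Corollary~\ref{cor:explgain1} as a direct consequence of Proposition~\ref{prop:explgain}, splitting into the two stated cases. First I would handle the case where both explanations entail the explanandum, i.e.~$\pe_1 \models \varphi$ and $\pe_2 \models \varphi$. By Proposition~\ref{prop:explgain}, each entailing explanation achieves the maximal gain value $G(\pe_i, \varphi) = -\log P(\varphi)$, which depends only on $\varphi$ and not on the particular explanation. Since both $G(\pe_1, \varphi)$ and $G(\pe_2, \varphi)$ equal this same quantity, I immediately conclude $G(\pe_1, \varphi) = G(\pe_2, \varphi)$.

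For the second case, where $\pe_1 \models \varphi$ but $\pe_2 \not\models \varphi$, I would again invoke Proposition~\ref{prop:explgain} to fix $G(\pe_1, \varphi) = -\log P(\varphi)$. The task then reduces to showing $G(\pe_2, \varphi) < -\log P(\varphi)$, equivalently $P(\varphi \mid \pe_2) < 1$. The key observation is that $\pe_2 \not\models \varphi$ means there exists at least one possible world $\omega$ with $\omega \models \pe_2$ but $\omega \not\models \varphi$. Provided $P(\omega) > 0$ for such a world, the conditional probability $P(\varphi \mid \pe_2) = \frac{\sum_{\omega \models \varphi \wedge \pe_2} P(\omega)}{\sum_{\omega \models \pe_2} P(\omega)}$ has a numerator strictly smaller than its denominator, yielding $P(\varphi \mid \pe_2) < 1$ and hence $G(\pe_2, \varphi) = \log\!\left(\frac{P(\varphi \mid \pe_2)}{P(\varphi)}\right) < \log\!\left(\frac{1}{P(\varphi)}\right) = G(\pe_1, \varphi)$.

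The main obstacle is the positivity requirement: the strict inequality $P(\varphi \mid \pe_2) < 1$ relies on the offending world $\omega$ carrying strictly positive probability mass. This is guaranteed by the log-linear form of the induced distribution in the excerpt, namely $P_{\B}(\omega) = \frac{1}{Z}\exp\!\left(\sum_{i=1}^{n} w_i \cdot \I(\omega, \phi_i)\right)$, which assigns strictly positive probability to every world in $\Omega$ because the exponential is always positive and $Z$ is finite. I would state this explicitly as the justifying step, since it is what separates the probabilistic setting from the classical one and makes the strict inequality rigorous. The remaining manipulations are routine monotonicity properties of the logarithm.

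Note that the corollary's hypotheses already presuppose $\pe_1, \pe_2 \in \tilde{E}(\varphi)$, so both are genuine probabilistic monolithic explanations satisfying $P(\varphi \mid \pe_i) > P(\varphi)$ by Definition~\ref{def:pexpl}; this ensures all gains are well defined and positive, and no degenerate edge cases arise.
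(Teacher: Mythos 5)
Your proof is correct and takes essentially the same route as the paper, which gives no explicit argument at all and simply asserts that the corollary ``follows naturally'' from Proposition~\ref{prop:explgain}. Your extra step---invoking strict positivity of the log-linear distribution over worlds to turn $\pe_2 \not\models \varphi$ into $P(\varphi \mid \pe_2) < 1$, and hence into a \emph{strict} inequality of gains---is exactly the detail the paper leaves implicit, and it is genuinely needed, since Proposition~\ref{prop:explgain} by itself only bounds the gain from above.
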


What Proposition~\ref{prop:explgain} and Corollary~\ref{cor:explgain1} essentially underscore is that the exclusive focus on explanatory gain as an evaluation metric for a monolithic explanation neglects the inherent likelihood of the explanation itself. That is, the explanatory gain of a monolithic explanation for an explanandum evaluates how effectively the explanation explains the explanandum, \textit{assuming that the explanation itself is true}. Nonetheless, this premise often lacks practical relevance because, in probabilistic contexts, each monolithic explanation is associated with a probability reflecting its likelihood for being true. Therefore, a good measure for evaluating monolithic explanations should incorporate the explanation's inherent plausibility.

Addressing this gap, \citet{good1968corroboration} introduced the concept of \textit{(strong) explanatory power} that integrates the monolithic explanation's explanatory gain with its probability, offering a more balanced metric for evaluating monolithic explanations.\footnote{Good's measure of (strong) explanatory power is defined as ${\mathlarger{\log}}\left(\frac{P(\varphi \: | \: h) \cdot P(h)^\gamma}{P(\varphi)}\right )$, where $h$ is a hypothesis and $0 < \gamma < 1$ a constant \cite{good1968corroboration}.} Building on Good's measure of explanatory power, we adapt it to our setting and define it as follows:\footnote{For a detailed defense of Good's measure as a quantitative criterion for explanatory power, alongside a discussion of relevant properties and a comprehensive comparison with other measures, we refer the reader to the work by \citet{glass2023good}.}

\begin{definition}[Explanatory Power of Monolithic Explanations]
\label{def:expl-pow}
Let $\pe$ be a monolithic explanation for explanandum $\varphi$ from belief base $\B$. The explanatory power of $\pe$ for $\varphi$ is defined as $\EP(\pe, \varphi) = G(\pe, \varphi) + \gamma \cdot P(\pe)$, where $\gamma \in [0,1]$ is a constant.
\end{definition}

This definition effectively combines the measure of how much a monolithic explanation explains the explanandum (explanatory gain) with the likelihood of the explanation itself, mediated by a parameter $\gamma$. The constant $\gamma$ serves as a tuning parameter, enabling the adjustment of the relative importance of the monolithic explanation's probability in the overall assessment of explanatory power. This flexibility is important for tailoring the evaluation process to specific contexts or preferences, where the balance between the informativeness of a monolithic explanation and its plausibility may vary.

\begin{example}
\label{pex:3}
    Consider the belief base $\B = \{ (a, 1.5), (b, 3), (\lnot a \vee c, 1), (\lnot b \vee c, 1) \}$ and the explanandum $c$ with initial probability $P(c) = 0.84$. Notice that $\pe_1 = \{ a, \lnot a \vee c \}$ and $\pe_2 = \{ b, \lnot b \vee c \}$ are two monolithic explanations for $c$ from $\B$, each of which entail $c$ (i.e.,~$\pe_1 \models c$ and $\pe_2 \models c$), with probabilities $P(\pe_1) = 0.68 $ and $P(\pe_2) = 0.80$, respectively. This means that their explanatory gain for $c$ is equal (Corollary~\ref{cor:explgain1}), that is, $G(\pe_1, c) = G(\pe_2, c) = 0.25$. Now, assuming $\gamma = 0.5$, the explanatory power of $\pe_1$ and $\pe_2$ respectively is $\EP(\pe_1, c) = 0.25 + 0.5 \cdot 0.68 = 0.59$ and $\EP(\pe_2, c) = 0.25 + 0.5 \cdot 0.80  = 0.65$.
\end{example}

With the introduction of explanatory power as an evaluative measure of (probabilistic) monolithic explanations, we can now define a (probabilistic) \textit{preference relation} among monolithic explanations, which allows for a systematic approach to determining the most effective monolithic explanation for a given explanandum:

\begin{definition}[Preference Relation for Monolithic Explanation]
\label{def:prel}
Let $\pe_1$ and $\pe_2$ be two monolithic explanations for explanandum $\varphi$ from belief base $\B$. $\pe_1$ is preferred over $\pe_2$, denoted as $\pe_1 \succeq \pe_2$, if and only if $\EP(\pe_1, \varphi) \geq \EP(\pe_2, \varphi)$.
\end{definition}

This definition enables a quantitatively grounded approach to preference among monolithic explanations, where the preference is directly tied to the explanatory power of each explanation. It facilitates a structured way to navigate the space of potential monolithic explanations, prioritizing those that not only explain the explanandum more effectively, but also align more closely with the existing knowledge represented by the belief base $\B$.

\begin{example}
Continuing from Example~\ref{pex:3}, the two monolithic explanations for $c$ from $\B$ are $\pe_1$ and $\pe_2$ and have explanatory power $\EP(\pe_1, c) = 0.59$ and $\EP(\pe_2, c) = 0.65$. Thus, $\pe_2$ is preferred over $\pe_1$ (i.e.,~$\pe_2 \succeq \pe_1$).
\end{example}

Finally, given the set of all monolithic explanations for an explanandum, we say that a monolithic explanation is \textit{most preferred} if and only if it is (probabilistically) preferred over every other possible monolithic explanation for that explanandum. Formally,

\begin{definition}[Most-Preferred Monolithic Explanation]
\label{def:mpref}
Let $\tilde{E}(\varphi)$ denote the set of all monolithic explanations for explanandum $\varphi$ from belief base $\B$. A monolithic explanation $\pe^* \in \tilde{E}(\varphi)$ is the most-preferred monolithic explanation if and only if $\pe^* \succeq \pe$ for all $\pe \in \tilde{E}(\varphi)$.
\end{definition}

\subsubsection{Motivating Application: Illustrative Example}

Recall from our motivating scenario in Section~\ref{sec:example} that we are interested in explaining the explanandum $\varphi = \neg \texttt{move}(room1, room2, A)$, which is akin to asking ``why didn't you take corridor A?'' Using our framework, we identify two potential explanations for this explanandum:

\begin{enumerate}
    \item  $\pe_1=\{\texttt{crowded}(A)\}$ or, in natural language, ``Corridor A is very likely crowded.''
    \begin{itemize}
        \item $P(\varphi|\pe_1) = 0.62$
        \item Explanatory gain: $G(\pe_1, \varphi) = 0.1$
        \item Explanatory power: $EP(\pe_1, \varphi) = 0.5$
    \end{itemize}


   \item $\pe_2 = \{\texttt{crowded}(A), \texttt{move}(room1, room2, A)_0 \wedge \texttt{crowded}(A) \rightarrow \texttt{robot}\textrm{-}\texttt{at}(room1)_1, \texttt{move}(room1, room2, A)_0 \wedge \texttt{crowded}(A) \rightarrow \texttt{robot}\textrm{-}\texttt{at}(room2)_1\}$ or, in natural language, ``Corridor A is very likely crowded, and when attempting to move through a crowded corridor, my movement outcomes are uncertain; I may either succeed or fail to move through to the next room.''
    \begin{itemize}
        \item $P(\varphi|\pe_2) = 1$
        \item Explanatory gain: $G(\pe_2, \varphi) = 0.79$
        \item Explanatory power: $EP(\pe_2, \varphi) = 1.04$
    \end{itemize}
\end{enumerate}

Comparing the two explanations, we see that explanation $\pe_2$ has the highest explanatory power (1.04), and is thus the most-preferred explanation according to Definition~\ref{def:mpref}. Such an explanation can help the human supervisor to understand both the robot's belief about the environment (crowded corridor) and how this belief influences action outcomes (movement outcomes under crowded conditions), thereby providing a more complete picture of the robot's decision-making process.

\xhdr{Comparison with MPE:}It is important to highlight the difference with the traditional \textit{Most Probable Explanation} (MPE) method. MPE typically consists of finding the world with the highest probability given some evidence \cite{shterionov2015most}. However, a world does not show the chain of inferences of a given explanandum, and importantly, it is not minimal by definition since it usually includes a (possibly large) number of probabilistic facts whose truth value is irrelevant for the explanandum. For example, applying standard MPE to our running example we get the following outcome:
\begin{align*}
\omega_{\text{MPE}} = \{ &  \texttt{crowded}(A), \lnot \texttt{crowded}(B), \lnot \texttt{deliver}_0, \texttt{deliver}_1, \lnot \texttt{move}(room1,room2,A)_0, \\
& \lnot \texttt{move}(room1,room2,A)_1, \texttt{move}(room1,room2,B)_0, \lnot \texttt{move}(room1,room2,B)_1, \\
& \lnot \texttt{move}(room2,room1,A)_0, \lnot \texttt{move}(room2,room1,A)_1, \lnot \texttt{move}(room2,room1,B)_0, \\
& \lnot \texttt{move}(room2,room1,B)_1,  \lnot \texttt{package-delivered}_0, \lnot \texttt{package-delivered}_1, \\
& \texttt{package-delivered}_2, \texttt{robot-at}(room1)_0, \lnot \texttt{robot-at}(room1)_1, \\
&\lnot \texttt{robot-at}(room1)_2, \lnot \texttt{robot-at}(room2)_0,
\texttt{robot-at}(room2)_1, \\ 
& \texttt{robot-at}(room2)_2 \}
\end{align*}
While $\omega_{\text{MPE}}$ is indeed the most probable world ($P(\omega_{\text{MPE}}) = 0.32$) in which the query ($\neg \texttt{move}(room1, room2, A)$) is true, it does not pinpoint to the (minimal) combination of formulae that justify the query.


\subsection{Probabilistic Model Reconciling Explanations}
\label{sec:plmrp}

Recall from Section~\ref{sec:MRP} that, in the model reconciliation problem (MRP), the models of the agent and the human user diverge with respect to an explanandum, insofar as the explanandum is explicable in the agent's model but inexplicable in the human's model. The goal is then to find a model reconciling explanation (i.e.,~a set of model differences) such that the explanandum becomes explicable in the human's model. Three important assumptions underlying MRP typically hold: (1)~the agent model is the ground truth; (2)~the agent has access to the human model; and (3)~both models are deterministic. 

As we described in Section~\ref{sec:MRP}, assumption (1) is reasonable since explanations are generated from the agent's perspective. In other words, the agent ``thinks'' that its model is correct. For assumption (2), the agent does not have access to the human's actual model, but an approximation of it. In the worst case, it can be empty; but, practically, it can be approximated based on past interactions \cite{sreedharan2018handling,Juba_learning21}. For assumption (3), we will relax the assumption that the human model is deterministic in our work, but we will still assume that the agent model is deterministic.

The motivation for moving away from deterministic human models becomes stronger when we consider two key points. First, since the agent is using an approximated human model, deterministic approximations are more likely to be inaccurate compared to probabilistic ones. Consequently, deterministic models may generate explanations that are incorrect or not meaningful for the user, thereby reducing the effectiveness of MRP. Secondly, it is likely that humans hold beliefs with varying degrees of certainty, highlighting a shortfall of deterministic models in capturing this range of uncertainties. These factors together underscore the necessity for models that incorporate probabilistic aspects, thus potentially enabling a more accurate and user-relevant application of MRP.

To that end, we will now expand the scope of MRP to cases in which the agent is uncertain about the human model. Particularly, we build on our previous work \cite{vasLMRP}, wherein both the agent and human models are represented as (logical) knowledge bases (see Definition~\ref{lmrp}), and extend it to the case where the human knowledge base is probabilistic (i.e.,~a belief base). In other words, we are now interested in \textit{probabilistic model reconciling explanations}.

First, we show through the following example how the concepts surrounding probabilistic monolithic explanations introduced in the previous section are applicable to the case of an agent knowledge base $\KBa$ and a human belief base $\Bh$.

\begin{example}
    \label{pex:lmrp1}
Let $\KBa = \{a, \lnot a \vee b, c \}$ and $\Bh = \{(c, 2), (\lnot c \vee \lnot a, 2)\}$ be the knowledge bases of an agent and the belief base of a human, respectively. Additionally, let $b$ be the explanandum, where $\KBa \models b$ and $P_h(b) = 0.5$. The goal in this example would then be to find which formulae from $\KBa$ increase the probability of the explanandum for $\Bh$, that is, to find a probabilistic monolithic explanation $\pe$ for $b$ from $\KBa$ for $\Bh$ such that $P_h(b \: | \: \pe) > P_h (b)$ (Definition~\ref{def:pexpl}). 

Given $\KBa$, there are three possible monolithic explanations: $\pe_1 = \{a \}$, $\pe_2 = \{ \lnot a \vee b\}$, and $\e_3 = \{a, \lnot a \vee b \}$. Evaluating them with respect to the probability distribution induced by $\Bh$, we get $P_h(b \: | \: \pe_1) = 0.5$, $P_h(b \: | \: \pe_2) = 0.55$, and $P_h(b \: | \: \pe_3) = 1$. Notice now that only $\pe_2$ and $\pe_3$ qualify as monolithic explanations since $P_h(b \: | \: \pe_2) > P_h(b)$ and $P_h(b \: | \: \pe_3) > P_h(b)$, whilst $\pe_1$ does not qualify as a monolithic explanation as $P_h(b \: | \: \pe_1) = P_h(b) = 0.5$.

Given $\pe_2$ and $\pe_3$ as the two possible monolithic explanations, we can now evaluate their effectiveness in terms of explanatory gain (Definition~\ref{def:expl-gain}) and explanatory power (Definition~\ref{def:expl-pow}). In terms of explanatory gain, we get $G(\pe_2, b) = 0.14$ and $G(\pe_3, b) = 1$. In terms of explanatory power (for $\gamma = 0.5$), we get $\EP(\pe_2, b) = 0.59$ and $\EP(\pe_3, b) = 1.04$. Finally, following the definition of most-preferred monolithic explanation (Definition~\ref{def:mpref}), we get that $\pe_3$ is the most-preferred monolithic explanation for $b$ from $\KBa$ for $\Bh$.
\end{example}

On the one hand, example~\ref{pex:lmrp1} shows that the definitions introduced in Section~\ref{sec:pexpl} can be directly applied to the case of an agent knowledge base $\KBa$ and a human belief base $\Bh$. On the other hand, there is something important to highlight here. Despite $\pe_3$ being the most-preferred monolithic explanation (i.e.,~it has the highest explanatory power), notice that its probability $P_h(\pe_3) = 0.09$ is rather low, which means that its negation $\lnot \pe_3$ has a much higher probability with $P_h(\lnot \pe_3) = 0.91$. Logically, this is explained by the fact that $\pe_3$ is inconsistent with the formulae in $\Bh^{\downarrow w}$. Therefore, the probabilistic monolithic explanation $\pe_3$ may not achieve the intended ``reconciliation'' between the agent and the human.

Recall that a model reconciling explanation (see Definition~\ref{lmrp}) is of the form  $\fml{E} = \langle \e^+, \e^- \rangle$, where $\e^-$ is specifically intended to resolve the inconsistency between the agent and the human with respect to the explanandum. Intuitively, the provision of $\e^-$ can be thought of as the agent's suggestion of what is ``false'' in the human knowledge base, at least compared to the agent knowledge base. In the case of a human belief base $\Bh$, we can account for $\e^-$ by finding a set of formulae from $\Bh$ such that $P_h (\e^+ \: | \: \lnot \e^-) > P_h (\e^+)$. For example,

\begin{example}
\label{pex:lmrp2}
Let $\KBa = \{a, \lnot a \vee b, c \}$ and $\Bh = \{(c, 2), (\lnot c \vee \lnot a, 2)\}$ from Example~\ref{pex:lmrp1}. From the perspective of $\KBa$, explanation $\pe = \{a, \lnot a \vee c \}$ can be seen as the formulae that should be true (e.g.,~added) in $\Bh$ (i.e.,~$\pe^+ = \pe$). However, notice that $\pe^+$ is inconsistent with $\Bh^{\downarrow w} = \{c, \lnot c \vee \lnot a \}$. Thus, from the perspective of $\KBa$, some formulae from $\Bh^{\downarrow w}$ are false (e.g.,~they should be retracted). One can see that $\pe^- = \lnot c \vee \lnot a$ is the only formula that should be false as it is the only one that is inconsistent with $\KBa$. Indeed, if $\pe^-$ is assumed to be false, then the probability of $\pe^+$ increases, i.e.,~$P_h (\pe^+ \: | \: \lnot \pe^-) = 0.5 > P_h (\pe^+) = 0.09$. Therefore, $\pe^+$ and $\pe^-$ can be seen as a model reconciling explanation for $b$ from $\KBa$ for $\Bh$.
\end{example}

Before formally defining what constitutes a probabilistic model reconciling explanation, we state the following assumptions underlying our framework:

\squishlist
    \item \textbf{Shared Domain Language}: The agent and the human user share the same (propositional) language $\fml{L}$, that is, they share the same set of atomic variables $\fml{V}$ from which formulae specific to a domain can be constructed.

    \item  \textbf{Agent Knowledge Base}: The agent model is represented by the (deterministic) knowledge base $\KBa$, encoding the ground truth of the domain. 

    \item \textbf{Human Belief Base}: The human model is represented by the belief base $\Bh$ (and its associated probability distribution $P_h$), reflecting the agent's uncertainty, for example, its degrees of belief about the human model. The agent has access to $\Bh$ a-priori.\footnote{We leave the question of acquiring (or learning) the human belief base open for future work.} 
\squishend

We define a \textit{probabilistic model reconciling explanation} as follows:
 
\begin{definition}[Probabilistic Model Reconciling Explanation]
\label{def:plmrp}
Given the knowledge base $\KBa$ of an agent, the belief base $\Bh$ of a human user, and an explanandum $\varphi$ such that $\KBa \models \varphi$ and $P_h (\varphi) < 1$, $\pE = \langle \pe^{+}, \pe^{-} \rangle$ is a probabilistic model reconciling explanation if and only if $\pe^+ \subseteq \KBa$ and $\pe^- \subseteq \Bh^{\downarrow w}$, and $P_h(\varphi \: | \: \pe^+) > P_h (\varphi)$ and $P_h(\pe^+ \: | \: \lnot \pe^-) > P_h (\pe^+)$.
\end{definition}

A probabilistic model reconciling explanation $\pE = \langle \pe^{+}, \pe^{-} \rangle$ for $\varphi$ from $\KBa$ for $\Bh$ is a tuple that increases the degree of belief in $\varphi$ with $\pe^+$, as well as increasing the degree of belief in $\pe^+$ with $\pe^-$ if $\pe^+$ is inconsistent with $\Bh^{\downarrow w}$. For brevity, until the end of this section, we will refer to probabilistic model reconciling explanations $\pE$ as model reconciling explanations. In this context, the notion of explanatory gain takes the following form:

\begin{definition}[Explanatory Gain for Model Reconciling Explanations]
\label{def:expl-gain-lmrp}
Let $\pE = \langle \pe^+, \pe^- \rangle$ be a model reconciling explanation for explanandum $\varphi$ from $\KBa$ for $\Bh$. The explanatory gain of $\pE$ for $\varphi$ is defined as $G (\pE, \varphi) = \log \left (\frac{P(\varphi \: | \: \pe^+)}{P(\varphi)} \right) + \log \left ( \frac{P(\pe^+ \: | \: \lnot \pe^-)}{P(\pe^+)}\right )$.
\end{definition}

In essence, the explanatory gain of $\pE = \langle \pe^+, \pe^- \rangle$ for $\varphi$ evaluates to what extent $\pe^+$ increases the probability of $\varphi$, as well as the extent to which $\pe^-$ increases the probability of $\pe^+$, \textit{assuming that $\pe^-$ is false}.

\begin{example}
\label{pex:lmrp3}
Let $\pE = \langle \{a, \lnot a \vee b \}, \{ \lnot c \vee \lnot a\} \rangle$ be the model reconciling explanation for $b$ from $\KBa$ for $\Bh$ in Example~\ref{pex:lmrp2}. The explanatory gain of $\pE$ for $b$ is $G (\pE, b) = \log \left (\frac{1}{0.5} \right ) + \log \left ( \frac{0.5}{0.09} \right ) = 1 + 2.47 = 3.47$.
\end{example}

Similarly, the notion of explanatory power is defined in the following way:

\begin{definition}[Explanatory Power for Model Reconciling Explanations]
\label{def:expl-pow-lmrp}
Let $\pE = \langle \pe^+, \pe^- \rangle$ be a model reconciling explanation for explanandum $\varphi$ from $\KBa$ for $\Bh$. The explanatory power of $\pE$ for $\varphi$ is defined as $\EP (\pE, \varphi) = G_h (\pE, \varphi) + \gamma \cdot \left (P_h(\pe^+) + P_h(\pe^-) \right)$, where $\gamma \in [0,1]$ is a constant.
\end{definition}

This definition of explanatory power of $\pE = \langle \pe^{+}, \pe^{-} \rangle$ for $\varphi$ assesses, in addition to the explanatory gain of $\pE$, the likelihoods of $\pe^+$ and $\pe^-$, with $\gamma$ parameterizing their relative importance in the overall assessment. 

\begin{example}
\label{pex:lmrp4}
Continuing from Example~\ref{pex:lmrp3}, the explanatory power of $\pE = \langle \{a, \lnot a \vee b \}, \{ \lnot c \vee \lnot a \} \rangle$ for $b$ (for $\gamma = 0.5$) is $\EP (\pE, b) = 3.47 + 0.5 \cdot (0.09 + 0.90) = 3.96$ 
\end{example}

Finally, a preference relation and a most-preferred model reconciling explanation can be defined in the same manner as in Definition~\ref{def:prel} and Definition~\ref{def:mpref}, respectively.

\begin{definition}[Preference Relation for Model Reconciling Explanation]
    Let $\pE_1$ and $\pE_2$ be two model reconciling explanations for explanandum $\varphi$ from knowledge base $\KBa$ for beleif base $\Bh$. $\pE_1$ is preferred over $\pE_2$, denoted $\pE_1 \succeq \pE_2$, if and only if $\EP (\pE_1) \geq \EP (\pE_2)$.
\end{definition}

\begin{definition}[Most-Preferred Model Reconciling Explanation]
Let $\tilde{E}(\varphi)$ denote the set of all model reconciling explanations for explanandum $\varphi$ from knowledge base $\KBa$ for belief base $\Bh$. A model reconciling explanation $\pE^* \in \tilde{E}(\varphi)$ is the most-preferred model reconciling explanation for $\varphi$ if and only if $\pE^* \succeq \pE$ for all $\pE^ \in \tilde{E}(\varphi)$.
\end{definition}

\subsubsection{Motivating Application: Illustrative Example}

Continuing with our motivating example, we will now examine a case where the human's belief base differs from the robot's in a way that leads to divergent expectations about the robot's decision. Particularly, consider a deterministic version of the robot's belief base $\KBa$ as described in Section~\ref{sec:example}, and let us define the human's belief base $\Bh$ with the following key differences:

\begin{align*}
&(\lnot \texttt{crowded}(A), 1) \\
&(\texttt{crowded}(B), 3)
\end{align*}

The human user believes that corridor A is generally unlikely to be crowded and corridor B is very likely crowded---the opposite of the robot's beliefs. Additionally, the human believes that the robot can successfully move through corridors A and B, whether crowded or not. Given their belief base $\Bh$, the expected optimal plan from the human's perspective would be $\pi_h = \langle \texttt{move}(room1, room2, A)_0, \texttt{deliver}_1 \rangle$. When the human observes the robot taking corridor B instead, they would naturally question this decision.

The robot, aware of this discrepancy, can then generate a probabilistic model reconciling explanation $\pE = \langle \pe^+, \pe^- \rangle$ to address $\varphi$:

\begin{enumerate}
     \item[] $\pe^+ = \{\texttt{crowded}(A), \texttt{move}(room1, room2, A)_0 \wedge \texttt{crowded}(A) \rightarrow \texttt{robot}\textrm{-}\texttt{at}(room1)_1\}$
    \item[] $\pe^- = \{\lnot \texttt{crowded}(A) \}$

\begin{itemize}
    \item $P_h(\varphi | \pe^+) = 0.7$ (increased from $P_h(\varphi) = 0.4$)
    \item $P_h(\pe^+ | \neg \pe^-) = 1$ (increased from $P_h(\pe^+) = 0.15$)
    \item Explanatory gain: $G(\pE, \varphi) = 3.04$
    \item Explanatory power: $EP(\pE, \varphi) = 3.22$
\end{itemize}
\end{enumerate}

The explanation essentially communicates: ``I didn't take corridor A because it is very likely crowded, contrary to your belief that it's not crowded. Additionally, when a corridor is crowded, my movement through it will likely fail, causing me to remain in my original location.'' What we see here is that if we focus on both environmental factors (crowded corridor) and action dynamics (movement success), we get an explanation that provides a complete account of the robot's decision-making process, thereby helping the human gain a better understanding of the robot.

\section{Computing Explanations}
\label{sec:comp}

We now describe algorithms for computing explanations. We first review two algorithms proposed in our previous work \cite{vas21} for computing classical (deterministic) monolithic explanations (Definition~\ref{def:expl}) and model reconciling explanations (Definition~\ref{lmrp}), and then show how to extend them to the probabilistic case.

\subsection{Classical Explanations}
\label{sec:comp:clas}

We previously introduced an approach for computing minimum size monolithic explanations for an explanandum $\varphi$ from a knowledge base $\KB$, where $\KB \models \varphi$ \cite{vas21}. The principal idea of this approach is to reduce the problem of computing a monolithic explanation of minimum size to the one of computing a \emph{smallest minimal unsatisfiable set} (SMUS) over an inconsistent knowledge base. 

In particular, notice that, by definition, we have that $\KB \models \varphi$ if and only if $\KB \cup \{\lnot \varphi \}$ is inconsistent. Moreover, in Proposition~\ref{prop:support-mus}, we have already stated the relation between a monolithic explanation and a \emph{minimal unsatisfiable set} (MUS). This suggests that, in order to extract a monolithic explanation, we just need to run an MUS solver over the knowledge base $\KB^s \cup \{ \lnot \varphi^h \}$, where $\KB^s$ and $\varphi^h$ denote that $\KB$ and $\varphi$ are treated as soft and hard constraints, respectively, and then remove $\lnot \varphi$ from the returned MUS.\footnote{Recall that soft constraints may be removed by the MUS solver, while hard constraints will not be removed.} The hitting set duality relating MUSes and \emph{minimal correction sets} (MCSes) (see Lemma~\ref{lem:smus}) is a key aspect for the computation of an SMUS. 

\begin{algorithm*}[t!]
\DontPrintSemicolon
	\KwIn{Knowledge base $\KB$ and explanandum $\varphi$}
	\KwResult{A minimum size monolithic explanation $\epsilon$ for $\varphi$ from $\KB$}

        $\fml{H}\gets\emptyset$\; \label{alg1:initH}
	\While{true}{		
		$seed\gets \texttt{minHS}(\fml{H})$\label{alg1:hit} \tcp*{compute a minimal hitting set}
		$\epsilon \gets \{c_i \;|\;  i \in seed \}$\;\label{alg1:iseed}
		\uIf{\emph{\textbf{not}} \emph{\texttt{SAT}}$(\epsilon \cup \{ \lnot \varphi \})$}{ \label{alg1:stcheck}
			\Return $\epsilon$ \tcp*{minimum size monolithic explanation}
		}
		\Else{            
			$\C \gets \texttt{getMCS}(seed, \KB^s \cup \{ \lnot \varphi^h \})$ \tcp*{compute a minimal correction set} \label{alg1:mcs}
		} 
		$\fml{H}\gets \fml{H}\cup \{ \C \}$\;
	}
	\caption{\texttt{monolithic-explanation}$(\KB, \varphi)$}
	\label{alg:expl}
\end{algorithm*}

\begin{table}[t!]
	\caption{Example of Algorithm \ref{alg:expl} for computing a monolithic explanation of minimum size.}
	\label{table:alg:expl}
 \resizebox{1\linewidth}{!} {
	\begin{tabular}{ | l l l | }
		\hline
		 & $\KB = \{  \; \stackrel{C_1}{a \lor b} , \stackrel{C_2}{\lnot b \lor c} ,  \stackrel{C_3}{\lnot c}, \stackrel{C_4}{\lnot b \lor d} \}$ & \\
		 & $\KB \models a$ & \\[1mm]
		\hline
		 & & \\
		1. & $\fml{H} \gets \emptyset$ &  \\
		2. & $seed \gets \emptyset$ & \# $minHS(\fml{H})$ \\
		3. & $ \emptyset \not \models a$ & \# $SAT(\epsilon \cup \{\lnot  a \})$ \\
		4. & $\C \gets \{C_1\}$ & \# MCS computed on $KB^s \cup  \{\lnot a^h\}$ starting with the seed $seed$ \\
		5. & $\fml{H} \gets \{\{C_1\}\}$ & \\
		6. & $seed \gets \{C_1\}$ & \# $minHS(\fml{H})$ \\
		7. & $\{ a \vee b \} \not \models a$ & \# $SAT(\e \cup \{ \lnot a \})$ \\
		8. & $\fml{C} \gets \{C_2\}$ & \# MCS computed on $KB^s \cup \{\lnot a^h \}$ starting with the seed $seed$ \\
		9. & $\fml{H} \gets \{\{C_1\}, \{C_2\}\}$ & \\
		10. & $seed \gets \{C_1, C_2\}$ & \# $minHS(\fml{H})$ \\
		11. & $ \{a \vee b, \lnot b \vee c \} \not \models a$ & \# $SAT(\e \cup \{ \lnot a \})$ \\
            12. & $\C \gets \{C_3\}$ & \# MCS computed on $KB^s \cup  \{\lnot a^h\}$ starting with the seed $seed$ \\
            13. & $\fml{H} \gets \{\{C_1\},\{C_2\},\{C_3\}\}$ & \\
            14. & $seed \gets \{C_1, C_2, C_3\}$ & \# $minHS(\fml{H})$ \\
		15. & $\{ a \vee b, \lnot b \vee c, \lnot c \} \models a$ & \# $\lnot SAT(\e \cup \{ \lnot a \})$ \\
		16. & $Return$ $ \{a \vee b, \lnot b \vee c, \lnot c \}$ & \# minimum size monolithic explanation for $a$ from $\KB$ \\
		\hline
	\end{tabular}
	}
\end{table}


Algorithm \ref{alg:expl} describes the main steps of our approach. $\fml{H}$ is a collection of sets, where each set corresponds to an MCS on $\KB$. At the beginning, it is initialized with the empty set (line~\ref{alg1:initH}). Each MCS in $\fml{H}$ is represented as the set of the indexes of the formulae in it. $\fml{H}$ stores the MCSes computed so far.
At each step, a minimal hitting set on $\fml{H}$ is computed (line \ref{alg1:hit}). In line \ref{alg1:iseed}, the formulae induced by the computed minimal hitting set is stored in $\epsilon$. Then, $\epsilon \cup \{ \lnot \varphi \}$ is evaluated for satisfiability (line \ref{alg1:stcheck}). If $\epsilon \cup \{ \lnot \varphi \}$ is inconsistent, then $\epsilon$ is a monolithic explanation of minimum size and the algorithm returns $\epsilon$. If instead $\epsilon \cup \{ \lnot \varphi\}$ is consistent, then it means that $\epsilon \not \models \varphi$ and the algorithm continues in line \ref{alg1:mcs}. The computation of an MCS of this kind can be performed via standard MCS procedures~\cite{marques-silva-ijcai13}, using the set of formulae indexed by the \emph{seed} as the starting formula to extend. Since $\varphi$ is set to hard (line \ref{alg1:mcs}), the returned MCS $\C$ is guaranteed to be contained in $\KB$. Due to the hitting set duality relation, we will also have $\epsilon \subseteq \KB$. Finally, notice that the procedure $getMCS$ always reports a new MCS because, by construction, we have $seed \subseteq \KB \setminus \C$. In fact, the $seed$ contains at least one formula for each previously computed MCS and, thus, $seed \cap \C = \emptyset$ (i.e.,~at least one formula for each previously computed MCS is not in $\C$).

Algorithm~\ref{alg:expl} is complete in the sense that eventually a monolithic explanation $\e \subseteq \KB$ of minimum size such that $\e \models \varphi$ will be returned. This can be easily verified by observing
that every time $\e \cup \{\lnot \varphi\}$ is satisfiable, a new MCS is computed. Eventually, all the MCSes will be computed and, from Propositions~\ref{prop:duality} and~\ref{prop:support-mus}, it follows that a minimal hitting set on the collection of all MCSes corresponds to the smallest MUS, and as such, to a monolithic explanation of minimum size. 

Note that deciding whether there exists a monolithic explanation of size less or equal to $k$ is $\Sigma_{2}^{p}$-complete and extracting a smallest monolithic explanation is in $FP^{\Sigma_{2}^{p}}$. This follows directly from the complexity of deciding and computing an SMUS on which Algorithm \ref{alg:expl} is based on \cite{ignatiev-plm15}.



\begin{algorithm}[t!]
\DontPrintSemicolon
 \setcounter{AlgoLine}{0}   
	\KwIn{Knowledge bases $\KBa$ and $\KB_h$ and explanandum $\varphi$ }
	\KwResult{A model reconciling explanation $\fml{E} = \langle \e^+, \e^- \rangle$ for $\varphi$ from $\KBa$ for $\KB_h$}
	$\R \gets\emptyset$\;\label{alg:lmrp:rec}
	$\KBa^h \gets \KBa \cap \KB_h $\label{alg:lmrp:kbh} \;
	$\KBa^s \gets \KBa \setminus \KBa^h$\label{alg:lmrp:kbs} \;
	\If{\emph{\textbf{not}} \emph{\texttt{SAT}}$(\KB_h \cup \KBa)$\label{alg:lmrp:sat-check-1}}{
		$E^- \gets  \texttt{getMCS}((\KB_h \setminus \KBa)^{s} \cup \KBa^{h})$ \tcp*{restore consistency on $\KB_h$} \label{alg:lmrp:preprocessing-1}
		$\KB_{h} \gets \KB_{h} \setminus E^-$ \label{alg:lmrp:preprocessing-2}\;
	}
	\While{true}{\label{alg:lmrp:loop}
		$seed\gets \texttt{minHS}(\R)$\;\label{alg:lmrp:mhs}
		$\e^+ \gets \{c_i \; | \; i \in seed \}$\tcp*{explanation $\e^+$ induced by the seed}
	    \uIf{\emph{\textbf{not}} \emph{\texttt{SAT}}$(\KB_h \cup \e^+ \cup \{\lnot \varphi\})$}{\label{alg:lmrp:sat-check}
            $\e^- \gets \emptyset$ \;
	       \If{\emph{\textbf{not}} \emph{\texttt{SAT}}$(\KB_h \cup \e^+ \cup E^-)$}{\label{alg:lmrp:sat-check-e_minus}
                $\e^- \gets \texttt{getMCS}((\KB_h \cup \e^+)^h \cup (E^-)^s)$ \label{alg:lmrp:e_minus} \; 
           }	      
			\Return $\langle \e^+, \e^- \rangle$ \; \label{alg:lmrp:return}
		}
		\Else{
			$\C \gets \texttt{getMCS}(seed, \KBa^{h} \cup \{ \lnot \varphi^h \} \cup \KBa^s )$\;\label{alg:lmrp:mcs}
			$\R \gets \R \cup \{ \C \}$\;
		}
	}
	\caption{\texttt{model-reconciling-explanation}$(\KBa, \KBa^{h}, \varphi)$}
	\label{alg:lmrp}
\end{algorithm}

\begin{table*}[t!]
	\caption{Example of Algorithm \ref{alg:lmrp} for computing a model reconciling explanation.}
	\label{table:alg:lmrp}

 \resizebox{1\linewidth}{!} {
 \small
	\begin{tabular}{ | l l l | }
		\hline
		 & $\KBa =  \; \{ \stackrel{C_1}{(a \lor b)} , \stackrel{C_2}{(\lnot b \lor c)}, \stackrel{C_3}{\lnot c} , \stackrel{C_4}{(\lnot b \lor d)} , \stackrel{C_5}{\lnot d} \}$ & \multirow{2}{*}{\Bigg\} We have that $\KBa \models a$ and $\KB_h \not \models a$} \\
		 & $\KB_h = \; \{ \stackrel{D_1}{b},  \stackrel{D_2}{\lnot c} \}$ & \\[1mm]
		\hline
		1. & $\fml{R} \gets \emptyset$ &  \\
		2. & $\KBa^h \gets \KBa \cap \KB_h = \{C_3\}$ & \\
		3. & $\KBa^s \gets \KBa \setminus (\KBa \cap \KB_h) = \{C_1, C_2, C_4, C_5\}$ & \\
            4. & $E^- \gets \{D_1 \}$ & \# MCS computed on $(\KB_h \setminus \KBa)^s \cup \KBa^h$ \\
            5. & $\KB_h  \gets \{D_1, D_2\} \setminus \{ D_1 \} = \{D_2 \} $ & \\ 
            6. & $seed \gets \emptyset$ & \# $minHS(\fml{R})$ \\
		7. & $\{ \lnot c \} \not \models a$ & \# $SAT(\KB_h \cup \e^+ \cup \{ \lnot a \})$ \\
		8. & $\fml{C} \gets \{C_1\}$ & \# MCS computed on $\KBa^{h} \cup \KBa^s \cup \{ \lnot a^h \}$ \\
		9. & $\fml{R} \gets \{\{C_1\}\}$ & \\
		10. & $seed \gets \{C_1\}$ & \# $minHS(\fml{R})$ \\
		11. & $ \{ \lnot c, a \vee b\} \not \models a$ & \# $SAT(\KB_h \cup \e^+ \cup \{ \lnot a \})$ \\
		12. & $\fml{C} \gets \{C_2, C_4\}$ & \# MCS computed on $\KBa^{h} \cup \KBa^s \cup \{ \lnot a^h \}$\\
		13. & $\fml{R} \gets \{\{C_1\}, \{C_2, C_4\}\}$ & \\
		14. & $seed \gets \{C_1, C_4\}$ & \# $minHS(\fml{R})$ \\
		15. & $ \{ \lnot c, a \vee b, \lnot b \vee d\} \not \models a$ & \# $SAT(\KB_h \cup \e^+ \cup \{ \lnot a \})$ \\
		16. & $\fml{C} \gets \{C_2, C_5\}$ & \# MCS computed on $\KBa^{h} \cup \KBa^s \cup \{ \lnot a^h \}$ \\
            17. & $\fml{R} \gets \{\{C_1\}, \{C_2, C_4\}, \{C_2, C_5 \}\}$ & \\
            18. & $seed \gets \{C_1, C_2\}$ & \# $minHS(\fml{R})$ \\
            19. & $ \{ \lnot c, a \vee b, \lnot b \vee c\} \models a$ & \# $\lnot SAT(\KB_h \cup \e^+ \cup \{ \lnot a \})$ \\
            20. & $\e^- \gets \{D_1\}$ & \# MCS computed on $(\KB_h \cup \e^+)^h \cup (E^-)^s$ \\
            
		21. & $Return$ $\langle \{C_1, C_2\}, \{ D_1\} \rangle$ & \# model reconciling explanation for $a$ from $\KBa$ for $\KB_h$ \\
		\hline
	\end{tabular}
	}
\end{table*}

\subsubsection{Model Reconciling Explanations}
\label{sec:comp:clas-lmrp}


We have also previously showed how Algorithm~\ref{alg:expl} can be further extended for computing model reconciling explanations for an explanandum $\varphi$ from an agent knowledge base $\KBa$ for a human knowledge base $\KB_h$ \cite{vas21}. However, we only considered the specific task of finding a model reconciling explanation $\e \subseteq \KBa \cup \KB_h$ such that $\KB_h \cup \e \models \varphi$ and $\e \setminus \KB_h$ is of minimum size. Notice that, in general, $\KB_h \cup \e$ might be inconsistent. However, in our approach, we discard this possibility by preprocessing $\KB_h$. In particular, we create a new $\KB_h' \subseteq \KB_h$ by removing a minimal set of formulae in $\KB_h$ that makes $\KB_h \cup \KBa$ inconsistent. The new $\KB_h'$ is such that $\KB_h' \cup \e$ is always consistent.
 
We now modify this approach for computing model reconciling explanations $\fml{E} = \langle \e^+, \e^- \rangle$, where  $\e^+ \subseteq \KBa$ and $\e^- \subseteq \KB_h$, such that $(\KB_h \cup \e^+) \setminus \e^- \models \varphi$. Particularly, in addition to $\e^+$, our approach now computes $\e^-$ as well.

Algorithm \ref{alg:lmrp} describes the pseudocode of our approach. At the beginning of the algorithm, we initialize $\R$ to the null set (line~\ref{alg:lmrp:rec}). $\R$ is used to store the MCSes, which acts as a mediator between $\KBa$ and $\KB_h$. Lines~\ref{alg:lmrp:kbh}-\ref{alg:lmrp:kbs} are used to specify which clauses of $\KBa$ will be treated as hard and soft constraints, respectively. We then check if $\KB_h \cup \KBa$ is inconsistent (line~\ref{alg:lmrp:sat-check-1}). This is important in order to avoid the possibility of finding subsets $\e^+$ that explain why $\KB_h \cup \KBa$ is inconsistent instead of the target explanandum. In case $\KB_h \cup \KBa$ is inconsistent, we preprocess $\KB_h$ by removing from $\KB_h \setminus \KBa$ a minimal set of formulae causing the conflict (i.e.,~an MCS) (lines~\ref{alg:lmrp:preprocessing-1}-\ref{alg:lmrp:preprocessing-2}), where $E^-$ stores the set of potential formulae $\e^-$ to retract. The reconciliation procedure
starts in line \ref{alg:lmrp:loop}. The algorithm proceeds iteratively by computing a minimal hitting set on $\R$ and then testing for satisfiability the formulae $\e^+$ (lines~\ref{alg:lmrp:mhs}-\ref{alg:lmrp:sat-check}). 
The test checks whether adding $\e^+$ to $\KB_h$ is sufficient for entailing $\varphi$. If $\KB_h \cup \e^+ \cup \{\lnot \varphi\}$ is unsatisfiable, then $\KB_h \cup \e^+ \models \varphi$. In that case, the algorithm then checks whether $\KB_h \cup \e^+ \cup E^-$ is inconsistent, and if it is, it computes an MCS $e^-$ on $(\KB_h \cup \e^+)^h \cup (E^-)^s$ (lines \ref{alg:lmrp:sat-check-e_minus}-\ref{alg:lmrp:e_minus}). The model reconciling explanation $\langle \e^+, \e^- \rangle$ is then returned in line \ref{alg:lmrp:return}. Otherwise, the algorithm continues in line \ref{alg:lmrp:mcs}, where a new MCS is computed and added to $\R$.\footnote{Note that the algorithm is complete as it is based on Algorithm~\ref{alg:expl}, which is complete.}

Table~\ref{table:alg:lmrp} shows an example trace of Algorithm~\ref{alg:lmrp}.

\subsection{Probabilistic Explanations}
\label{sec:comp:prob}

We now show how the algorithms described in the previous section can be used for computing probabilistic monolithic explanations (Definition~\ref{def:pexpl}) and probabilistic model reconciling explanations (Definition~\ref{def:plmrp}).

\subsubsection{Monolithic Explanations}
\label{sec:comp:prob-expl}

Consider an explanandum $\varphi$ and a belief base $\B$. First, notice that if we assume that the classical projection of $\B$ entails the explanandum $\varphi$, that is $\B^{\downarrow w} \models \varphi$, then Algorithm~\ref{alg:expl} can directly be applied on $\B^{\downarrow w}$ and $\varphi$.\footnote{Recall that the classical projection of belief base $\B$ is the unweighted version of the set of formulae from $\B$.} In that case, Algorithm \ref{alg:expl} guarantees to find a monolithic explanation with maximum explanatory gain, since we know from Proposition~\ref{prop:explgain} that explanatory gain achieves its maximum value for $\varphi$ when the monolithic explanation entails $\varphi$. Nevertheless, this does not guarantee that the monolithic explanation will be the most-preferred one, that is, the one with the highest explanatory power (Definition~\ref{def:mpref}).

Obviously, a straightforward way of computing a most-preferred monolithic explanation is to use Algorithm~\ref{alg:expl} to enumerate all possible monolithic explanations for $\varphi$, and return the one that has the highest probability, which corresponds to the one with the highest explanatory power. But enumerating through all possible monolithic explanations and computing their probabilities can be computationally prohibitive, as even extracting a smallest monolithic explanation is in $FP^{\Sigma^p_2}$ \cite{ignatiev-plm15} and computing the probability of a formula is $\#P$-complete~\cite{roth1996hardness,chavira2008probabilistic}. We can, however, account for this high computational complexity by seeking for a monolithic 
explanation that is guaranteed to have a probability above a certain threshold.

First, the following lemma notes that for all possible monolithic explanations $\pe$ for explanandum $\varphi$, the following upper and lower probability bounds hold:

\begin{lemma}
\label{lem:k-bound}
    Let $\tilde{E}(\varphi)$ be the set of all monolithic explanations for explanandum $\varphi$ from belief base $\B$, where $\pe \models \varphi$ for all $\pe \in \tilde{E}(\varphi)$, and let $\omega_1$ be the most-probable world in which $\varphi$ is true. Then, for any $\pe \in \tilde{E}(\varphi)$, it holds that $P(\omega_1)\leq P(\pe) \leq P(\varphi)$.
\end{lemma}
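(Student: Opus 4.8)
The plan is to establish the two inequalities $P(\omega_1) \le P(\pe)$ and $P(\pe) \le P(\varphi)$ separately, working throughout from the Degree of Belief definition $P(\psi) = \sum_{\omega \models \psi} P(\omega)$, where for a set of formulae $\pe$ the expression $\omega \models \pe$ means that $\omega$ satisfies every formula in $\pe$ (equivalently, their conjunction). The two facts I would lean on are monotonicity of this sum under logical entailment and non-negativity of each $P(\omega)$.

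For the upper bound $P(\pe) \le P(\varphi)$, I would argue directly from $\pe \models \varphi$. Since every world satisfying all of $\pe$ also satisfies $\varphi$, we have the inclusion $\{\omega \in \Omega : \omega \models \pe\} \subseteq \{\omega \in \Omega : \omega \models \varphi\}$. Summing the non-negative quantities $P(\omega)$ over the smaller index set then yields $P(\pe) = \sum_{\omega \models \pe} P(\omega) \le \sum_{\omega \models \varphi} P(\omega) = P(\varphi)$. This direction is routine and needs nothing beyond the definitions.

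For the lower bound $P(\omega_1) \le P(\pe)$, the route I would take is to show that the single world $\omega_1$ already satisfies $\pe$, i.e. $\omega_1 \models \pe$. Once this is in hand the bound is immediate: $\omega_1$ is then one of the summands in $P(\pe) = \sum_{\omega \models \pe} P(\omega)$, and since all summands are non-negative, $P(\pe) \ge P(\omega_1)$. To establish $\omega_1 \models \pe$ I would combine the fact that $\omega_1$ is the most probable world among all $\varphi$-worlds with the hypothesis $\pe \models \varphi$: any world satisfying $\pe$ is automatically a $\varphi$-world, hence a legitimate competitor to $\omega_1$. I would then try to show that some $\varphi$-world satisfying all of $\pe$ is at least as probable as $\omega_1$, exploiting non-negativity of the weights $w_i$—intuitively, satisfying the positively weighted formulae of $\pe$ can only add to a world's unnormalized weight $\exp\!\big(\sum_i w_i\, \I(\omega,\phi_i)\big)$—so that if $\omega_1$ failed to satisfy some $\phi \in \pe$ we could exhibit an equally-or-more-probable $\varphi$-world satisfying $\pe$, contradicting the maximality of $\omega_1$.

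The main obstacle is precisely this last claim, $\omega_1 \models \pe$. The delicate point is that ``repairing'' $\omega_1$ to satisfy a violated $\phi \in \pe$ may flip the truth values of other formulae of $\B$ that share variables with $\phi$; if some of those are thereby violated, the net change in unnormalized weight need not be non-negative, so the repaired world need not dominate $\omega_1$. Closing this gap seems to require genuine use of the hypothesis that \emph{every} member of $\tilde{E}(\varphi)$ entails $\varphi$, and quite possibly an additional structural restriction on $\B$ (for instance that its formulae be clauses, or that the relevant fragment be consistent) so that satisfying $\pe$ is genuinely weight-improving among $\varphi$-worlds. I would therefore concentrate the proof effort here, isolating the minimal condition under which ``the most probable $\varphi$-world satisfies every entailing explanation'' can be made rigorous, and treat the upper bound as a short preliminary.
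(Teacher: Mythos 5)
Your upper bound is exactly the paper's argument and is complete: from $\pe \models \varphi$ you get $\{\omega : \omega \models \pe\} \subseteq \{\omega : \omega \models \varphi\}$, and summing non-negative world probabilities over the smaller set gives $P(\pe) \le P(\varphi)$. For the lower bound you reduce everything to the claim $\omega_1 \models \pe$ and then stop, so as a proof the proposal is incomplete. You should know, however, that the paper's own proof does no better at this exact point: it asserts that $\omega_1$, being ``the world where the highest number of formulae from $\B$ are satisfied,'' must satisfy every explanation, and offers no further argument. That assertion is a non sequitur (and ``highest number'' should in any case be ``highest total weight''), so the obstacle you ran into is not a defect of your route relative to the paper's --- it is the same unproved step.

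Moreover, your instinct that this step cannot be closed without an extra structural hypothesis is correct. Take $\B = \{(a \wedge b,\, 1),\ (\lnot a,\, 2)\}$ and $\varphi = b$. The four worlds have unnormalized weights $e$ (for $a{=}T,b{=}T$), $1$ (for $a{=}T,b{=}F$), and $e^2$, $e^2$ (for the two worlds with $a{=}F$), so $Z = e + 1 + 2e^2$ and $P(b) = (e+e^2)/Z \approx 0.55$. The only monolithic explanation is $\pe = \{a \wedge b\}$, since $P(b \mid a \wedge b) = 1 > P(b)$ while $P(b \mid \lnot a) = 1/2 < P(b)$ and $\{a \wedge b, \lnot a\}$ is inconsistent; in particular every member of $\tilde{E}(b)$ entails $b$, so the lemma's hypothesis holds. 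Yet the most probable $b$-world is $\omega_1 = \{a \mapsto F,\, b \mapsto T\}$, which falsifies $a \wedge b$, and $P(\omega_1) = e^2/Z \approx 0.40 > e/Z \approx 0.15 = P(\pe)$: the lower bound, hence the lemma as stated, fails. The culprit is that $\B^{\downarrow w}$ is inconsistent, and the missing hypothesis is exactly the one you floated: if $\B^{\downarrow w}$ is consistent (weights being strictly positive), then since some $\pe \subseteq \B^{\downarrow w}$ entails $\varphi$, monotonicity gives $\B^{\downarrow w} \models \varphi$, so every world satisfying all of $\B^{\downarrow w}$ is a $\varphi$-world; such worlds attain the strictly maximal unnormalized weight $\exp\left(\sum_i w_i\right)$, so $\omega_1$ must be one of them, hence $\omega_1 \models \pe$ for every $\pe \subseteq \B^{\downarrow w}$, and your lower-bound argument then goes through verbatim. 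This consistency assumption is in fact what the paper relies on implicitly where the lemma is applied: its experiments note that each encoded knowledge base is consistent, which is precisely what makes all formulae true in the most probable world of the explanandum and collapses {\sc alg3} at $\hat{k}=1$ to {\sc alg1}.
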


\begin{proof}
    For the upper probability bound, since we assume that for all $\pe \in \tilde{E}(\varphi)$, $\pe \models \varphi$, then it must hold that for all $\pe \in \tilde{E}(\varphi)$, the worlds where $\pe$ is true are subsumed by the worlds where $\varphi$ is true (entailment property). This implies that for any $\pe \in \tilde{E}(\varphi)$, $P(\pe) \leq P(\varphi)$.
    
    For the lower bound, since $\omega_1$ is the most-probable world of $\varphi$, that is, the world where the highest number of formulae from $\B$ are satisfied, then all monolithic explanations for $\varphi$ must be true in $\omega_1$ (i.e.,~$\omega_1 \models \pe$). As such, for any $\pe \in \tilde{E}(\varphi)$, $P(\pe) \geq P(\omega_1)$.
\end{proof}

However, some monolithic explanations may have a higher lower probability bound. Formally, we call such explanations \textit{$k$-bounded monolithic explanations}:


\begin{definition}[$k$-Bounded Monolithic Explanation]
\label{def:k-bound}
    Let $\tilde{E}(\varphi)$ be the set of all monolithic explanations for explanandum $\varphi$ from belief base $\B$. Let $\Omega(\varphi) = \{\omega_1, \ldots, \omega_n \}$ be the set of possible worlds in which $\varphi$ is true, where $P(\omega_1) \geq P(\omega_2) \geq \ldots \geq P(\omega_n)$. Also let $I_k = \overset{k}{ \underset{i=1}{{\mathlarger{\bigcap}}}} \{ \phi \: | \: \phi \in \B^{\downarrow w}, \omega_i \models \phi \}$ be the intersection of formulae that are true in worlds $\omega_1$ to $\omega_k$.
    We say that $\pe \in \tilde{E}(\varphi)$ is a $k$-bounded monolithic explanation for $\varphi$ from $\B$, with lower bound $P(\pe) \geq \overset{k}{ \underset{i=1}{{\mathlarger{\sum}}}} P(\omega_i)$, if and only if $\pe \subseteq I_k$.
\end{definition}

\begin{example}
\label{cex:1}
Consider the belief base $\B = \{(a, 1), (\lnot a \vee b, 3), (c, 2), (\lnot c \vee b, 1)\}$ and explanandum $b$. The two monolithic explanations for $b$ from $\B$ that entail $b$ are $\pe_1 = \{a, \lnot a \vee b \}$ and $\pe_2 = \{c, \lnot c \vee b \}$, where $P(\pe_1) = 0.64$ and $P(\pe_2) = 0.77$. Notice that there are four possible worlds in which $b$ is true: $\omega_1= \{a = T, b=T, c=T \},$ $\omega_2 = \{a = F, b=T, c=T \}$, $\omega_3 = \{a = T, b=T, c=F \}$, and $\omega_4 = \{a = F, b=T, c=F \}$, where $P(\omega_1) = 0.57$, $P(\omega_2) = 0.20$, $P(\omega_3) = 0.07$, and $P(\omega_4) = 0.02$. The maximum number of intersections that entail $b$ is $k=2$ (i.e.,~$I_2 = \{\lnot a \vee b, c, \lnot c \vee b \}$). Indeed, $\pe_2 \subseteq I_2$ and $P(\pe_2) = 0.77 = P(\omega_1) + P(\omega_2)$. Finally, notice how $\pe_2$ is also the most-preferred monolithic explanation for $b$ from $\B$; for $\gamma = 0.5$, $\EP(\pe_2, b) = 0.57 > \EP(\pe_1, b) = 0.50$.
\end{example}

\begin{proposition}
\label{prop:bexpl}
Let $\B$ be a belief base and $\varphi$ an explanandum. A $1$-bounded monolithic explanation $\pe$ for $\varphi$ from $\B$ always exists.
\end{proposition}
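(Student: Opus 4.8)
The plan is to reduce the statement to a single structural fact: the most-probable $\varphi$-world $\omega_1$ appearing in Definition~\ref{def:k-bound} actually satisfies \emph{every} formula of the classical projection $\B^{\downarrow w}$. Once this is established, the index set $I_1 = \{\phi \in \B^{\downarrow w} \mid \omega_1 \models \phi\}$ coincides with all of $\B^{\downarrow w}$, and since we are operating under the standing assumption of this subsection that $\B^{\downarrow w} \models \varphi$, we get $I_1 \models \varphi$ immediately, which will hand us a monolithic explanation sitting inside $I_1$.

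Concretely, I would proceed in three steps. First, I would invoke consistency of $\B^{\downarrow w}$ (a well-formed knowledge base) to obtain a world $\omega^\star$ satisfying all of its formulae. Second, because the induced distribution is log-linear, $P_\B(\omega) \propto \exp(\sum_i w_i\, \I(\omega, \phi_i))$ with strictly positive weights $w_i$, the world $\omega^\star$ attains the maximal exponent $\sum_i w_i$ and is therefore a globally most-probable world, with $P(\omega^\star) > P(\omega)$ for every $\omega$ that falsifies at least one formula of $\B$. Third, since $\B^{\downarrow w} \models \varphi$ we have $\omega^\star \models \varphi$, so $\omega^\star \in \Omega(\varphi)$; being globally most probable, it is \emph{a fortiori} a most-probable element of $\Omega(\varphi)$, and any most-probable $\varphi$-world must share its maximal weight and hence also satisfy all of $\B^{\downarrow w}$. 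We may thus take $\omega_1 = \omega^\star$, giving $I_1 = \B^{\downarrow w}$ and therefore $I_1 \models \varphi$.

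Finally, from $I_1 \models \varphi$ I would extract a monolithic explanation $\pe \subseteq I_1$ (for instance a minimal entailing subset of $I_1$, which lies in $\tilde{E}(\varphi)$ since $P(\varphi \mid \pe) = 1 > P(\varphi)$ whenever $\varphi$ is non-tautological); by Definition~\ref{def:k-bound} such a $\pe$ is precisely a $1$-bounded monolithic explanation, with the bound $P(\pe) \geq P(\omega_1)$ matching Lemma~\ref{lem:k-bound}. The one delicate point, and the step I would argue most carefully, is the claim that the most-probable $\varphi$-world satisfies \emph{all} of $\B^{\downarrow w}$: this is exactly where consistency of $\B^{\downarrow w}$ (so that an all-satisfying world exists) and positivity of the weights (so that satisfying every formula strictly maximizes probability) are both required. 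Without consistency the most-probable $\varphi$-world could satisfy only a proper subset $S \subsetneq \B^{\downarrow w}$, and then the inference $I_1 = S \models \varphi$ would no longer be automatic, so I would flag these as the governing hypotheses of the argument.
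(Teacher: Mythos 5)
Your proof is correct and follows essentially the same route as the paper: the paper derives the proposition directly from Lemma~\ref{lem:k-bound}, whose lower-bound argument rests on exactly the structural fact you establish---that the most-probable $\varphi$-world satisfies every formula of $\B^{\downarrow w}$, so that every monolithic explanation (and in particular a minimal entailing subset of $I_1 = \B^{\downarrow w}$) lies inside $I_1$ and is therefore $1$-bounded. Your write-up is in fact more careful than the paper's one-line citation, since it makes explicit the hypotheses (consistency of $\B^{\downarrow w}$, strictly positive weights, and $\B^{\downarrow w} \models \varphi$) on which both Lemma~\ref{lem:k-bound} and the proposition tacitly depend.
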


\begin{proof}
The proof follows directly from Lemma~2.
\end{proof}

Interestingly, there also exists a maximal $k$-bounded monolithic explanation.

\begin{corollary}
\label{cor-k-bound}
    If $I_k \models \varphi$ and $I_{k+1} \not \models \varphi$, then $\exists \pe \subseteq I_k$ with maximal lower bound $P(\pe) \geq P(I_k)$
\end{corollary}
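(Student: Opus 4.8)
The plan is to prove the statement in three parts: the \emph{existence} of a monolithic explanation contained in $I_k$, the claimed \emph{lower bound} on its probability, and the \emph{maximality} of that bound. First I would establish existence. Since $I_k \models \varphi$ and $I_k \subseteq \B^{\downarrow w}$, the set $I_k \cup \{\lnot \varphi\}$ is inconsistent, so by Proposition~\ref{prop:support-mus} it contains a partial MUS; equivalently, there is a minimal subset $\pe \subseteq I_k$ with $\pe \models \varphi$, which is exactly a monolithic explanation in the sense of Definition~\ref{def:expl}. Because $\pe \models \varphi$ forces $P(\varphi \mid \pe) = 1 > P(\varphi)$ (using $P(\varphi) < 1$, which is implicit whenever a monolithic explanation exists, cf.~Proposition~\ref{prop:explgain}), the set $\pe$ also qualifies as a probabilistic monolithic explanation, so $\pe \in \tilde{E}(\varphi)$.

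Next I would derive the probability bound. Since $\pe \subseteq I_k$, the conjunction of the formulae in $I_k$ entails the conjunction of those in $\pe$, so every world satisfying all of $I_k$ also satisfies $\pe$. Summing $P$ over these worlds gives $P(\pe) \geq P(I_k)$, the stated lower bound. I would also note that this refines the generic bound of Definition~\ref{def:k-bound}: the worlds $\omega_1, \ldots, \omega_k$ all satisfy $I_k$ by construction, hence $P(I_k) \geq \sum_{i=1}^{k} P(\omega_i)$, and therefore $P(\pe) \geq P(I_k) \geq \sum_{i=1}^{k} P(\omega_i)$, consistent with Lemma~\ref{lem:k-bound}.

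Finally, I would argue maximality. The family $\{I_j\}$ is nonincreasing in $j$, since each $I_{j+1}$ intersects one additional world-satisfaction set, which makes both $P(I_j)$ and $\sum_{i=1}^{j} P(\omega_i)$ nondecreasing in $j$; thus a larger admissible index always yields a weakly tighter lower bound. The existence of a $j$-bounded monolithic explanation of $\varphi$ is equivalent to $I_j \models \varphi$, and the hypothesis $I_{k+1} \not\models \varphi$ rules out any $(k+1)$-bounded explanation. Hence $k$ is the largest index admitting a bounded explanation that entails $\varphi$, and $\pe \subseteq I_k$ attains the maximal lower bound over all $k$-bounded monolithic explanations. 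I expect the maximality step to be the main obstacle: the first two parts are essentially immediate from the earlier results, whereas maximality requires carefully combining the monotone shrinking of $I_j$ with the monotone growth of the associated probability bound to conclude that index $k$ simultaneously dominates every $j \leq k$ and cannot be surpassed.
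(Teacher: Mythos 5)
Your proof is correct and follows essentially the same route as the paper's: identify $k$ as the maximal index with $I_k \models \varphi$ (using that the sets $I_j$ shrink as $j$ grows), exhibit an explanation $\pe \subseteq I_k$, and chain the inequalities $P(\pe) \geq P(I_k) \geq \sum_{i=1}^{k} P(\omega_i)$. Your existence step—extracting a minimal entailing subset of $I_k$ via Proposition~\ref{prop:support-mus} and checking it qualifies as a probabilistic explanation—is in fact more rigorous than the paper's, which simply asserts that some $\pe \in \tilde{E}(\varphi)$ lies inside $I_k$; otherwise the two arguments coincide.
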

\begin{proof}
    First, notice that if $I_k \models \varphi$ and $I_{k+1} \not \models \varphi$, then $I_{k+j} \not \models \varphi$ for all $j=1,\ldots, n-k$. As such, $k$ is the maximum number of intersections (from $\omega_1$ to $\omega_k$) such that $I_k \models \varphi$. Thus, since $\pe \models \varphi$ for all $\pe \in \tilde{E}(\varphi)$, it must be the case that there exists at least one $\pe$ such that $\pe \subseteq I_k$, from which we know that $P(I_k) \leq P(\pe)$. Moreover, as $I_k$ is the set of formulae that are true in worlds $\omega_1$ to $\omega_k$, its probability must be at least equal to the sum of the probabilities of these worlds (i.e.,~$P(I_k) \geq \overset{k}{ \underset{i=1}{{\mathlarger{\sum}}}} P(\omega_i)$). Therefore, $P(\pe) \geq P(I_k) \geq \overset{k}{ \underset{i=1}{{\mathlarger{\sum}}}} P(\omega_i)$, meaning that the probability of $\pe$ has a maximal lower bound by the top $k$ most-probable worlds of~$\varphi$.
\end{proof}


\noindent The utility of a $k$-bounded monolithic explanation in computing probabilistic monolithic explanations can be described as follows. If we take the top $k$ most-probable worlds in which the explanandum $\varphi$ is true, then we can prune the search space of possible monolithic explanations by taking the intersection of those worlds and checking if it entails $\varphi$---if it does, then we know that at least one monolithic explanation must be true in that world with probability at least equal to the sum of the probabilities of these top $k$ worlds. Building on this, we now present an algorithm for computing $k$-bounded monolithic explanations for $\varphi$ from $\B$, where we use Algorithm~\ref{alg:expl} as our core monolithic explanation generation engine.

\begin{algorithm*}[t!]
\DontPrintSemicolon
 \setcounter{AlgoLine}{0}   

	\KwIn{Belief base $\B$, explanandum $\varphi$, and user-defined parameter $\hat{k}$}
	\KwResult{A $k$-bounded monolithic explanation $\pe$ for $\varphi$ from $\B$ for some $k \leq \hat{k}$}
 
        $k \gets \hat{k}$ \;
         $\Omega_\varphi  \gets \texttt{getTopKWorlds}(\B \cup \{(\varphi, \infty)\}, k)$ \tcp*{find candidate set of formulae } \label{alg:pexpl:wsat}
         \While{true}{ \label{alg:pexpl:main}         
         \tcp{get intersecting formulae from top $k$ worlds of $\varphi$}
            $I_k \gets \texttt{getIntersections}(\B^{\downarrow w}, \Omega_\varphi, k)$ \; \label{alg:pexpl:Ik}
            \uIf{\emph{\textbf{not}} \emph{\texttt{SAT}}$(I_k \cup \{ \lnot \varphi \})$}{ \label{alg:pexpl:sat}
             $\pe \gets$ \texttt{monolithic-explanation}$(I_k, \varphi)$ \; \label{alg:pexpl:alg1}
                \Return $\pe$ \; \label{alg:pexpl:ret}
            
            }
            \Else{
                $k \gets k-1$ \label{alg:pexpl:k}
            
            }

         }
	\caption{\texttt{probabilistic-monolithic-explanation}$(\B, \varphi, \hat{k})$}
	\label{alg:pexpl}
\end{algorithm*}

Algorithm~\ref{alg:pexpl} describes the main steps of our approach. The important factor is the user-defined parameter $\hat{k}$, which dictates the number of worlds of $\varphi$ to be considered. It is an integer with range $1 \leq \hat{k} \leq |\Omega(\varphi)|$, where $\Omega(\varphi)$ is the set of all possible worlds of $\varphi$. Intuitively, the larger the $\hat{k}$, the more exhaustive the search will be as more worlds will be considered. The algorithm starts in line~1 with $k$ taking the user-defined value $\hat{k}$, and then proceeds to line~\ref{alg:pexpl:wsat}, where it uses a weighted MaxSAT solver to find the top $k$ most-probable worlds of $\varphi$. Note that $(\varphi, \infty)$ denotes that $\varphi$ is added to the solver as a hard constraint. The main loop of the algorithm starts in line~\ref{alg:pexpl:main}. In line~\ref{alg:pexpl:Ik}, \texttt{getIntersections} extracts the set of intersecting formulae $I_k$ from $\B^{\downarrow w}$ that are true in worlds $\omega_1$ to $\omega_k$. If $I_k \models \varphi$, then we know that a monolithic explanation is in $I_k$ and the algorithm proceeds to use Algorithm~\ref{alg:expl} with $I_k$ and $\varphi$ as inputs to compute and return a monolithic explanation (lines~\ref{alg:pexpl:sat}-\ref{alg:pexpl:ret}). Otherwise, the algorithm discounts $k$ by $1$ and repeats the process until a suitable $k$ is found.

Algorithm~\ref{alg:pexpl} is complete in the sense that, eventually, a monolithic explanation will be returned.

\begin{theorem}
    Algorithm~\ref{alg:pexpl} is guaranteed to terminate with a solution.
\end{theorem}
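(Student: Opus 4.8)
The plan is to establish termination from two observations: the loop's counter $k$ strictly decreases on every non-returning pass, and the loop is forced to return once $k$ reaches $1$. Since $k$ is initialized to $\hat{k}$ (with $1 \le \hat{k} \le |\Omega(\varphi)|$) and is decremented by exactly one in the \textbf{else} branch on line~\ref{alg:pexpl:k}, I would first note that after at most $\hat{k}-1$ unsuccessful passes we reach $k=1$. The algorithm returns precisely in the pass where the test on line~\ref{alg:pexpl:sat} finds $I_k \cup \{\lnot \varphi\}$ unsatisfiable, i.e.\ exactly when $I_k \models \varphi$. So I would reduce the whole statement to exhibiting some reachable $k \ge 1$ for which $I_k \models \varphi$ holds.

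The crux—and the step I expect to carry the real content—is to show $I_1 \models \varphi$, which is what stops the counter from ever dropping below $1$. I would obtain this directly from Proposition~\ref{prop:bexpl}: a $1$-bounded monolithic explanation $\pe$ for $\varphi$ from $\B$ always exists. By Definition~\ref{def:k-bound} such a $\pe$ satisfies $\pe \subseteq I_1$, and as a member of $\tilde{E}(\varphi)$ in the sense of Lemma~\ref{lem:k-bound} it entails $\varphi$; monotonicity of classical entailment then upgrades $\pe \models \varphi$ to $I_1 \models \varphi$. As a sanity check I would also sketch the direct route: under the assumption that $\B^{\downarrow w} \models \varphi$ with $\B^{\downarrow w}$ consistent, any full model of $\B^{\downarrow w}$ both satisfies $\varphi$ and attains the maximal weighted sum $\sum_{i=1}^{n} w_i$, so it is a most-probable $\varphi$-world; taking $\omega_1$ to be such a world gives $I_1 = \B^{\downarrow w}$ and hence $I_1 \models \varphi$. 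Either way, the satisfiability check succeeds no later than the pass with $k=1$, so the loop never decrements $k$ to $0$.

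Finally I would verify that each pass, and the returning pass in particular, completes in finite time: \texttt{getTopKWorlds} (line~\ref{alg:pexpl:wsat}) halts; \texttt{getIntersections} (line~\ref{alg:pexpl:Ik}) is a finite set operation over $\B^{\downarrow w}$ and the fixed list of worlds; the satisfiability test halts; and in the returning pass the call \texttt{monolithic-explanation}$(I_k,\varphi)$ (line~\ref{alg:pexpl:alg1}) halts because Algorithm~\ref{alg:expl} is complete and always returns a minimum-size monolithic explanation. Combining these, the main loop runs at most $\hat{k}$ times and every subroutine terminates, so the algorithm halts and returns a monolithic explanation for $\varphi$ from $\B$. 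The only genuine obstacle is the entailment $I_1 \models \varphi$; everything else is bookkeeping about finiteness and the monotone decrease of $k$.
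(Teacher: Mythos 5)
Your proposal is correct and follows essentially the same route as the paper's proof: both reduce termination to the worst case $k=1$, use Lemma~\ref{lem:k-bound} (you via its consequence, Proposition~\ref{prop:bexpl}) to conclude that $I_1$ contains a monolithic explanation and hence $I_1 \models \varphi$, and then invoke the completeness of Algorithm~\ref{alg:expl} on $I_1$. Your explicit treatment of the monotonicity step $\pe \subseteq I_1,\ \pe \models \varphi \Rightarrow I_1 \models \varphi$ and of the finiteness of each subroutine merely spells out what the paper leaves implicit.
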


\begin{proof}
 The proof rests on the fact that, in the worst case, the parameter $k$ will reach a value of $1$. This will then correspond to the most-probable world of $\varphi$, which entails all possible monolithic explanations for $\varphi$. From Lemma~2, we know that the most-probable world of $\varphi$ entails all possible monolithic explanations for $\varphi$, that is, for any $\tilde{E}(\varphi)$, $\omega_1 \models \pe$, and $\pe \subseteq I_1$. Therefore, as Algorithm \ref{alg:pexpl} uses $I_1$ as an input to Algorithm~\ref{alg:expl}, which is guaranteed to return a solution, the algorithm is also guaranteed to terminate with a solution.
\end{proof}

\begin{theorem}
    Algorithm~\ref{alg:pexpl} is guaranteed to return a maximal $k$-bounded monolithic explanation if the user-defined parameter $\hat{k}$ is initialized to $|\Omega(\varphi)|$.
\end{theorem}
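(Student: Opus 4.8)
The plan is to show that, with $\hat{k}$ initialized to $n := |\Omega(\varphi)|$, the descending loop in Algorithm~\ref{alg:pexpl} performs a linear search that halts at exactly the largest index $k^\ast$ for which $I_{k^\ast} \models \varphi$, and then to invoke Corollary~\ref{cor-k-bound} at that index. First I would record that when $\hat{k}=n$, the call \texttt{getTopKWorlds} on line~\ref{alg:pexpl:wsat} returns the complete list $\omega_1,\dots,\omega_n$ of all $\varphi$-worlds ordered by nonincreasing probability, and that this list is computed once before the loop. Consequently, for every value taken by the loop counter $k$, the call \texttt{getIntersections} produces precisely the set $I_k$ of Definition~\ref{def:k-bound}, namely the formulae of $\B^{\downarrow w}$ true in the top $k$ worlds; in particular the single precomputation at $k=n$ already supplies the correct $I_k$ for every smaller $k$ visited later.

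The crux is a monotonicity argument. Since $I_1 \supseteq I_2 \supseteq \cdots \supseteq I_n$ (intersecting over more worlds can only shrink the set), the predicate ``$I_k \models \varphi$'' is downward closed in $k$: if $I_{k+1}\models\varphi$ then, as $I_k \supseteq I_{k+1}$, also $I_k \models\varphi$. Hence the set of indices satisfying it is an initial segment $\{1,\dots,k^\ast\}$, and it is nonempty because $I_1 \models \varphi$. The latter follows from Lemma~\ref{lem:k-bound} and Proposition~\ref{prop:bexpl}: $\tilde{E}(\varphi)$ is nonempty, every $\pe \in \tilde{E}(\varphi)$ satisfies $\omega_1 \models \pe$ so that $\pe \subseteq I_1$, and $\pe \models \varphi$, whence $I_1 \models \varphi$ by monotonicity of entailment. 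The loop begins at $k=n$, tests $I_k \models \varphi$ through the satisfiability check on line~\ref{alg:pexpl:sat}, and decrements on failure; it therefore visits $n, n-1, \dots$ and returns at the first (largest) index passing the test, which is exactly $k^\ast$. By construction every $k' \in \{k^\ast+1,\dots,n\}$ failed the test, so $I_{k^\ast+1}\not\models\varphi$ whenever $k^\ast < n$, matching the hypotheses of Corollary~\ref{cor-k-bound}; termination of the loop itself is inherited from the preceding theorem.

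To conclude, at termination Algorithm~\ref{alg:expl} is invoked on $(I_{k^\ast},\varphi)$ with $I_{k^\ast}\models\varphi$, and returns a minimum-size subset $\pe \subseteq I_{k^\ast}$ with $\pe \models \varphi$ no proper subset of which entails $\varphi$. Minimality in Definition~\ref{def:expl} is an internal property of $\pe$ and does not depend on the ambient base, so $\pe$ is a genuine monolithic explanation with $\pe \subseteq I_{k^\ast}$, i.e.\ a $k^\ast$-bounded monolithic explanation with lower bound $P(\pe) \ge \sum_{i=1}^{k^\ast} P(\omega_i)$, sharpened by Corollary~\ref{cor-k-bound} to $P(\pe) \ge P(I_{k^\ast})$. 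Because $k^\ast$ is the maximal index with $I_k \models \varphi$, this lower bound is the tightest attainable, so $\pe$ is the \emph{maximal} $k$-bounded monolithic explanation.

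The main obstacle I anticipate is making the monotonicity step airtight, since the correctness of the single descending pass rests entirely on the downward closure of ``$I_k \models \varphi$'': without the inclusion chain $I_{k+1}\subseteq I_k$ the search could stop prematurely or skip the true maximum. Secondary points to nail down, but routine, are that \texttt{getIntersections} reuses the already-sorted $\Omega_\varphi$ rather than recomputing the worlds for each $k$, and that the $\hat k = n$ initialization is what guarantees the search begins above $k^\ast$ (a smaller $\hat k$ would only certify a $k$-bounded explanation with $k \le \hat k$, not the maximal one).
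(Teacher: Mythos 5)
Your proof is correct and takes essentially the same route as the paper's: a descending search from $\hat{k}=|\Omega(\varphi)|$ that halts at the largest $k$ with $I_k \models \varphi$ (the previous, failed iteration certifying $I_{k+1} \not\models \varphi$), followed by an appeal to Corollary~\ref{cor-k-bound}. The extra details you supply—the inclusion chain $I_1 \supseteq \cdots \supseteq I_n$ with downward closure of the entailment test, the nonemptiness argument via $I_1$, and the check that Algorithm~\ref{alg:expl}'s output on $I_{k^\ast}$ is still a genuine monolithic explanation from $\B$—merely make explicit steps the paper's shorter proof leaves implicit.
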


\begin{proof}
    First, note that if the user-defined parameter is initialized to $\hat{k} = |\Omega(\varphi)|$, then Algorithm~\ref{alg:pexpl} will perform an exhaustive and iterative search, starting from $k=|\Omega(\varphi)|$, to find $I_k$, such that $I_k \models \varphi$, and use it in Algorithm~\ref{alg:expl}. Now, as the algorithm discounts $k$ by 1 at each new iteration, eventually it will be the case that $I_k \models \varphi$ and $I_{k+1} \not \models \varphi$. From Corollary~\ref{cor-k-bound}, we then know that $\exists \pe \subseteq I_k$ such that $P(\pe) \geq P(I_k) \geq \overset{k}{ \underset{i=1}{{\mathlarger{\sum}}}} P(\omega_i)$, which means that $\pe$ corresponds to a $k$-bounded monolithic explanation. Therefore, the algorithm is guaranteed to return a maximal $k$-bounded monolithic explanation for $\varphi$.
\end{proof}

\subsubsection{Model Reconciling Explanations}
\label{sec:comp:prob-lmrp}

We now move on to the case of computing probabilistic model reconciling explanations $\pE = \langle \pe^+, \pe^- \rangle$ for an explanandum $\varphi$ from an agent knowledge base $\KBa$ for a human belief base $\Bh$. Similarly to what we described in Section~\ref{sec:comp:prob-expl}, Algorithm~\ref{alg:lmrp} can directly be used on $\KBa$ and $\Bh^{\downarrow w}$ for computing model reconciling explanations. Additionally, the concept of a $k$-bounded explanation (Definition~\ref{def:k-bound}) can also be used to guarantee a lower bound on the probability of $\pe^+$.

Algorithm~\ref{alg:plmrp} shows the pseudocode of our approach. The initial computational steps are similar to those in Algorithm~\ref{alg:pexpl}, with the exception that $\KBa$ is now also considered in the computation of the most-probable worlds of the $\varphi$. Specifically, in line~\ref{alg:plmrp:Ba}, $\KBa$ is converted into a belief base $\Ba$ where each formula is given a weight that is larger than the sum of weights of $\Bh$. This is to enforce these formulae to be true in the worlds of the explanandum $\varphi$. Then, $\Ba$ is used in conjunction with $\Bh$ to compute the top $k$ most-probable worlds of $\varphi$ (line \ref{alg:plmrp:wsat}). The algorithm proceeds in line \ref{alg:plmrp:Ik} to extract formulae from $\KBa$ that are true in the first $k$ intersections of the worlds of $\varphi$. If they entail $\varphi$, the algorithm then proceeds to compute a model reconciling explanation by invoking Algorithm~\ref{alg:lmrp} (lines \ref{alg:plmrp:SAT}-\ref{alg:plmrp:lmrp}). Otherwise, the algorithm continues by discounting $k$ by $1$ and repeats the process.

Note that Algorithm~\ref{alg:plmrp} is complete and correct as it is based on Algorithms~\ref{alg:lmrp}~and~\ref{alg:pexpl}, which are complete and correct.


\begin{algorithm}[t!]
\DontPrintSemicolon
 \setcounter{AlgoLine}{0}   
	\KwIn{Knowledge base $\KBa$, belief base $\Bh$, explanandum $\varphi$, and user-defined parameter $\hat{k}$ }
	\KwResult{A probabilistic model reconciling explanation $\pE = \langle \pe^+, \pe^- \rangle$ for $\varphi$ from $\KBa$ for $\Bh$} 

         $k \gets \hat{k}$ \;
         $\KBa^h \gets \KBa \cap \Bh^{\downarrow w}$ \label{alg:plmrp:kbs} \; 
         $W \gets \overset{n}{ \underset{i=1}{\sum}} \{ w_i \: | \: (\phi_i, w_i) \in \Bh \}$ \;
         $\Ba \gets \{ (\phi, W) \: | \: \phi \in \KBa \setminus \KBa^h \}$ \label{alg:plmrp:Ba}\;
        $\Omega_\varphi  \gets \texttt{getTopKWorlds}(\Bh \cup \Ba \cup \{(\varphi, \infty)\}, k)$\; \label{alg:plmrp:wsat}

         \While{$true$}{ \label{alg:plmrp:main}
            $I_k \gets \texttt{getIntersections}(\KBa, \Omega_\varphi, k)$ \; \label{alg:plmrp:Ik}
            \uIf{\emph{\textbf{not}} \emph{\texttt{SAT}}$((I_k\cup \KBa^h \cup \{\lnot \varphi \} ))$}{ \label{alg:plmrp:SAT}
                $\langle \pe^+, \pe^- \rangle \gets \texttt{model-reconciling-explanation}(I_k\cup \KBa^h, \Bh^{\downarrow w}, \varphi)$ \label{alg:plmrp:lmrp} \;
                \Return $\langle \pe^+, \pe^- \rangle$
            }
           \Else{
                $k \gets k -1$ \;
            
           }
           }
	

	
	\caption{\texttt{probabilistic-model-reconciling-explanation}$(\KBa, \Bh, \varphi, \hat{k})$}
	\label{alg:plmrp}
\end{algorithm}

\section{Experimental Evaluations}
\label{sec:exper}

This section presents a comprehensive evaluation of the proposed algorithms, assessing their effectiveness and efficiency across a range of scenarios. 


\subsection{Experimental Setup}
Experiments were conducted on a system equipped with an M1 Max processor and 32GB of memory. The algorithms were implemented in Python, utilizing the PySAT toolkit~\cite{imms-sat18} for SAT solving, MCS/MUS finding, weighted MaxSAT, and minimal hitting set computations. The time limit for all experiments was set to $500s$.\footnote{Code repository: \url{https://github.com/YODA-Lab/Probabilistic-Monolithic-Model-Reconciling-Explanations}.}

For our benchmarks, we selected a diverse set of problem instances:
\squishlist
    \item \textbf{Classical Planning Problems}: We encoded classical planning problems from the International Planning Competition (IPC) in the style of \citet{kautz1996encoding}, and used them as knowledge bases. The explanandum for each problem was the plan optimality query, which we constructed as described by \citet{vasLMRP}.

    \item \textbf{Agent Scheduling Problems}: We encoded logic-based agent scheduling problems based on the description by \citet{vasileioua2023lasp}, and used them as the knowledge bases. The explanandum for each problem was a set of unsatisfied agent constraints. 

    \item \textbf{Random CNF Problems}: We generated random CNF formulae as knowledge bases using CNFgen \cite{lauria2017cnfgen}. The explanandum for each problem was a conjunction of backbone literals,\footnote{The backbone literals of a propositional knowledge base are the set of literals entailed by the knowledge base.} which we computed using the minibones algorithm proposed by \citet{janota2015algorithms}. 
\squishend

\noindent  Note that we created associated belief bases for each problem by simply adding a random weight to each formula in the knowledge base.

\subsection{Results and Discussion}

We now describe and discuss our experimental results, first for monolithic explanations and then for model reconciling explanations.

\subsubsection{Monolithic Explanations}

We evaluated Algorithm~\ref{alg:pexpl}, referred to as {\sc alg3}, on computing probabilistic monolithic explanations. Since the core monolithic explanation generation engine of {\sc alg3} is powered by Algorithm~\ref{alg:expl}, referred to as {\sc alg1}, we also evaluate its performance on the same instances. These experiments aim to answer the following questions:
\begin{itemize}
    \item[] \textbf{Q1}: What is the performance of {\sc alg3} on computing monolithic  explanations across different problem instances?
    \item[] \textbf{Q2}: Does the efficacy of {\sc alg3} change under different values of the user-defined parameter $\hat{k}$?
\end{itemize}

\begin{table}[!t]
\small
    \centering
    \begin{tabular}{|c||c|c|c||c|c|c||c|c|c|}
    \hline
    \multirow{2}{*}{Parameter $\hat{k}$} & \multicolumn{3}{c||}{Planning} & \multicolumn{3}{c||}{Scheduling} & \multicolumn{3}{c|}{Random CNF} \\
    \cline{2-10}
     & S & T/O & Runtime & S & T/O & Runtime & S & T/O & Runtime \\
    \hline \hline
    1 & 28 & 9 & 82.0s & 30 & 5 & 80.0s& 25 & 5 & 12.4s \\
    \hline
    50 & 32 & 5 & 79.0s& 30 & 5 & 53.8s& 25 & 5 &8.6s \\
    \hline
    100 & 31 & 6 & 49.6s& 30 & 5 & 44.7s & 25 & 5 &5.8s\\
    \hline
    150 & 31& 6 & 45.5s& 30 & 5 & 38.0s & 25 & 5 &3.4s\\
    \hline
    200 & 31 & 6 & 45.2s& 30 & 5 & 37.2s & 25 & 5 &1.6s\\
    \hline
    \end{tabular}
    \caption{Number of Instances Solved (S) vs. Timed Out (T/O) by {\sc alg1} ($\hat{k}=1$) and {\sc alg3} ($\hat{k}=50$, $\hat{k}=100$, $\hat{k}=150$, $\hat{k}=200$). Note that Runtime denotes the average runtime over all solved instances.}
    \label{tab:pexpl}
\end{table}

\begin{figure}[!t]
    \centering
    \begin{subfigure}[b]{0.48\textwidth}
        \centering
        \includegraphics[width=\textwidth]{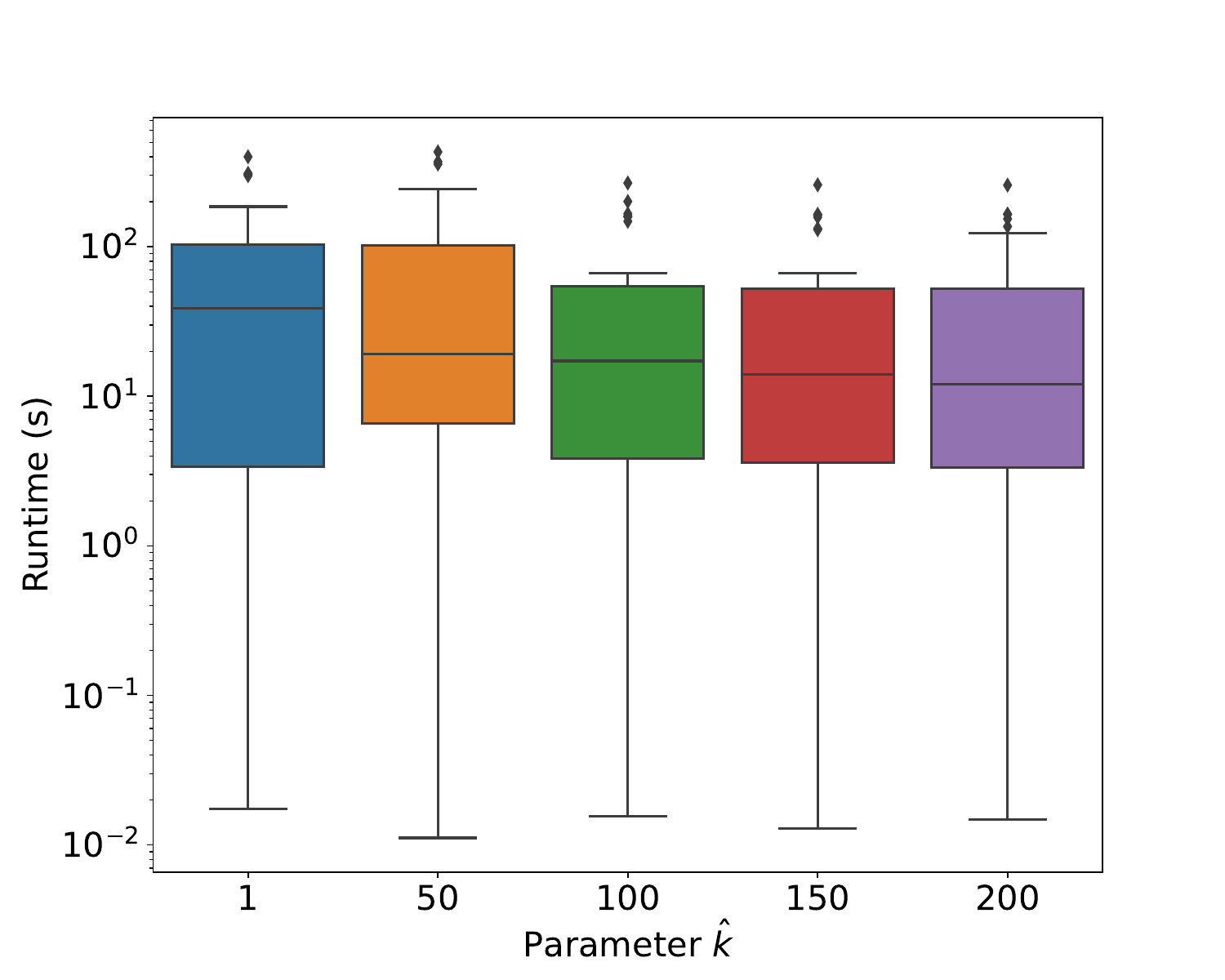}
        \caption{Planning Instances.}
    \label{fig:Tvsk:plan}
    \end{subfigure}
    \hfill 
    \begin{subfigure}[b]{0.48\textwidth}
        \centering
        \includegraphics[width=\textwidth]{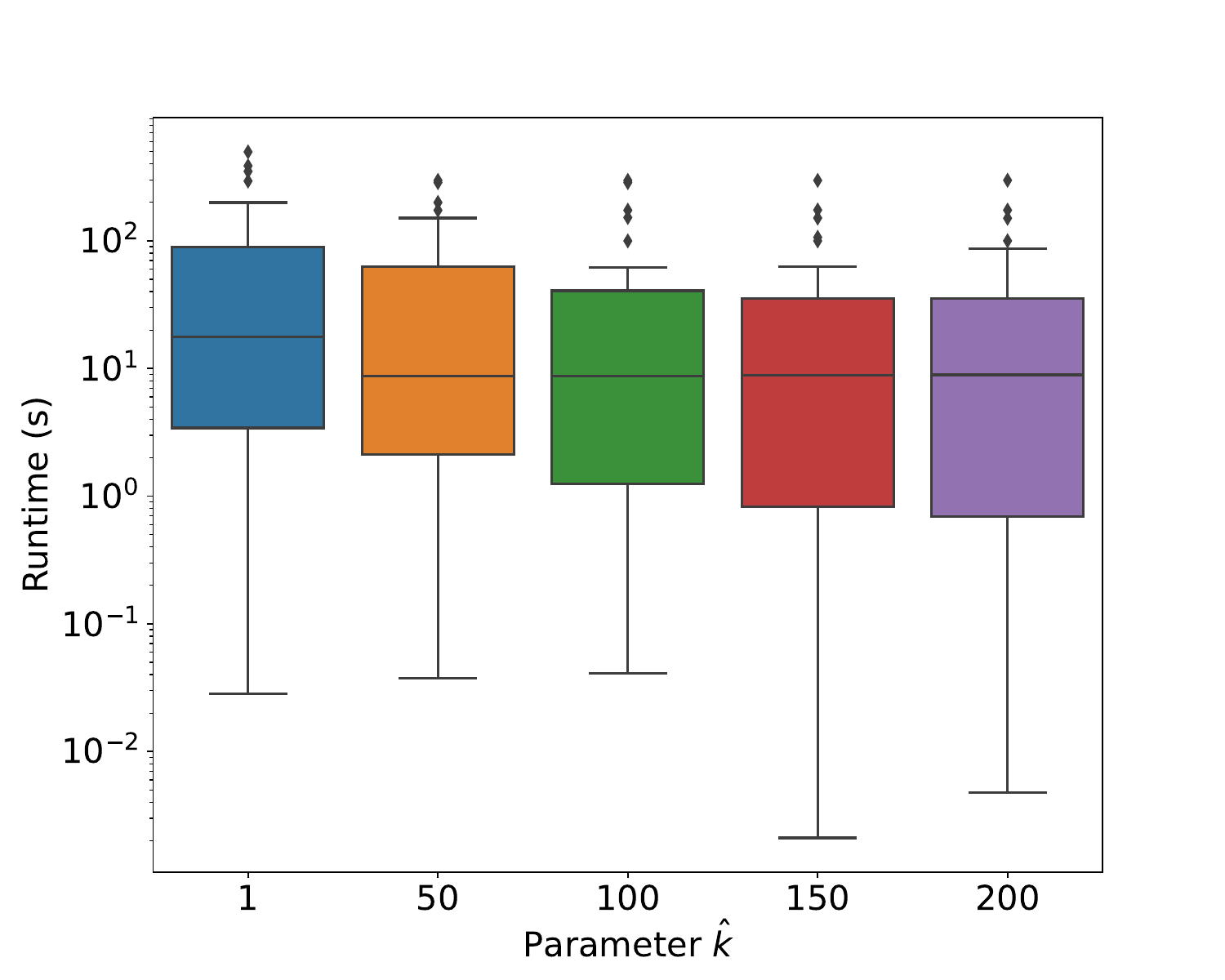}
        \caption{Scheduling Instances.}
    \label{fig:Tvsk:sch}
    \end{subfigure}
    
    
    \begin{subfigure}[b]{0.5\textwidth} 
        \centering
        \includegraphics[width=\textwidth]{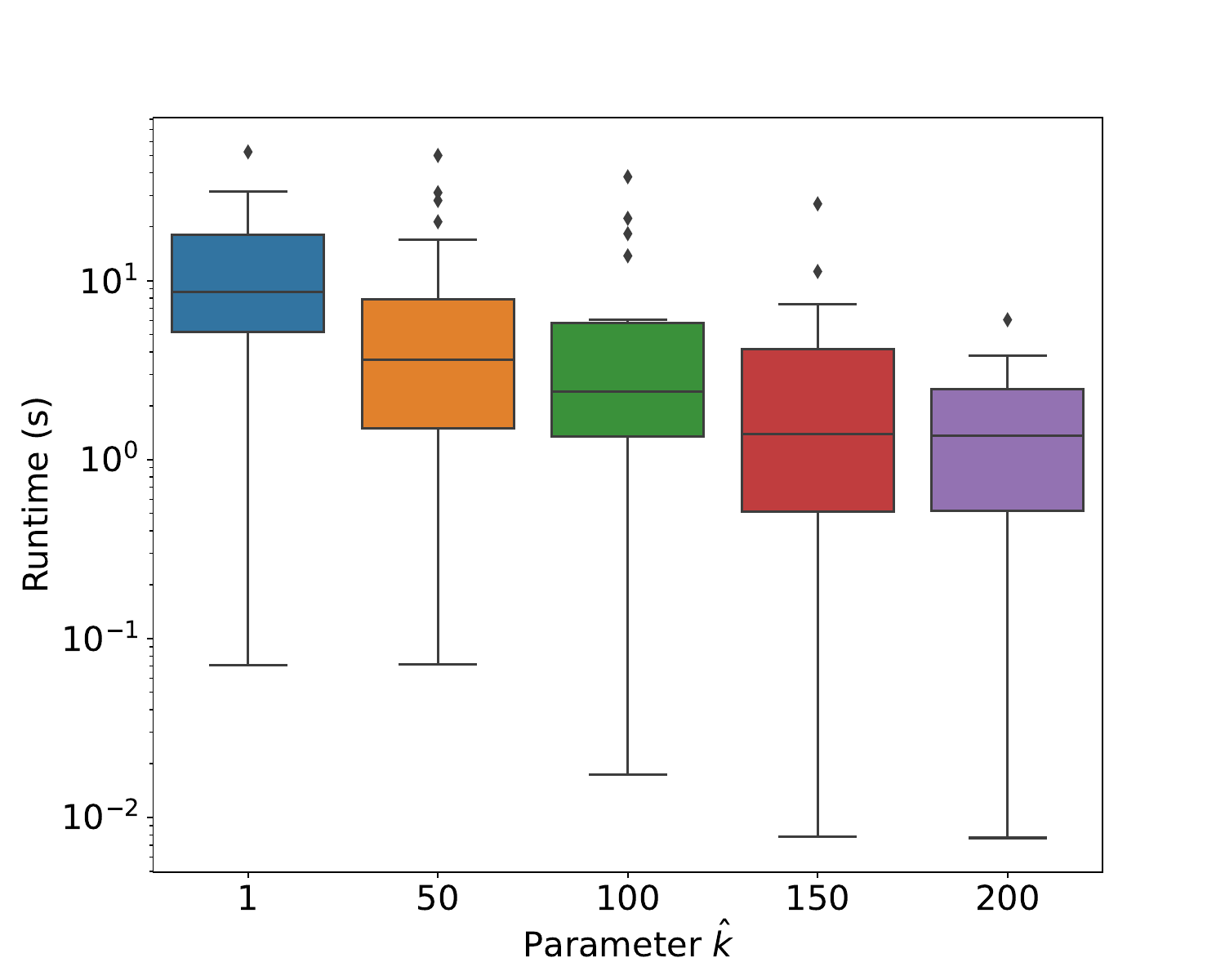}
        \caption{Random Instances.}
    \label{fig:Tvsk:rand}
    \end{subfigure}
    
    \caption{Runtime distributions of {\sc alg1} ($\hat{k}=1$) and {\sc alg3} ($\hat{k}=50$, $\hat{k}=100$, $\hat{k}=150$, $\hat{k}=200$) across all planning, scheduling, and random CNF instances.}
    \label{fig:Tvsk}
\end{figure}

Table~\ref{tab:pexpl} tabulates the instances solved (i.e.,~found a monolithic explanation within the time limit) and not solved (i.e.,~timed out) by {\sc alg1} ($\hat{k}=1$) and {\sc alg3} at $\hat{k}=\{5,100,150,200\}$.\footnote{{\sc alg3} at $\hat{k}=1$ corresponds to {\sc alg1} because each encoded knowledge base is consistent and entails the explanandum. As such, all formulae in the knowledge base are true in the most-probable world of the explanandum (i.e.,~$\hat{k}=1$), which means that {\sc alg3} reduces to {\sc alg1}.} We observe that the algorithm managed to solve most instances across different values of $\hat{k}$. Figure~\ref{fig:Tvsk} shows the runtime distributions of {\sc alg3} across all values of $\hat{k}$ for computing a monolithic explanation. Interestingly, we observe that the runtimes decrease as $\hat{k}$ increases. This can be explained by the fact that for larger values of $\hat{k}$, {\sc alg3} considers the intersections of more worlds where the explanandum is true, which means that the number of formulae that are true in these intersections decreases. As such, the overall search space of monolithic explanations decreases as well, thus resulting in a reduced runtime needed for {\sc alg1} to compute a monolithic explanation. This can also be observed more granularly in Figure~\ref{fig:time_common_pexpl}, where we can see the runtime distributions of {\sc alg1} ($\hat{k}=1$) and {\sc alg3} at $\hat{k}=200$ for each instance of the planning, scheduling, and random CNF problems. Again, the runtime of {\sc alg3} at $\hat{k}=200$ is smaller than that of {\sc alg1}. Moreover, and as expected, in Figure~\ref{fig:TvsKB}, we can observe a positive correlation between runtime and the size of the encoded knowledge bases---as the size of the knowledge base increases, the runtimes increase as well. This is due to the fact that there is an increasing number of variables and formulae that must be considered, thus increasing the computational effort needed by the WMaxSAT, MCS, and hitting set solvers.

All of these observations indicate the feasibility and practical efficacy of {\sc alg3} across all benchmarks. In particular, from these experiments, we may conclude that the performance of {\sc alg3} increases as the user-defined parameter $\hat{k}$ increases. To reiterate, this is mainly because the overall search space of monolithic explanations that needs to be considered by {\sc alg1} (the main monolithic explanation generation engine) decreases. Finally, it is important to note that the performance of these algorithms lies in the effectiveness of the underlying WMaxSAT, MCS, and hitting set solvers. In other words, this also implies that any advancement in those solvers will automatically reflect in performance gains in our algorithms. Thus, future work can look at efficient and optimized solvers and examine whether there is any variability in performance.

\begin{figure}[!t]
    \centering
    \begin{subfigure}[b]{0.49\textwidth}
        \centering
        \includegraphics[width=\textwidth]{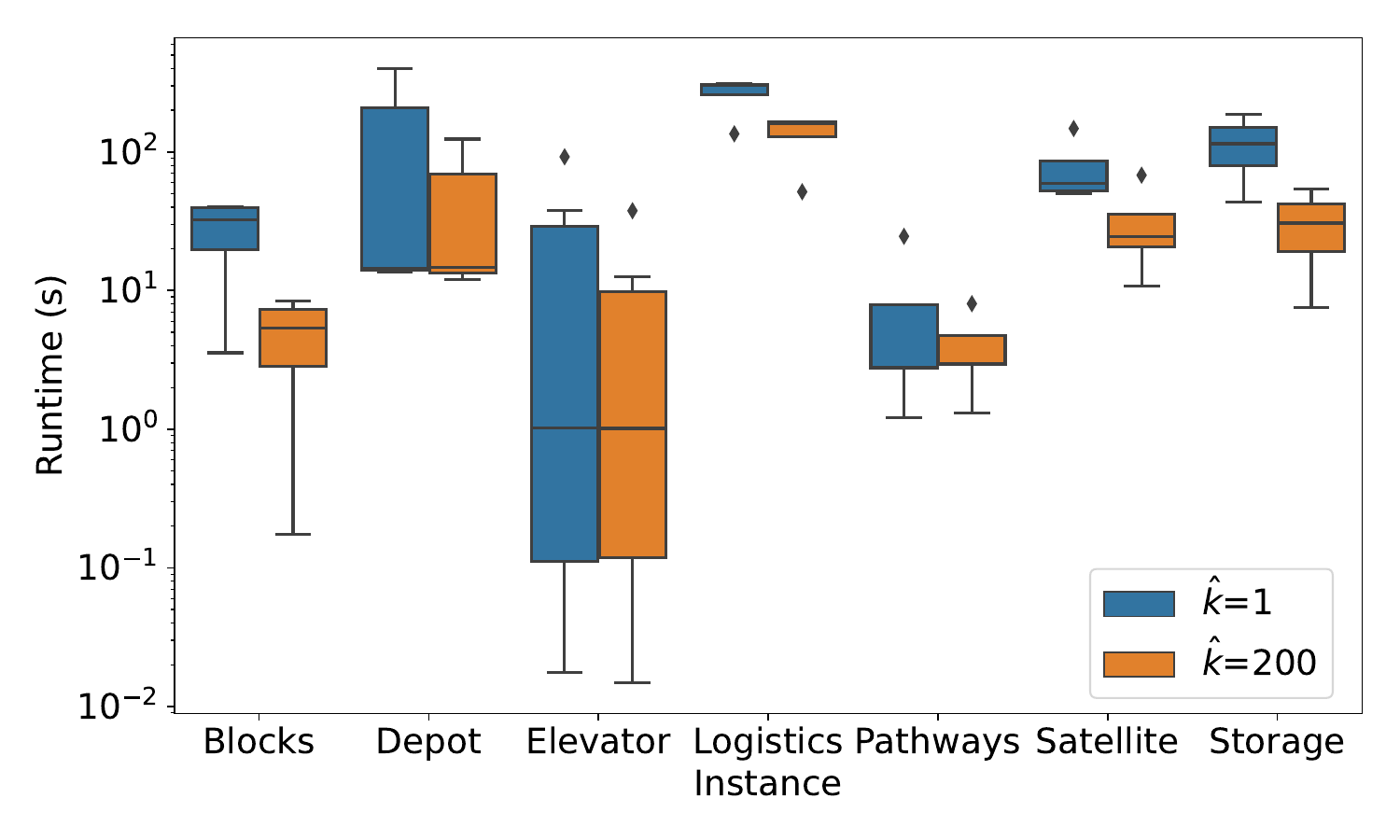}        \caption{Planning Instances.}
    \label{fig:Tvsk:plan}
    \end{subfigure}
    \hfill 
    \begin{subfigure}[b]{0.49\textwidth}
        \centering
        \includegraphics[width=\textwidth]{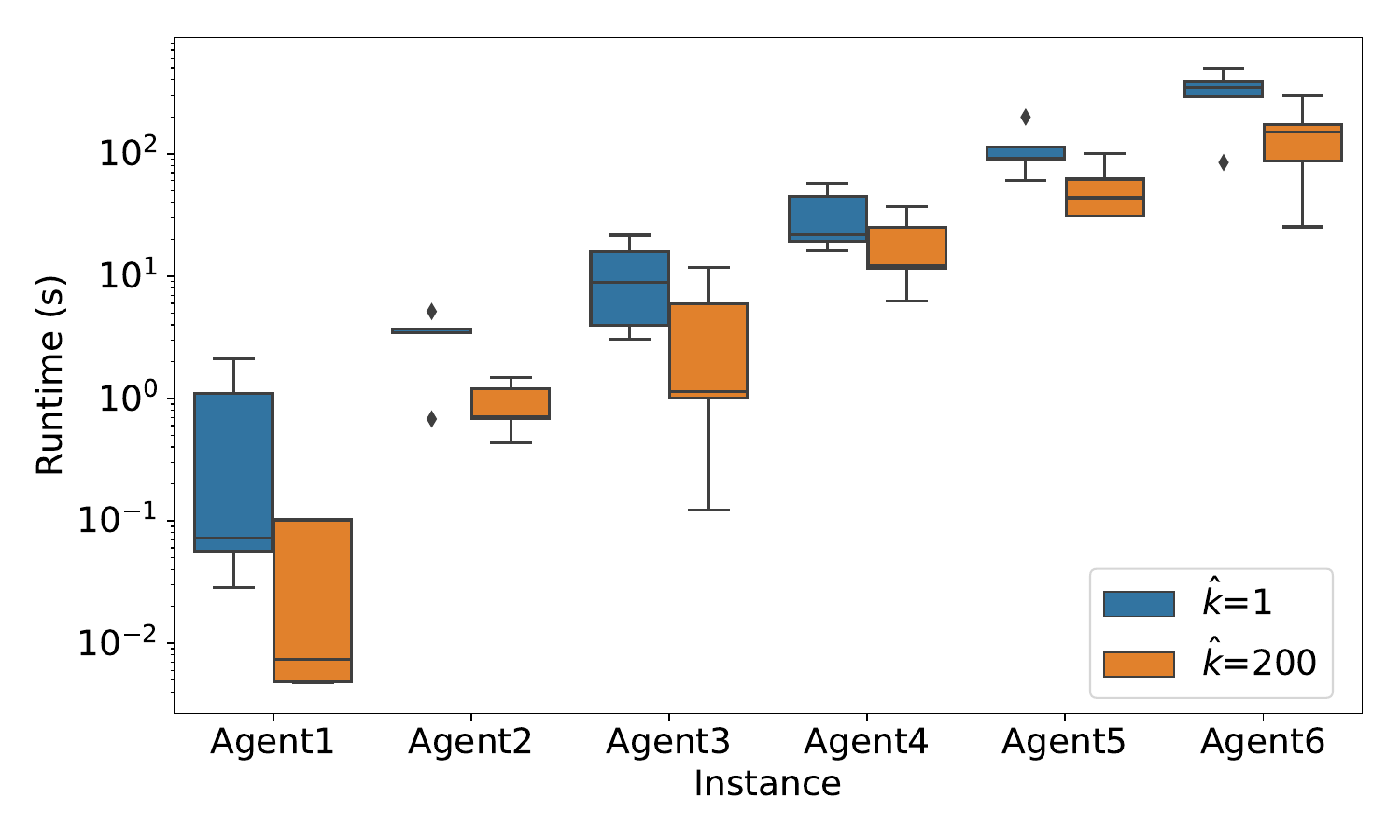}        \caption{Scheduling Instances.}
    \label{fig:Tvsk:sch}
    \end{subfigure}
    
    \vspace{0.2cm}
    
    \begin{subfigure}[b]{0.5\textwidth} 
        \centering
        \includegraphics[width=\textwidth]{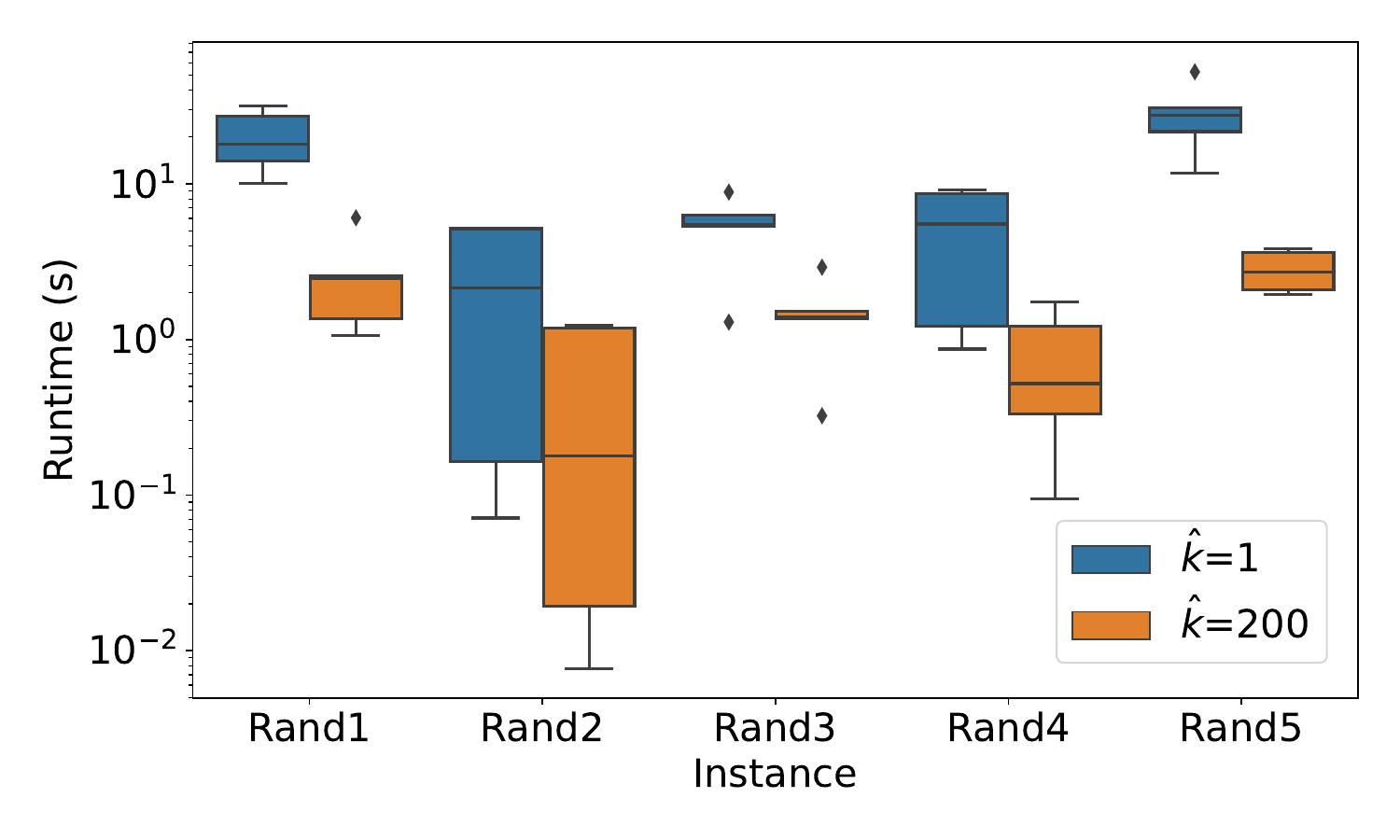}
        \caption{Random Instances.}
    \label{fig:Tvsk:rand}
    \end{subfigure}
    
    \caption{Runtime distributions of {\sc alg1} ($\hat{k}=1$) and {\sc alg3} ($\hat{k}=200$) across commonly solved planning, scheduling, and random CNF instances.}
    \label{fig:time_common_pexpl}
\end{figure}

\begin{figure}[!t]
    \centering
    \begin{subfigure}[b]{0.49\textwidth}
        \centering
        \includegraphics[width=\textwidth]{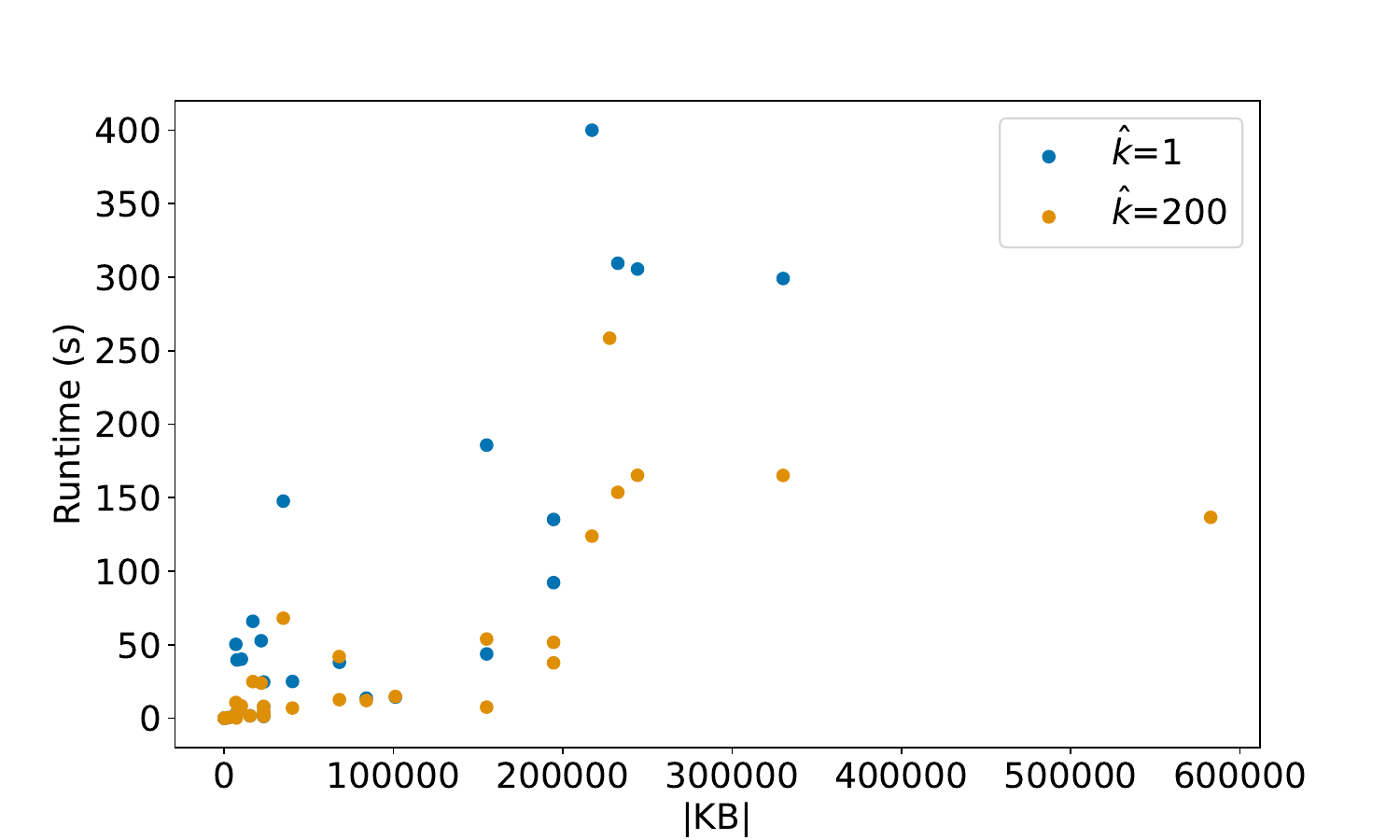}
        \caption{Planning Instances.}
        \label{fig:TvsKB:plan}
    \end{subfigure}
    \hfill 
    \begin{subfigure}[b]{0.49\textwidth}
        \centering
        \includegraphics[width=\textwidth]{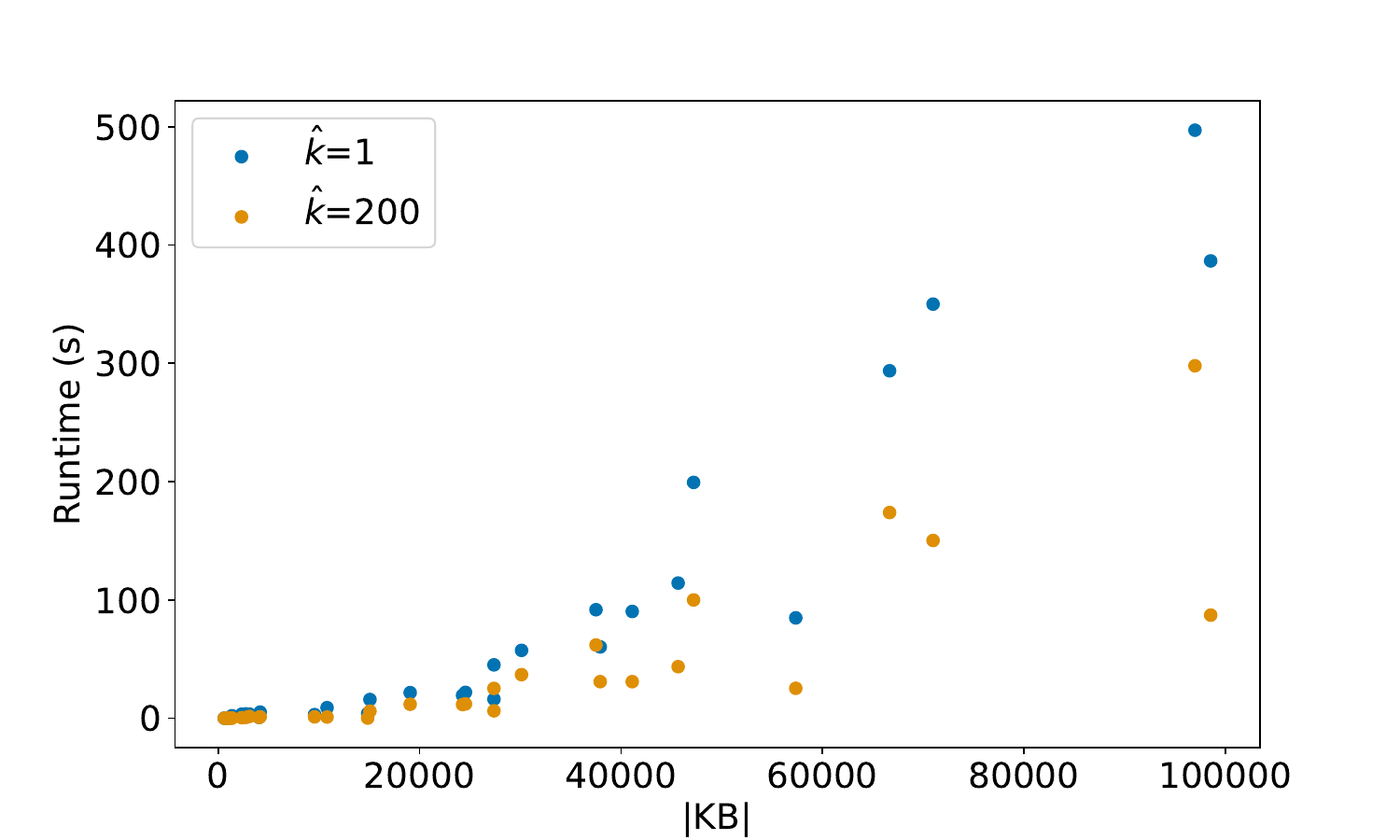}
        \caption{Scheduling Instances.}
               \label{fig:TvsKB:sch}
    \end{subfigure}
    
    
    \begin{subfigure}[b]{0.5\textwidth} 
        \centering
        \includegraphics[width=\textwidth]{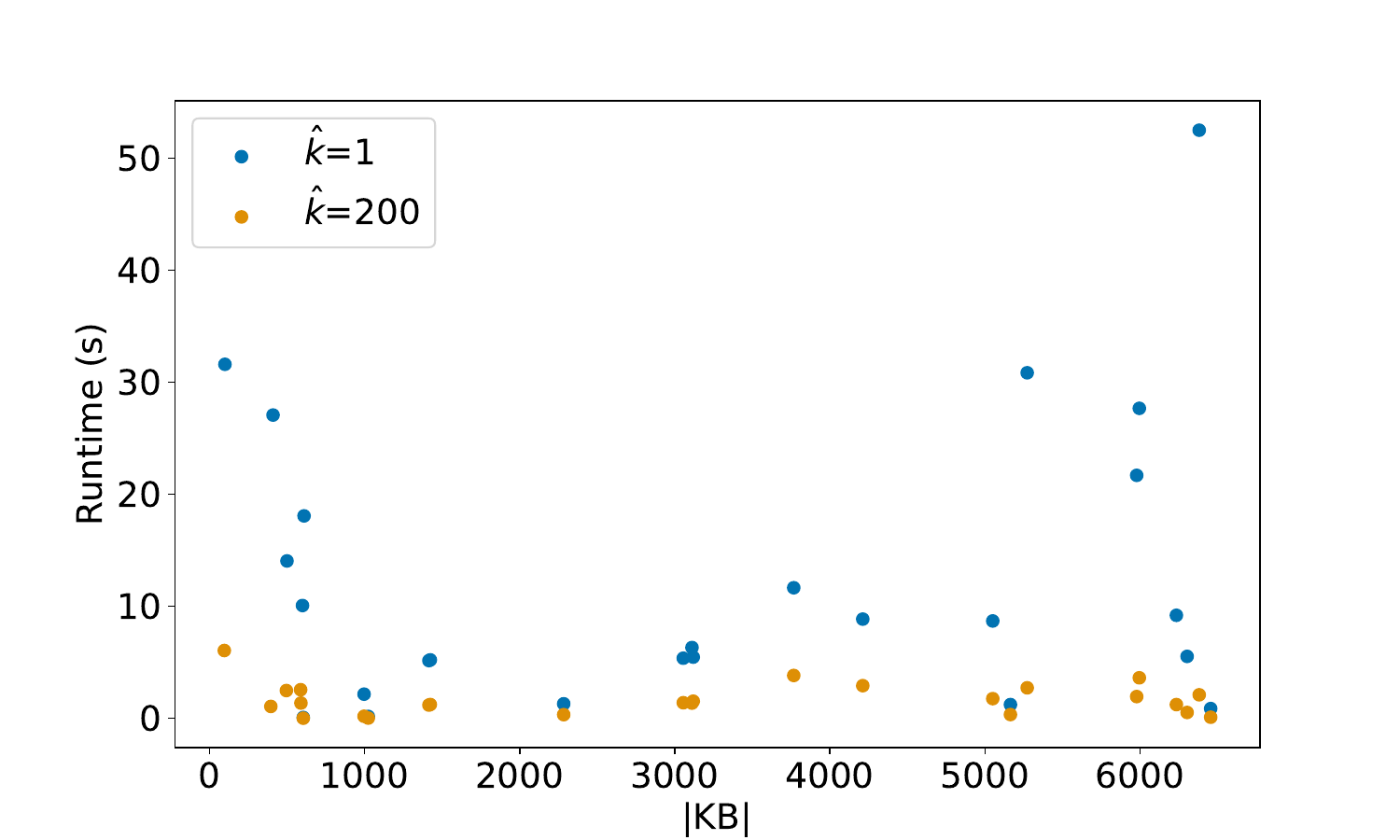}
        \caption{Random Instances.}
        \label{fig:TvsKB:rand}
    \end{subfigure}
    
    \caption{Average runtime of {\sc alg1} ($\hat{k}=1)$ and {\sc alg3} ($\hat{k}=200$) to compute an explanation across different knowledge base sizes for the planning, scheduling, and random CNF instances.}
    \label{fig:TvsKB}
\end{figure}

\subsubsection{Model Reconciling Explanations}

We now examine the effectiveness of Algorithm~\ref{alg:lmrp}, referred to as {\sc alg2}, and Algorithm~\ref{alg:plmrp}, referred to as {\sc alg4}, on computing model reconciling explanations. We chose the value of $\hat{k}=200$ for {\sc alg4} as it was the better performing parameter for {\sc alg3} in our previous experiments. More specifically now, we are interested in scenarios with varying degrees of knowledge asymmetry between the agent and human models. To simulate such scenarios, we used the actual encoded knowledge bases as the model of the agent ($\KBa$), and tweaked that model and assigned it to be the model of the human ($\KB_h$ or $\Bh$). We considered the following ways to tweak the human model, resulting in the following five scenarios:

\squishlist
\item \textbf{Scenario~1:}~We randomly removed 10\% of the formulae and removed 20\% of literals from 10\% of the total formulae in the human's model.
\item \textbf{Scenario~2:}~We randomly removed 20\% of the formulae and removed 20\% of literals from 20\% of the total formulae in the human's model.
\item \textbf{Scenario~3:}~We randomly removed 30\% of the formulae and removed 20\% of literals from 30\% of the total formulae in the human's model.
\item \textbf{Scenario~4:}~We randomly removed 40\% of the formulae and removed 20\% of literals from 40\% of the total formulae in the human's model.
\item \textbf{Scenario~5:}~We randomly removed 50\% of the formulae and removed 20\% of literals from 50\% of the total formulae in the human's model.
\squishend

In general, these experiments aim to answer the following two questions:
\begin{itemize}
    \item[] \textbf{Q1}: What is the performance of the algorithms on computing model reconciling explanations across different problem instances?
    \item[] \textbf{Q2}: What is the performance of the algorithms in scenarios with varying degrees of knowledge asymmetry between the agent and the human model?
\end{itemize}

\begin{table*}[t!]
    \centering
     \resizebox{1\textwidth}{!}{
    \begin{tabular}{|c||c|c|c|c|c|c||c|c|c|c|c|c||c|c|c|c|c|c|}
    \hline
    \multirow{2}{*}{Sce-} & \multicolumn{6}{c||}{Planning} & \multicolumn{6}{c||}{Scheduling} & \multicolumn{6}{c|}{Random CNF} \\
    \cline{2-19}
    \multirow{2}{*}{nario} & \multicolumn{3}{c|}{{\sc alg2}} & \multicolumn{3}{c||}{{\sc alg4}} & \multicolumn{3}{c|}{{\sc alg2}} & \multicolumn{3}{c||}{{\sc alg4}} & \multicolumn{3}{c|}{{\sc alg2}} & \multicolumn{3}{c|}{{\sc alg4}} \\
    \cline{2-19}
     & S & T/O & Runtime & S & T/O & Runtime & S & T/O & Runtime & S & T/O & Runtime & S & T/O & Runtime & S & T/O & Runtime  \\
    \hline \hline
    1 & 25 & 12 &67.0s & 28 & 9 & 59.7s & 33 & 2 &  51.4s & 33 & 2 & 33.1s& 27 & 5 & 30.4s&  21 & 11 & 12.3s \\
    \hline
    2 & 25 & 13 & 69.2s & 27 & 10 & 71.8s & 31 & 4 & 40.9s & 31 & 4 & 28.0s& 26 & 6 & 18.7s&  20 & 12 & 0.5s  \\
    \hline
    3 &  24& 14 & 67.9s& 26 & 12 & 68.8s & 32 & 3 & 60.3s & 32 & 3 & 37.1s & 29 & 3 & 20.4s & 21 & 11 & 2.5s  \\
    \hline
    4 &  25& 13 & 82.6s& 27 & 11 & 84.0s & 30 & 4 & 35.7s& 30 & 4 & 22.9s & 23 & 9 & 5.2s& 20 & 12 & 0.5s \\
    \hline
    5 & 22 & 15 & 84.3s& 24 & 13 & 89.9s & 30 & 4 & 34.4s & 30 & 4 & 21.5s & 24 & 8 &3.8s & 20 & 11 & 0.5s \\
    \hline
    \end{tabular}
    }
    \caption{Instances Solved (S) vs. Timed Out (T/O) for the Planning, Scheduling, and Random CNF Benchmarks for {\sc alg2} and {\sc alg4} at $\hat{k}=200$. Note that Runtime denotes the average runtime over all solved instances.}
    \label{tab:solved_plmrp}
\end{table*}

\begin{figure}[!t]
    \centering
    \begin{subfigure}[b]{0.48\textwidth}
        \centering
        \includegraphics[width=\textwidth]{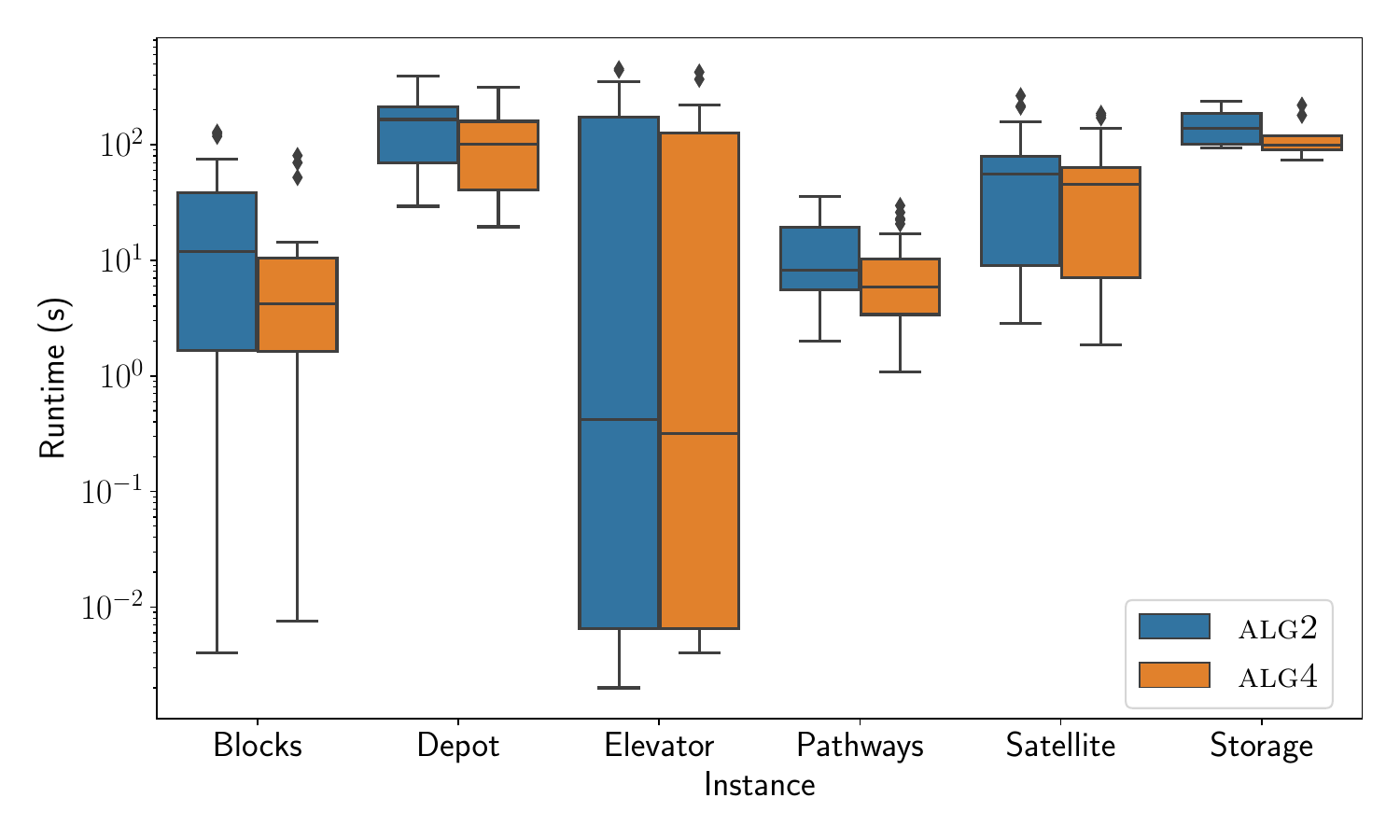}        \caption{Planning Instances.}
    \label{fig:Tvsk:plan}
    \end{subfigure}
    \hfill 
    \begin{subfigure}[b]{0.48\textwidth}
        \centering
        \includegraphics[width=\textwidth]{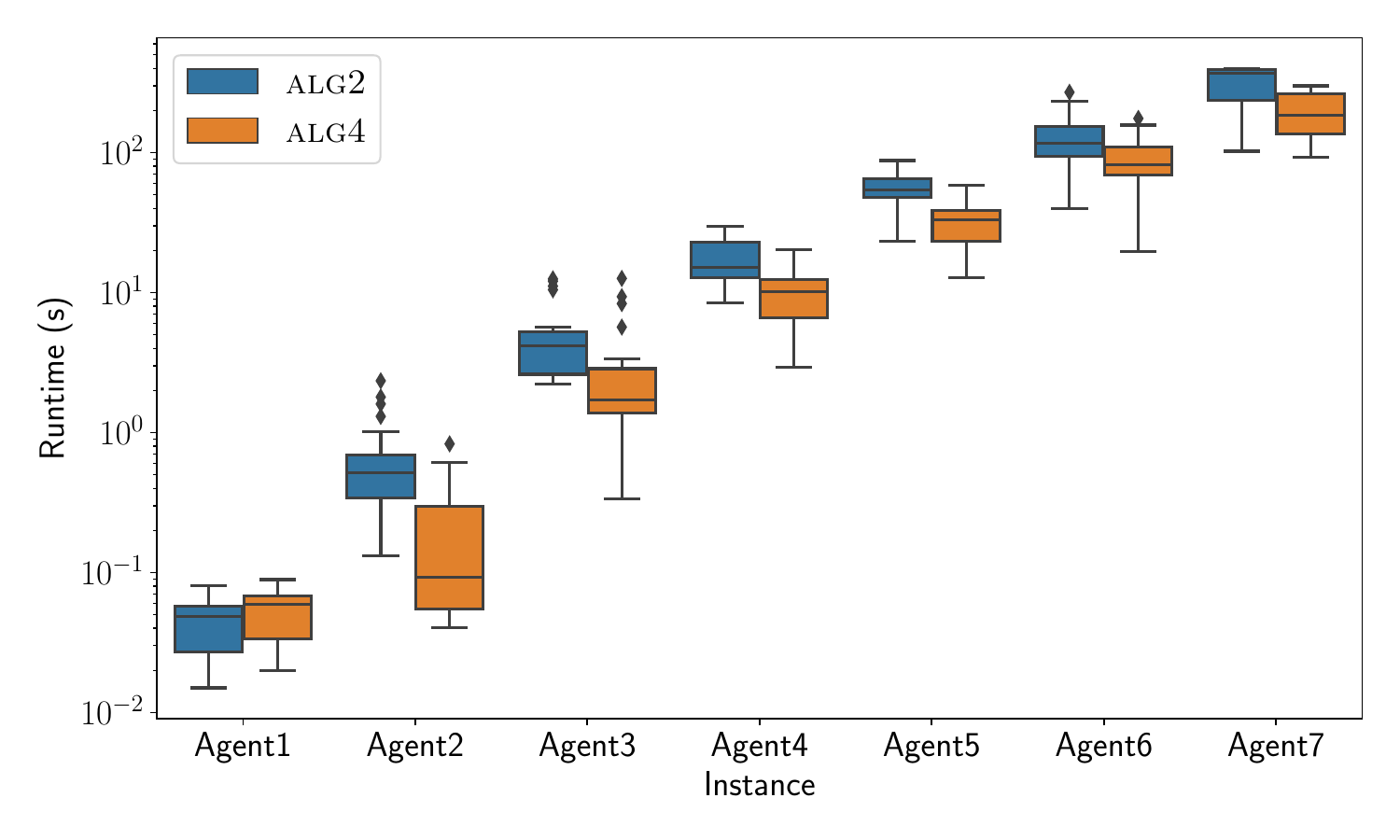}        \caption{Scheduling Instances.}
    \label{fig:Tvsk:sch}
    \end{subfigure}
    
    \vspace{0.2cm}
    
    \begin{subfigure}[b]{0.5\textwidth} 
        \centering
        \includegraphics[width=\textwidth]{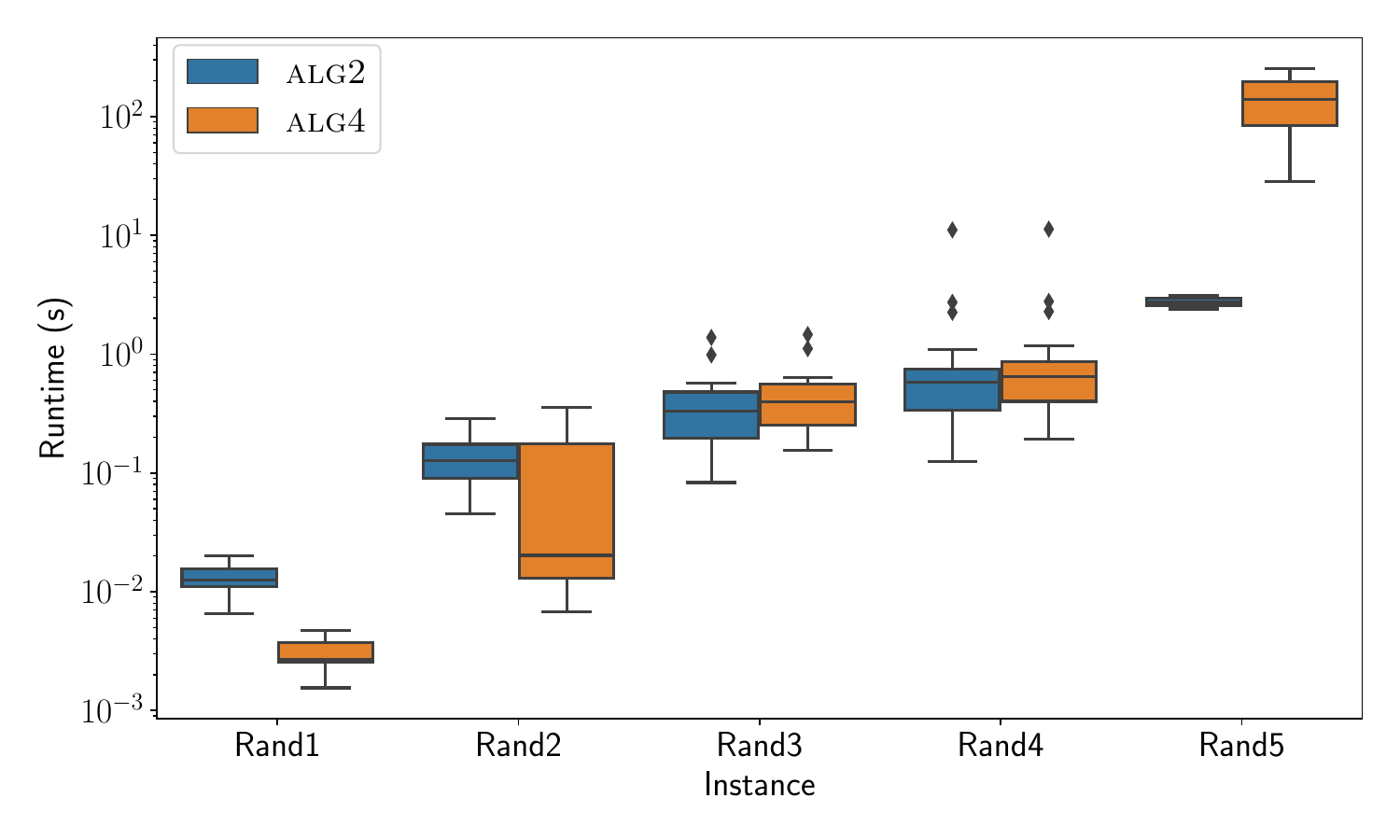}
        \caption{Random Instances.}
        \label{fig:grid2}
    \end{subfigure}
    \caption{Runtime distributions of {\sc alg2} and {\sc alg4} at $\hat{k}=200$ to compute an explanation across all commonly solved instances.}
    \label{fig:common:all}
\end{figure}

\begin{figure}[!t]
    \centering
    \begin{subfigure}[b]{0.49\textwidth}
        \centering
        \includegraphics[width=\textwidth]{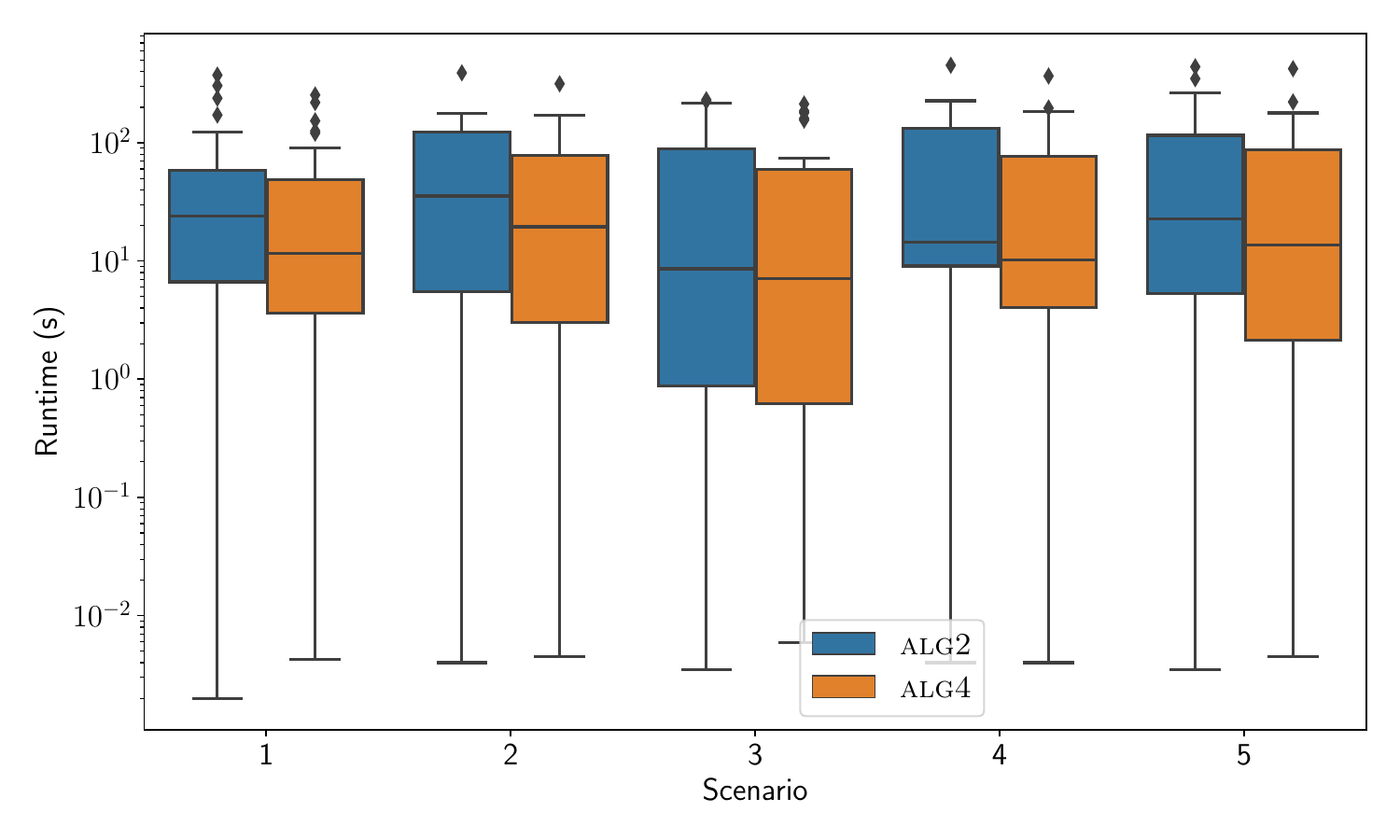}        \caption{Planning Instances.}
    \label{fig:Tvsk:plan}
    \end{subfigure}
    \hfill 
    \begin{subfigure}[b]{0.49\textwidth}
        \centering
        \includegraphics[width=\textwidth]{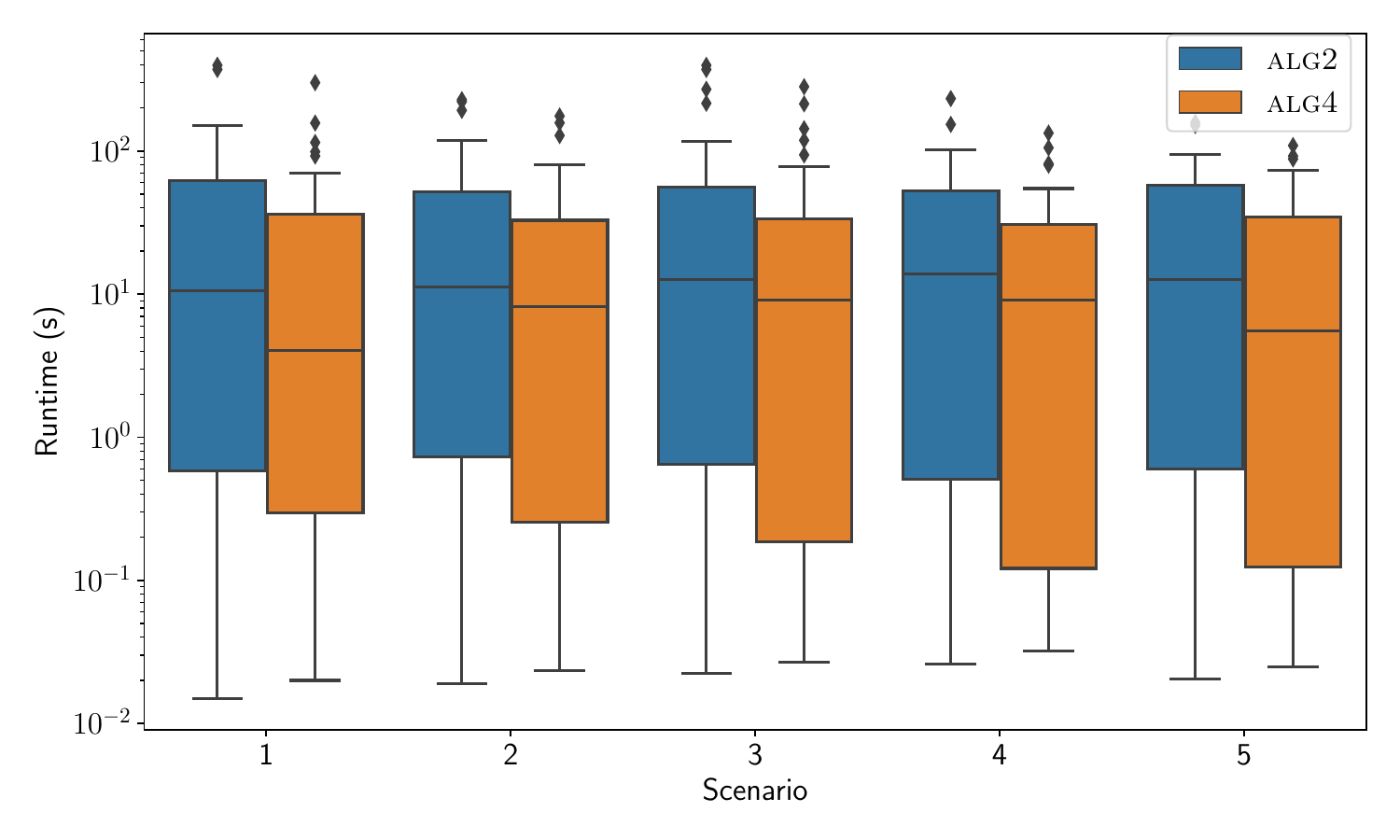}        \caption{Scheduling Instances.}
    \label{fig:Tvsk:sch}
    \end{subfigure}
    
    \vspace{0.2cm}
    \begin{subfigure}[b]{0.51\textwidth} 
        \centering
        \includegraphics[width=\textwidth]{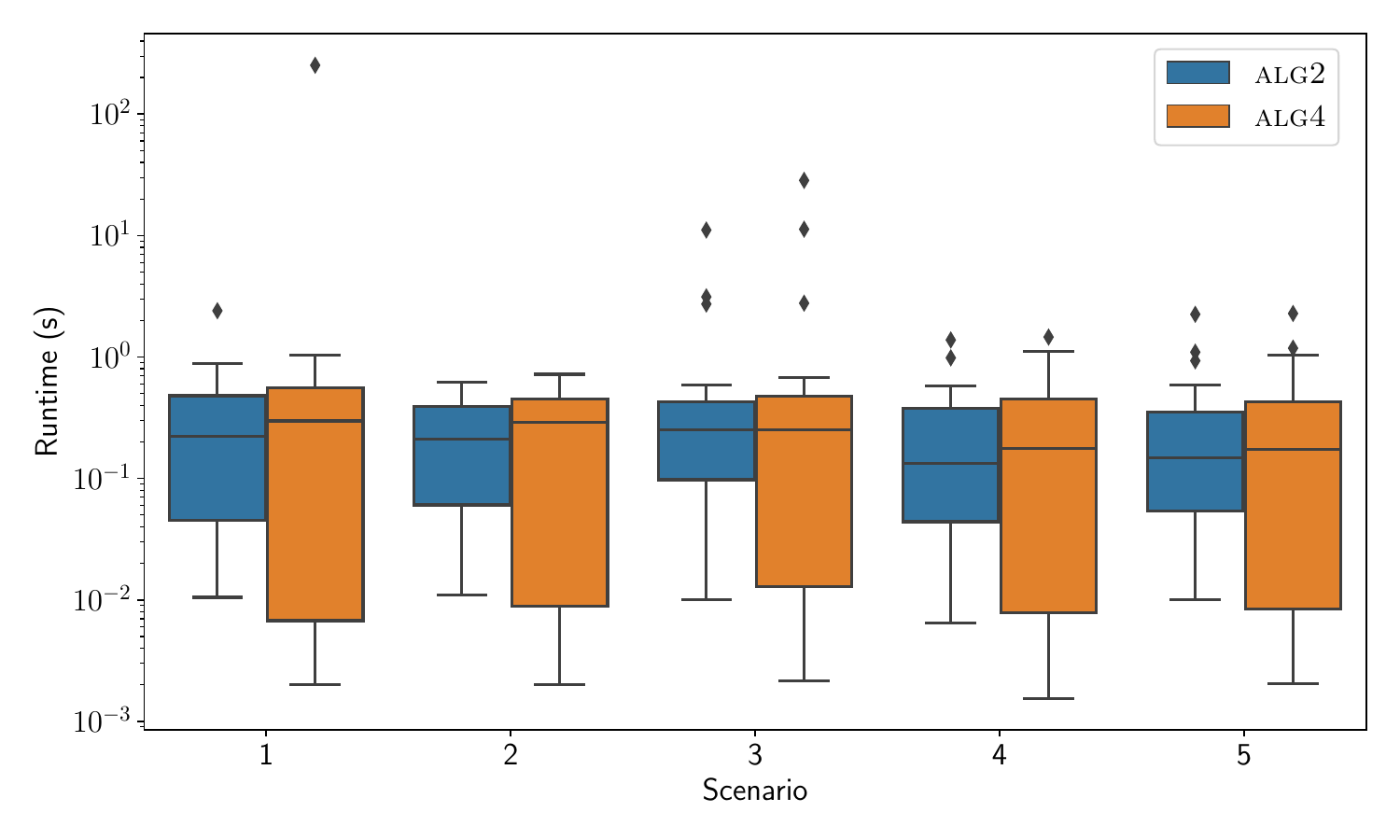}
        \caption{Random Instances.}
        \label{fig:grid2}
    \end{subfigure}
    \caption{Runtime distributions of {\sc alg2} and {\sc alg4} at $\hat{k}=200$ to compute an explanation across commonly solved instances in each of the five scenarios.}
    \label{fig:common:scenario}
\end{figure}

\begin{figure}[!t]
    \centering
    \begin{subfigure}[b]{0.45\textwidth}
        \includegraphics[width=\textwidth]{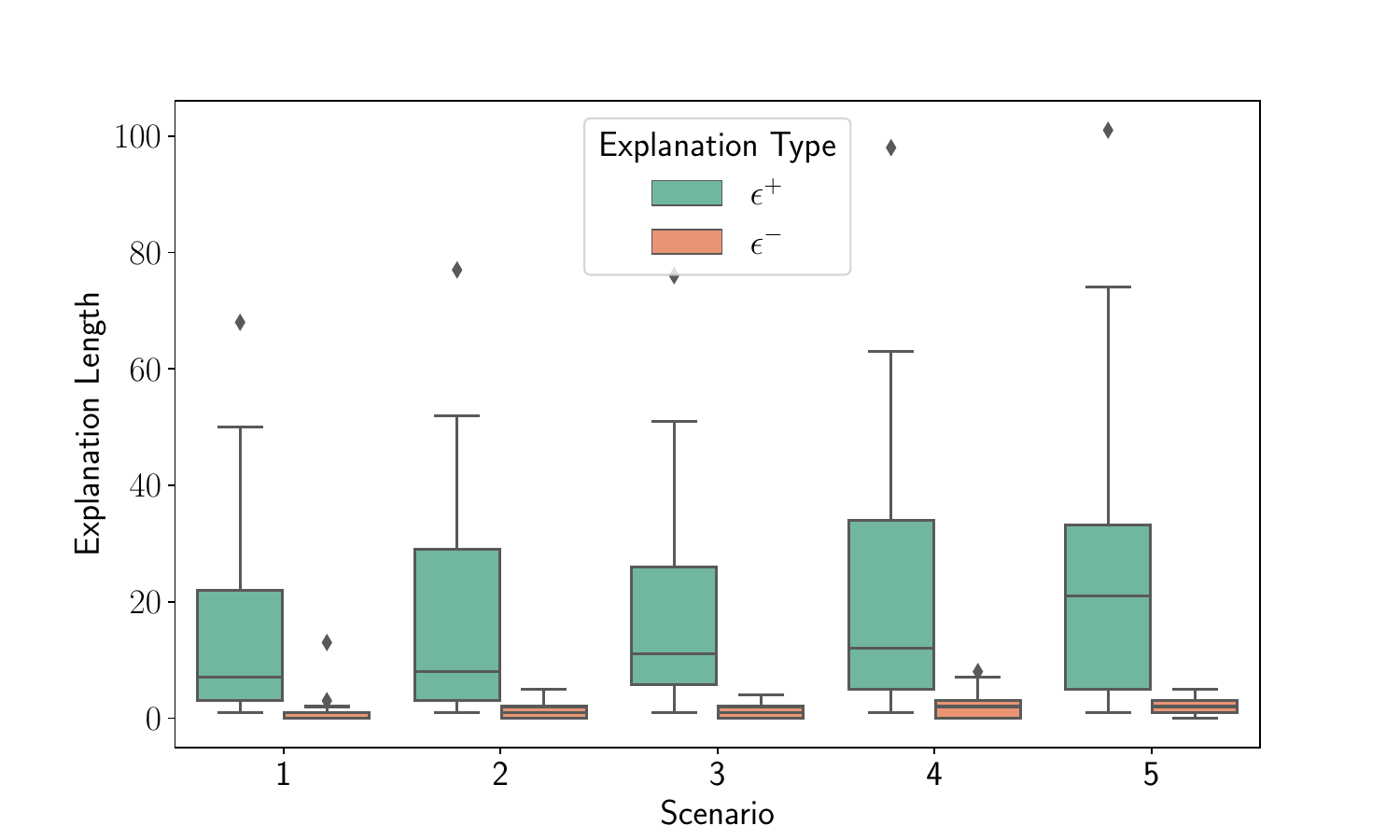}
        \caption{{\sc alg2} on Planning Instances.}
        \label{fig:grid1}
    \end{subfigure}
    \quad 
    \begin{subfigure}[b]{0.45\textwidth}
        \includegraphics[width=\textwidth]{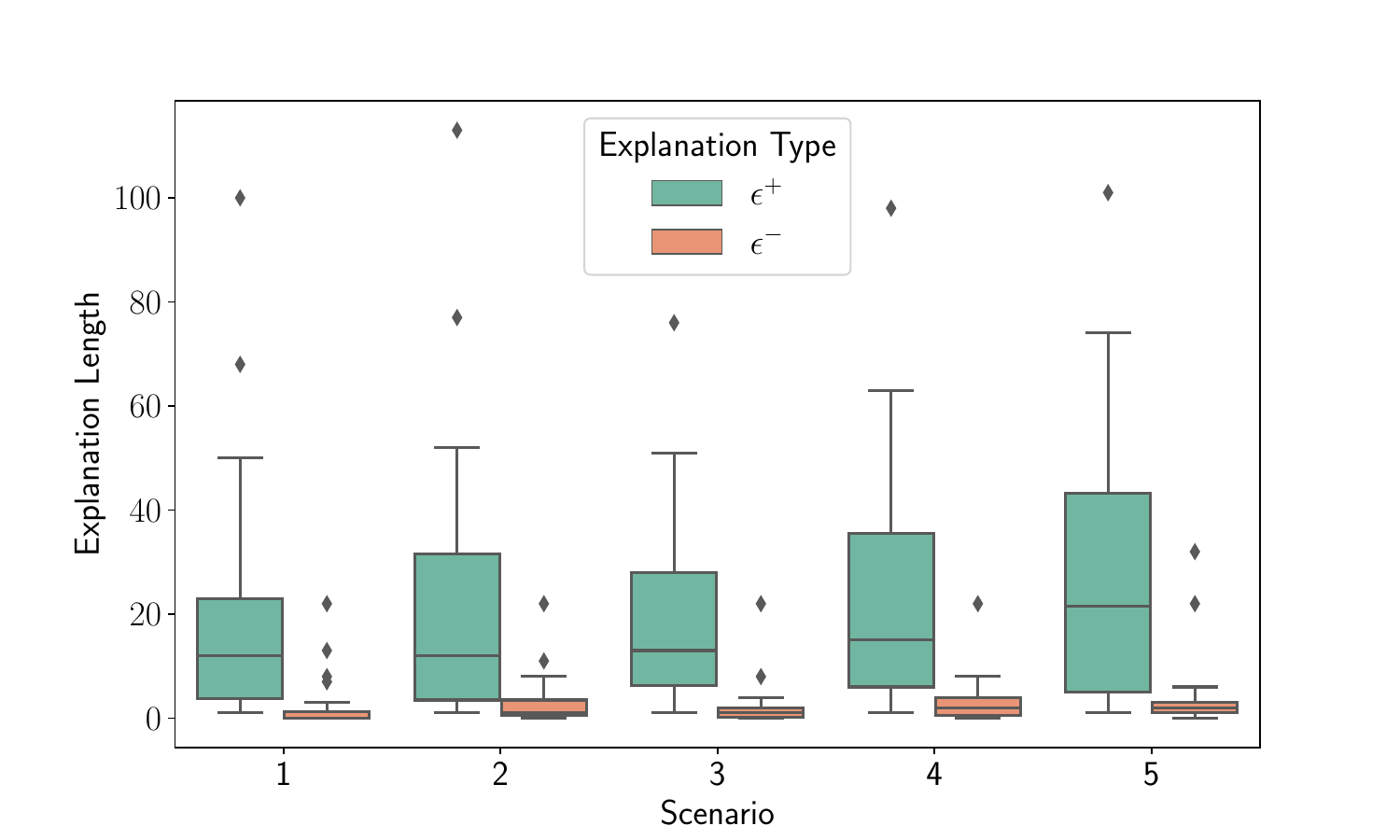}
        \caption{{\sc alg4} on Planning Instances.}        \label{fig:grid2}
    \end{subfigure}
    \\
    \begin{subfigure}[b]{0.45\textwidth}
        \includegraphics[width=\textwidth]{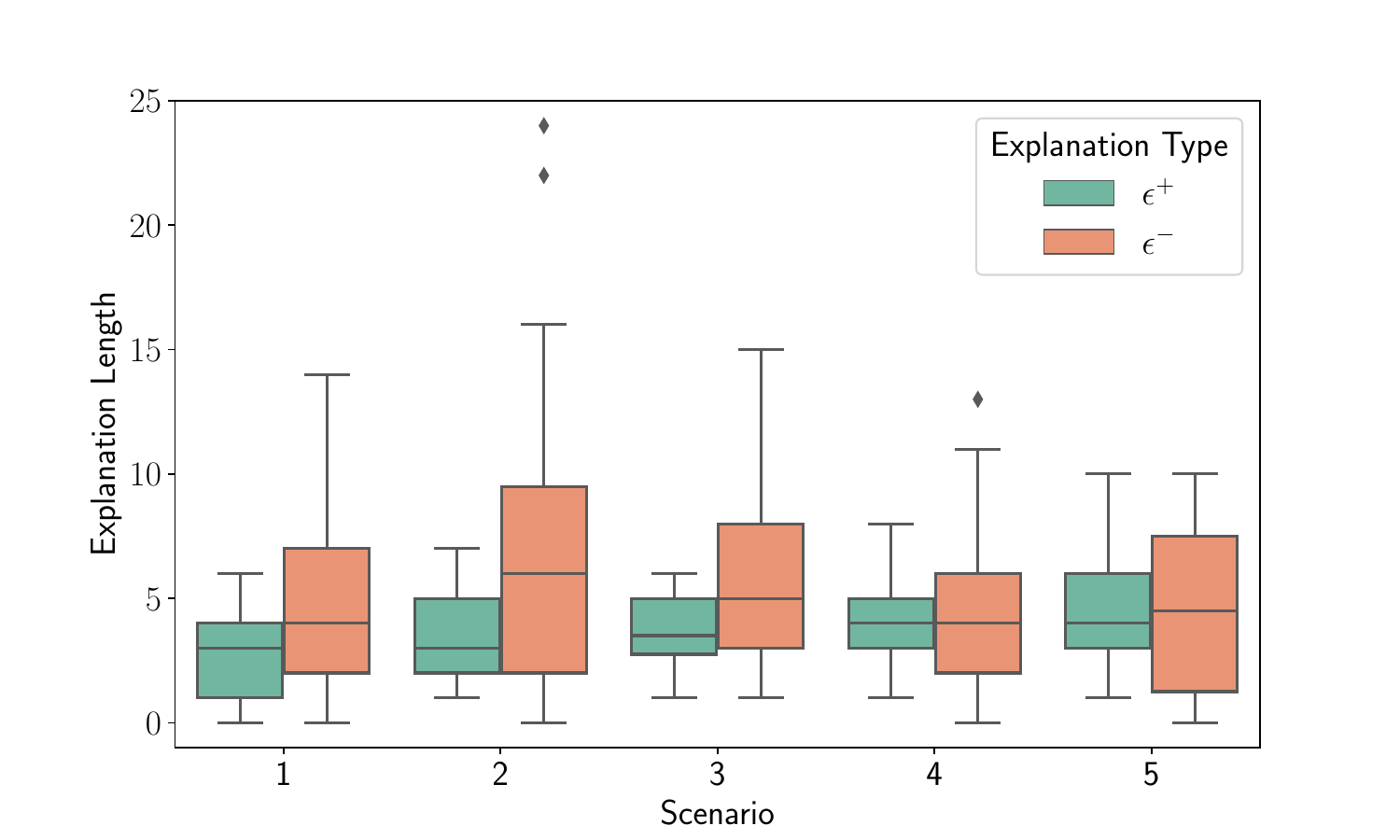}
        \caption{{\sc alg2} on Scheduling Instances.}        \label{fig:grid3}
    \end{subfigure}
    \quad
    \begin{subfigure}[b]{0.45\textwidth}
        \includegraphics[width=\textwidth]{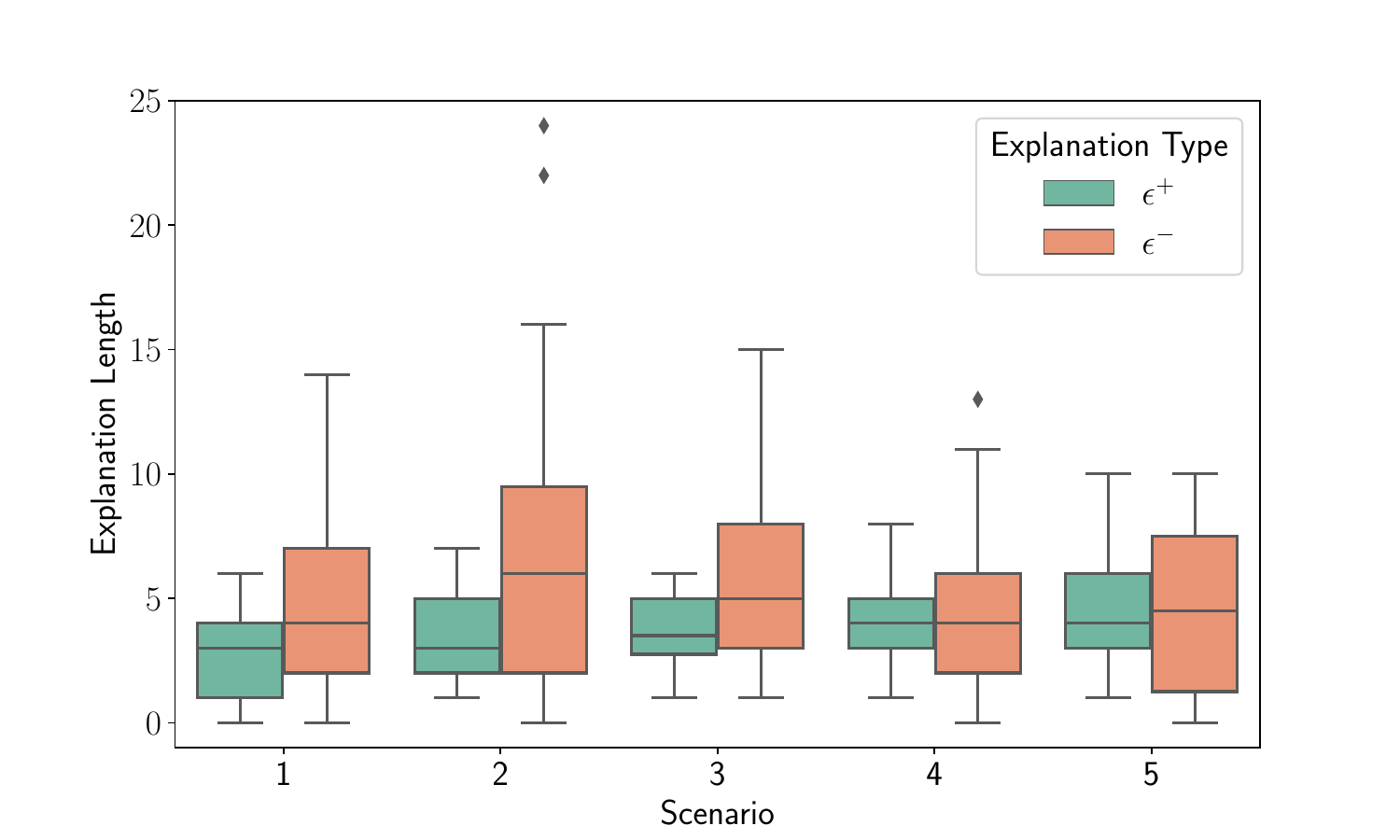}
        \caption{{\sc alg4} on Scheduling Instances.}        \label{fig:grid4}
    \end{subfigure}
    \\
    \begin{subfigure}[b]{0.45\textwidth}
        \includegraphics[width=\textwidth]{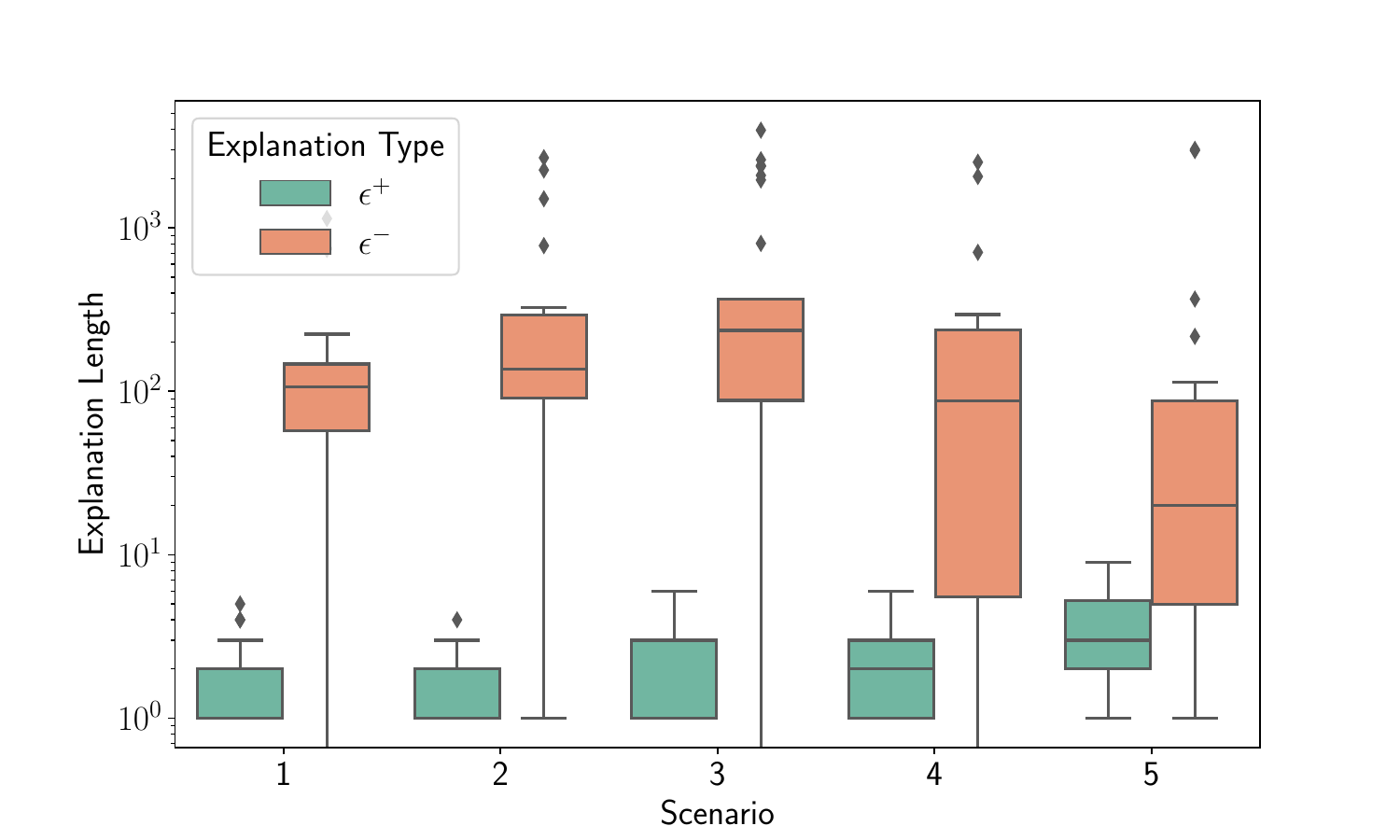}
        \caption{{\sc alg2} on Random CNF Instances.} 
        \label{fig:grid3}        
    \end{subfigure}
    \quad 
    \begin{subfigure}[b]{0.45\textwidth}
        \includegraphics[width=\textwidth]{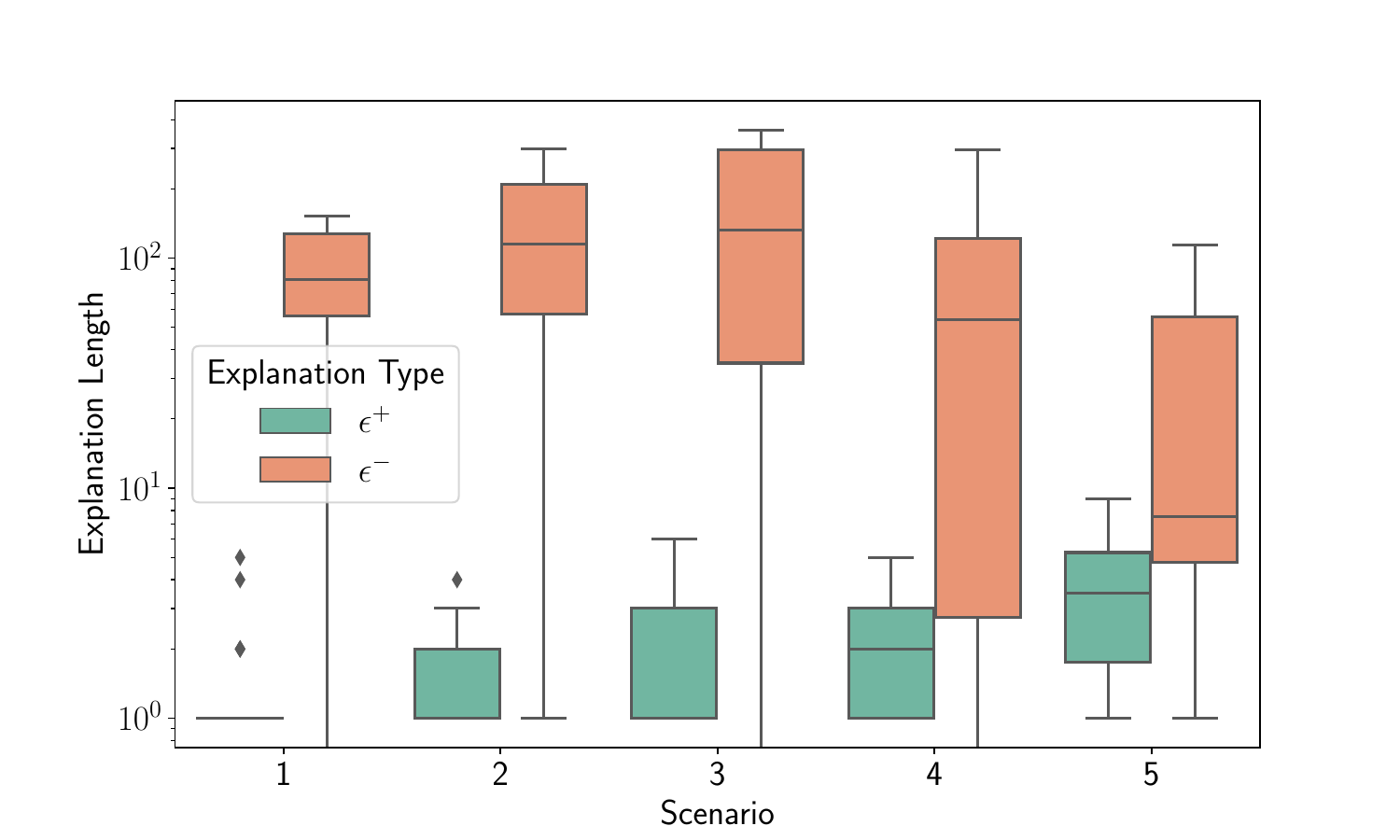}
        \caption{{\sc alg4} on Random CNF Instances.} 
        \label{fig:grid2}
    \end{subfigure}
    \caption{Distributions of the lengths of explanations $\e^+$ and $\e^-$ computed by {\sc alg2} and {\sc alg4} at $k=200$ across all planning, scheduling, and random CNF instances.}
    \label{fig:expl-lmrp}
\end{figure}

Table~\ref{tab:solved_plmrp} tabulates the instances solved and timed out by {\sc alg2} and {\sc alg4} at $\hat{k}=200$ across the five scenarios, where we observe the following trends. For the planning instances, the runtime of both algorithms increases as the difference between the models of the agent and human increases (Scenarios 1 to 5), since both algorithms search over the explanation search space, which increases as the number of differences between the two models increases. As in the previous experiments, {\sc alg4} at $\hat{k}=200$ yields faster runtimes than {\sc alg2}. For the scheduling instances, we observe that the runtimes increase from Scenario 1 to 3, but decrease from Scenarios 4 to 5. Upon closer inspection, this is mainly because the instances solved in these scenarios were easier (i.e.,~smaller knowledge base sizes) than those solved in the other three scenarios, thus resulting in smaller average runtimes. A similar trend is observed for the random CNF instances. However, in the random CNF instances, {\sc alg2} managed to solve more instances than {\sc alg4}. After examining them more closely, we found that the main bottleneck of {\sc alg4} in those instances was computing the most-probable worlds of the explanandum (i.e.,~the WMaxSAT solver). Even for smaller values of $\hat{k}$, the solver failed to compute all the worlds under the specified time limit---the increase in search space (e.g.,~because of considering $\Ba$ and $\Bh$) increased the complexity of these instances. We expect that an optimized and more dedicated solver may be able to overcome this limitation. The runtime distributions for {\sc alg2} and {\sc alg4} across all commonly solved instances and across commonly solved instances in each scenario can be seen in Figures~\ref{fig:common:all} and~\ref{fig:common:scenario}, respectively. For these instances, we observe, like in the previous experiments, that {\sc alg4} has faster runtimes than {\sc alg2}.

Moreover, in Figure~\ref{fig:expl-lmrp} we see the distributions of the model reconciling explanation lengths computed by both algorithms. As expected, the general trend is that the size of the explanation $\e^+$ (i.e.,~formulae from $\KBa$ for $\KB_h$ (or $\B_h$)) increases with each scenario, as the difference between the agent and human models increase. The same trend can be seen for $\e^-$---each scenario from 1 to 5 has an increasing amount of inconsistencies between the two models. Interestingly, $\e^-$ was largest in the random CNF instances. This indicates that the inconsistencies between the human model and the corresponding $\e^+$ were high. That can also be used to explain why {\sc alg4} failed to solve a subset of random CNF instances---highly inconsistent knowledge bases are considered as the most difficult instances for MaxSAT solvers.

In conclusion, the comparative analysis of {\sc alg2} and {\sc alg4} at $\hat{k}=200$ across varied problem instances shows some trends in performance and computational complexity. The observed increase in runtime with the increase of differences between agent and human models underscores the direct relationship between model disparity and the explanation search space size. Notably, {\sc alg4} consistently outperforms {\sc alg2} in terms of runtime across most scenarios, except in certain random CNF instances where the computation of most-probable worlds becomes a bottleneck due to the limitations of the WMaxSAT solver. This highlights a potential area for further optimization and development of more efficient solvers. Furthermore, the analysis of model reconciling explanation lengths reveals an expected increase in inconsistency measures as the model differences widen, particularly highlighted in random CNF instances.

\section{Related Work}
\label{sec:related}

We now provide a discussion of related work from the planning and knowledge representation and reasoning (KR) literature. We focus on these two areas as our approach is motivated by the model reconciliation problem introduced by the planning community and it bears some similarity to other logic-based approaches in KR.

\subsection{Related Planning Work}

We have briefly discussed the body of work within \emph{explainable AI planning} (XAIP) on the topic of model reconciliation in Section~\ref{sec:intro} and situated our work within that literature.\footnote{For a more complete discussion on XAIP related work, please see Section~9 of \citet{vasLMRP}.} For model reconciling explanations, a similar line of research that considers uncertainty about the human user's model is that of \citet{sreedharan2018handling}. In their work, they focus on scenarios where the human's model is located within a space of possible human models that the agent has, and their method operates in that space to find conformant explanations, i.e.,~explanations applicable to a set of possible models. In contrast, in our approach we represent an uncertain human model as a probability distribution that the agent has, and we have explicitly defined the notion of probabilistic explanation as well as metrics to measure its quality.

For monolithic explanations, most XAIP works have focused on contrastive explanations in deterministic settings. These explanations take the form of ``Why not A (instead of B)?'', where A is an alternative (or foil) suggested by the human to a decision B proposed by the agent. Contrastive explanations have found applications in linear temporal logic systems~\cite{kasenberg2020generating}, general epistemic accounts~\cite{belle2023counterfactual}, multi-agent optimization problems~\cite{zehtabi2024contrastive}, and in oversubscription planning~\cite{eifler2020new}. There have also been used to provide a taxonomy of user questions that often arise during interactive plan exploration~\cite{krarup2021contrastive}, as well as towards creating user interfaces for decision support systems~\cite{karthik2021radar,kumar2022vizxp}.

\subsection{Related KR Work}

In the monolithic case, our definition of a probabilistic explanation (Definition~\ref{def:pexpl}) may appear similar to what was proposed by \citet{gardenfors1988knowledge}.~However, an important distinction is that \citeauthor{gardenfors1988knowledge} is dealing with epistemic states that do not contain the explanandum, while we are dealing with belief bases that do contain the explanandum. We also define a different notion of explanatory power as well as present algorithms for computing explanations. \citet{urszulaprob} have also considered the problem of defining what constitutes an explanation in probabilistic systems, however they solely focus on epistemic states defined over causal structures.

The notion of (monolithic) explanation has also been explored by the \emph{probabilistic logic programming} (PLP) community~\cite{de2008probabilistic,fierens}, a formalism that extends logic programming languages with probabilities. In PLP, explanations have been associated with possible worlds.\footnote{Recall that a possible world is a truth value assignment to the atoms in the language.} The most prominent task there is that of the most probable explanation (MPE), which consists of finding the world with the highest probability given some evidence \cite{shterionov2015most}. However, a world does not show the chain of inferences of a given explanandum and it is not minimal by definition, since it usually includes a (possibly large) number of probabilistic facts whose truth value is irrelevant for the explanandum. An alternative approach is using the proof of an explanandum as an explanation \cite{kimmig2011implementation}, where a proof is a (minimal) partial world $\omega'$ such that for all worlds $\omega \supseteq \omega'$, the explanandum is true in $\omega$. In this case, one can easily ensure minimality, but even if the partial world contains no irrelevant facts, it is still not easy to determine the chain of inferences behind a given explanandum. Finally, \citet{renkens2014explanation} have leveraged explanations in PLP as approximation techniques for more efficiently computing weighted model counting problems.

It is important to mention that logic-based methods have also been employed in machine learning problems \cite{marques2022delivering}. For example, \citet{shrotri2022constraint} proposed CLIME, a constraint-driven explanation framework for black-box ML models that enables users to specify Boolean constraints to guide the perturbation phase when generating explanations. Unlike their approach that focuses on constraining the input space during perturbation, our approach integrates uncertainty directly into the explanation generation process through probabilistic logic. Both approaches use concepts from formal methods, but our work addresses the specific case of explanations in probabilistic scenarios and model reconciliation, providing a complementary perspective to CLIME's constraint-driven framework. A similar line of research is the work by \citet{izza2023computing}, where they introduced an approach to computing probabilistic explanations across various classifier types (such as decision trees and naive Bayes classifiers). Their framework addresses the challenge of generating minimal sets of features that guarantee a prediction with a certain probability threshold. While \citeauthor{izza2023computing} focus on efficiently computing explanations for specific classifier architectures, our framework provides a more general theoretical foundation for explanation generation in uncertain environments and extends to the model reconciliation problem.

In the model reconciliation case, we have extended our previous work on the logic-based model reconciliation problem \cite{vasLMRP} to handle scenarios where the human model is uncertain. To the best of our knowledge, the application of probabilistic explanations in the context of model reconciliation that we consider in this work is novel.

Finally, the algorithms presented in this paper are an extension of our previous work \cite{vas21}. Specifically, Algorithms~\ref{alg:expl} and~\ref{alg:lmrp} are inspired by a procedure for computing a \emph{smallest minimal unsatisfiable set} (SMUS) of an inconsistent formula, originally presented by \citet{ignatiev-plm15}. The method is also related to other similar approaches for enumerating MUSes and \emph{minimal correction sets} (MCSes). Moreover, our approach is similar in spirit to the HS-tree presented by \citet{rei87}. Although the original purpose was to enumerate diagnoses, Reiter's procedure can be easily adapted to enumerate MUSes (called conflicts in that paper) as already noted by \citet{previti-m13}. However, the computation of an SMUS might require more substantial modifications.  Procedures like the one presented by Reiter, which target MCSes (diagnoses) instead of MUSes (conflicts), can be seen as the dual version of our algorithm. In particular, the algorithm MaxHS~\cite{daviesB11} applies the same idea of iteratively computing and testing a minimal hitting set for the computation of a MaxSAT solution (the complement of the smallest MCSes). 

There are other approaches that exploit the duality between MUSes and MCSes, but instead of iteratively checking if the current hitting set is an MUS, they first compute the set of all MCSes~\cite{kas-jar08}. This has the potential advantage that once all the MCSes are known, every minimal hitting set on the collection of all MCSes is guaranteed to be an MUS (Proposition \ref{prop:duality}). However, as the number of MCSes is, in the worst case, exponential in the size of the formula, this approach might fail even before reporting the first MUS. This is particularly unnecessary when the target is to return a single support, like the one presented in this paper.

\section{Discussion}
\label{sec:discussion}

In this paper, we presented a logic-based framework for generating probabilistic explanations within uncertain knowledge bases (e.g.,~belief bases). We distinguished between two notions of explanation---\textit{monolithic}, where explanations are defined with respect to an agent knowledge base, and \textit{model reconciling}, where explanations are defined with respect to an agent knowledge base as well as a human user knowledge base. Particularly, we introduced formal definitions for probabilistic monolithic and probabilistic model reconciling explanations, along with metrics to evaluate their quality (explanatory gain and explanatory power). 

Our framework operates under several assumptions that we need to address. First, a fundamental assumption is that the problem domain can be encoded in a logical language. This assumption holds for many structured decision-making scenarios but may not apply to domains where knowledge is primarily subsymbolic or where the underlying decision processes are difficult to formalize in a logical language. The general effectiveness of our approach thus depends critically on how well the domain can be represented within a logical framework. Further, we used propositional (probabilistic) logic as our language of choice due to its simplicity, capacity to encode a plethora of domains, as well as computationally efficiency (e.g.,~through SAT solvers). While our methods and algorithms are presented within propositional logic, their underlying principles are broadly applicable to any constraint system where the satisfiability of subsets can be decided. This opens avenues for extending our work to other logical systems, such as Markov Logic Networks \cite{richardson2006markov} and Probabilistic Logic Programs~\cite{fierens}.


For model reconciling explanations specifically, our framework makes some assumptions about human inferential capabilities. Notably, we presume that human users possess the reasoning capacity to process and understand the explanations provided by the AI agent. However, this does not imply that humans can reason with the same efficiency as AI agents---much like how humans can perform arithmetic operations correctly but are significantly slower than calculators. Rather, we assume that given sufficient time and a properly formulated explanation, humans can validate the logical correctness of the AI agent's decision-making process.

Another big assumption in model reconciliation is that the agent has direct access to the human's belief base. This assumption, while necessary for the operationalizing our framework, is often unrealistic in practice. Thus, a promising direction for future research involves approximating the human model through iterative interactions. Indeed, in a recent work \cite{tang2025approximating}, we showed how (propositional) probabilistic logic can be effectively used to learn a human model based on past, dialogue-based interactions. This line of work provides a fruitful ground for our framework to create more personalized and effective model reconciliation processes.

At the other end of the spectrum, a practical challenge of explainable decision-making frameworks involves how the explanations are communicated to human users.\footnote{Note that, in certain settings, it has been shown that explanations in the form of model reconciliation are well understood and preferred by human users~\cite{8673193,zahedi2019towards,vasLMRP,kumar2022vizxp,vasileiouplease}.} While we tackled the problem of generating explanations with formal guarantees about their quality, we did not directly explore the presentation of these explanations to users. There are several avenues to explore here. First, the emergence of large language models (LLMs) \shortcite{NEURIPS2020_1457c0d6} may offer an immediate solution to this problem. Specifically, one can use LLMs as translation engines between logic and natural language, and thus create hybrid systems that generate natural language explanations with formal guarantees (provided by logical framework). For example, we presented a system for tackling this idea in the domain of coursework planning \cite{vasileiou2025traceKR,vasileiou2025trace}. In future work, we plan to extend it for general problems expressible in propositional logic. It is also worth emphasizing that explanations can also be communicated in other forms, such as visualizations, something that has been shown to improve the users' understanding when compared to text-based explanations alone \cite{kumar2022vizxp,karthik2021radar}.

Finally, it is important to acknowledge the resurgence of hybrid approaches that combine symbolic and neural-based methods~\shortcite{d2009neural,garcez2023neurosymbolic}. These \textit{neuro-symbolic} AI systems aim to synergize the strengths of both paradigms: the robust learning capabilities and pattern recognition of neural networks with the interpretability and reasoning power of symbolic systems. We position the methods proposed in this work as complementary to the advancements in neuro-symbolic AI, potentially informing and enhancing the explainability aspects of future neuro-symbolic systems.

\section{Concluding Remarks}
\label{sec:conclusion}
In this paper, we attempted to bridge the gap between classical explanation models and the inherent uncertainty found in real-world scenarios. We started by describing a framework for generating \textit{probabilistic monolithic explanations} within uncertain knowledge bases (e.g.,~belief bases), and introduced the concepts of \textit{explanatory gain} and \textit{explanatory power} as quantitative measures to evaluate the effectiveness and relevance of explanations, thus offering a better characterization of explanation quality. Additionally, we presented an extension to the model reconciliation problem for generating \textit{probabilistic model reconciling explanations}, which addresses the need for reconciling model differences between an agent and a human model, specifically when the human model is not known with certainty.

Furthermore, we developed algorithms that leverage the duality between minimal correction sets (MCSes) and minimal unsatisfiable subsets (MUSes) and demonstrated their potential for generating probabilistic explanations. Our experimental evaluations across different benchmarks underscore the effectiveness of these algorithms, suggesting they are promising in practical scenarios.

\section*{Acknowledgments}

We thank the anonymous reviewers, whose suggestions improved the quality of our paper. Stylianos Loukas Vasileiou and William Yeoh are partially supported by the National Science Foundation (NSF) under award 2232055. Son Tran is partially supported by NSF grants 1914635 and
2151254 as well as ExpandAI grant 2025-67022-44266. The views and conclusions contained  in  this  document  are  those  of  the  authors  and  should  not  be  interpreted  as representing the official policies, either expressed or implied, of the sponsoring organizations, agencies, or the United States government.

\appendix

\section{Illustrative Example: Simple Package Delivery Problem}

In Section~\ref{sec:example}, we presented a simple probabilistic planning scenario~\cite{littman1997probabilistic} that can be used by our framework. This problem can be fully specified in the form of the probabilistic planning domain definition language (PPDDL) \cite{younes2004ppddl1}, as shown in Listings~\ref{lst:domain} and~\ref{lst:problem}. Given the PPDDL descriptions, we can then encode this problem into (propositional) probabilistic logic \cite{littman1997probabilistic}, similarly to the encoding for classical planning problems (cf. \citet{kautz1996encoding}). Specifically, for $l,l' \in \{room1, room2\}$ and $c \in \{A, B\}$, we can encode the belief base: \\

\xhdr{Initial States:} (Starting states and belief about the environment)
\begin{align*}
&(\texttt{robot}\textrm{-}\texttt{at}(room1)_0, \infty) \\
&(\neg \texttt{robot}\textrm{-}\texttt{at}(room2)_0, \infty) \\
&(\neg \texttt{package}\textrm{-}\texttt{delivered}_0, \infty) \\
&(\texttt{crowded}(A), w_1)\\
&(\texttt{crowded}(B), w_2)
\end{align*}

\xhdr{Goal State:} (Final state to be reached)
\begin{align*}
&(\texttt{package}\textrm{-}\texttt{delivered}_{t_n}, \infty) \\
\end{align*}

\xhdr{Action Preconditions:} (Fluents that must be true at timestep $t$ for the action to be executed)
\begin{align*}
& ( \texttt{move}(l,l',c)_t \rightarrow \texttt{robot}\textrm{-}\texttt{at}(l)_t, \infty) \\
&( \texttt{deliver}(l)_t \rightarrow \texttt{robot}\textrm{-}\texttt{at}(l)_t, \infty)
\end{align*}

\xhdr{Deterministic Action Effects:} (Fluents that become true at timestep $t+1$ if the action executed at timestep $t$)
\begin{align*}
& ( \texttt{deliver}(l)_t \rightarrow \texttt{package}\textrm{-}\texttt{delivered}_{t+1}, \infty)
\end{align*}

\xhdr{Probabilistic Action Effects:} (Fluents that become true at timestep $t+1$ with a certain probability if the action executed at timestep $t$)
\begin{align*}
&(\texttt{move}(l,l',c)_t \wedge \texttt{crowded}(c) \rightarrow \texttt{robot}\textrm{-}\texttt{at}(l')_{t+1} \wedge \lnot \texttt{robot}\textrm{-}\texttt{at}(l)_{t+1}, w_3) \\
&(\texttt{move}(l,l',c)_t \wedge \texttt{crowded}(c) \rightarrow \texttt{robot}\textrm{-}\texttt{at}(l)_{t+1}, w_4) \\
&(\texttt{move}(l,l',c)_t \wedge \lnot \texttt{crowded}(c) \rightarrow \texttt{robot}\textrm{-}\texttt{at}(l')_{t+1} \wedge \lnot \texttt{robot}\textrm{-}\texttt{at}(l)_{t+1}, w_5) \\
&(\texttt{move}(l,l',c)_t \wedge \lnot \texttt{crowded}(c) \rightarrow \texttt{robot}\textrm{-}\texttt{at}(l)_{t+1}, w_6) 
\end{align*}

\xhdr{Explanatory Frame Axioms:} (Fluents  do  not  change  between subsequent timesteps $t$ and $t+1$ unless they are effects of actions that are executed at timestep $t$)
\begin{align*}
&(\texttt{robot}\textrm{-}\texttt{at}(l)_t \wedge \lnot \texttt{robot}\textrm{-}\texttt{at}(l)_{t+1} \rightarrow \texttt{move}(l,l',A)_t \vee \texttt{move}(room1,room2,B)_t, \infty)\\
&(\lnot \texttt{robot}\textrm{-}\texttt{at}(l)_t \wedge \texttt{robot}\textrm{-}\texttt{at}(l)_{t+1} \rightarrow \texttt{move}(l',l,A)_t \vee \texttt{move}(l',l,B)_t, \infty)\\
&(\texttt{package}\textrm{-}\texttt{delivered}_t \rightarrow \texttt{package}\textrm{-}\texttt{delivered}_{t+1}, \infty) \\
&(\lnot \texttt{package}\textrm{-}\texttt{delivered}_t \wedge \texttt{package}\textrm{-}\texttt{delivered}_{t+1} \rightarrow \texttt{deliver}(l)_t, \infty)
\end{align*}

\xhdr{Action exclusions:} (Only one action can occur at each timestep $t$)
\begin{align*}
&(\neg \texttt{move}(l,l',A)_t \vee \neg \texttt{move}(l,l',B)_t, \infty)\\
&(\neg \texttt{move}(l,l',c)_t \vee \neg \texttt{deliver}(l)_t, \infty)
\end{align*}

Finally, we can extract a plan by finding an assignment of truth values that satisfies the belief base (i.e.,~for all timesteps $t= 0,...,n-1$, there will be exactly one action $a$ such that $a_t=True$).

Note that a formula's weight represents the log odds comparing worlds where the formula is true versus where it is false, assuming all else remains unchanged. However, when formulae share variables with each other, it is impossible to flip one formula's truth value without affecting others. Despite that, we can still determine appropriate weights through collective formula probabilities. By treating desired probabilities as empirical frequencies for maximum likelihood estimation (MLE)~\cite{della2002inducing}, we can derive optimal weights. This approach involves specifying how often each formula should be true, using these as observed frequencies, and applying standard MLE algorithms (e.g.,~\cite{richardson2006markov}) to compute weights.

\begin{lstlisting}[
  float=!h,
  caption={PPDDL Domain of Office Robot Delivery.},
  label={lst:domain},
  language=PDDL]
(define (domain Office-Robot-Delivery)
      (:requirements :strips :typing :probabilistic-effects)
      (:types
        location corridor)
      (:predicates
        (robot-at ?l - location)
        (connected ?l1 ?l2 - location ?c - corridor)
        (package-delivered)
        (crowded ?c - corridor)
      )
      
      (:action move
        :parameters (?from ?to - location ?c - corridor)
        :precondition (and 
                        (robot-at ?from) 
                        (connected ?from ?to ?c))
        :effect (and 
                  (when (crowded ?c)
                    (probabilistic 
                      p (and (robot-at ?to) (not (robot-at ?from)))
                      1-p (and (robot-at ?from))))
                  (when (not (crowded ?c))
                    (probabilistic 
                      p' (and (robot-at ?to) (not (robot-at ?from)))
                      1-p' (and (robot-at ?from)))))
      )
      
      (:action deliver
        :parameters (?l - location)
        :precondition (robot-at ?l)
        :effect (package-delivered)
      )
)
\end{lstlisting}

\begin{lstlisting}[
  float=!h,
  caption={PPDDL Problem of Office Robot Delivery.},
  label={lst:problem},
  language=PDDL]
  (define (problem office-delivery)
      (:domain Office-Robot-Delivery)
      
      (:objects
        room1 room2 - location
        A B - corridor)
        
      (:init
        (robot-at room1)
        (connected room1 room2 A)
        (connected room1 room2 B)
        (probabilistic p (crowded A))
        (probabilistic p' (crowded B))
      )
      
      (:goal (package-delivered room2))
)
\end{lstlisting}

\clearpage

\bibliography{sample}
\bibliographystyle{theapa}

\end{document}